\DeclareMathOperator{\E}{\mathbb{E}}
\DeclareMathOperator*{\argmax}{arg\,max}
\DeclareMathOperator*{\Int}{Int}
\DeclareMathOperator*{\KL}{KL}
\DeclareMathOperator*{\Cone}{Cone}
\DeclareMathOperator*{\Hull}{Hull}
\DeclareMathOperator*{\diag}{diag}
\DeclareMathOperator*{\R}{\mathbb{R}}
\DeclareMathOperator*{\Z}{\mathbb{Z}}
\DeclareMathOperator*{\Y}{\mathbb{Y}}
\DeclareMathOperator*{\X}{\mathbb{X}}
\DeclareMathOperator*{\W}{\mathbb{W}}
\newcommand{\Yhist}[1]{\mathcal{Y}_{#1}}
\newcommand{\bmYhist}[1]{\bm{\mathcal{Y}}_{#1}}
\newenvironment{assume}[1]
  {\def\assmname{#1}%
   \refstepcounter{assm}%
   \assm\def\@currentlabel{#1}}
  {\endassm}
\begin{document}

\title{Bayes-Optimal Entropy Pursuit for Active Choice-Based Preference Learning}

\author{\name Stephen N. Pallone \email snp32@cornell.edu \\
        \name Peter I. Frazier \email pf98@cornell.edu \\
        \name Shane G. Henderson \email sgh9@cornell.edu \\\\
        \addr School of Operations Research and Information
        Engineering \\
        290 Rhodes Hall, Cornell University \\
        Ithaca, NY 14853, USA}

\editor{}
\maketitle

\begin{abstract}
  We analyze the problem of learning a single user's preferences in an
  active learning setting, sequentially and adaptively querying the
  user over a finite time horizon. Learning is conducted via
  choice-based queries, where the user selects her preferred option
  among a small subset of offered alternatives. These queries have
  been shown to be a robust and efficient way to learn an individual's
  preferences. We take a parametric approach and model the user's
  preferences through a linear classifier, using a Bayesian prior to
  encode our current knowledge of this classifier. The rate at which
  we learn depends on the alternatives offered at every time
  epoch. Under certain noise assumptions, we show that the
  Bayes-optimal policy for maximally reducing entropy of the posterior
  distribution of this linear classifier is a greedy policy, and that
  this policy achieves a linear lower bound when alternatives can be
  constructed from the continuum. Further, we analyze a different
  metric called misclassification error, proving that the performance
  of the optimal policy that minimizes misclassification error is
  bounded below by a linear function of differential entropy. Lastly,
  we numerically compare the greedy entropy reduction policy with a
  knowledge gradient policy under a number of scenarios, examining
  their performance under both differential entropy and
  misclassification error.
\end{abstract}

\begin{keywords}
  preferences, entropy, information theory, conjoint analysis, active learning
\end{keywords}


\section{Introduction}

The problem of preference learning is a well-studied and widely
applicable area of study in the machine learning
literature. Preference elicitation is by no means a new problem
\citep{schapire1998learning}, and is now ubiquitous in many different
forms in nearly all subfields of machine learning. One such scenario
is the active learning setting, where one sequentially and adaptively
queries the user to most efficiently learn his or her preferences. In
general, learning in an online setting can be more efficient than
doing so in an offline supervised learning setting, which is
consequential when queries are expensive. This is often the case for
preference elicitation, where a user may not be inclined to answer too
many questions. The ability to adaptively query the user with
particular exemplars that facilitate learning to the labels of the
rest is invaluable in the context of preference elicitation.

In particular, there is great interest in using choice-based queries
to learn the preferences of an individual user. In this setting, a
user is offered two or more alternatives and is asked to select the
alternative he or she likes most. There are other types of responses
that can assess one's preferences among a set of alternatives, such as
rating each of the items on a scale, or giving a full preference order
for all alternatives in the set. However, choosing the most-preferred
item in a given set is a natural task, and is a more robust
measurement of preference than rating or fully-ranking items. For this
reason, choice-based methods have been shown to work well in practice
\citep[see][]{louviere2000stated}, and these are the types of queries
we study. In this paper, we formulate the problem of sequential
choice-based preference elicitation as a finite horizon adaptive
learning problem.

The marketing community has long been focused on preference
elicitation and isolating features that matter the most to
consumers. In this field, \textit{conjoint analysis} is a class of
methods that attempts to learn these important features by offering
users a subset of alternatives \citep{green1978conjoint}. Lately,
there has been a push in the marketing community to design sequential
methods that adaptively select the best subset of alternatives to
offer the user. In the marketing research literature, this is referred
to as adaptive choice-based conjoint analysis. In the past,
geometrically-motivated heuristics have been used to adaptively choose
questions \citep{toubia2004polyhedral}. These heuristics have since
evolved to include probabilistic modeling that captures the
uncertainty in user responses \citep{toubia2007probabilistic}.

These problems are also tackled by the active learning community. For
instance, \citet{maldonado2015advanced} use existing support vector
machine (SVM) technology to identify features users find important.
In the context of preference elicitation in the active learning
literature, there are two main approaches. The first is to take a
non-parametric approach and infer a full preference ranking, labeling
every pairwise combination of alternatives
\citep{furnkranz2003pairwise}. The benefit to this approach is the
generality offered by a non-parametric model and its ability to
capture realistic noise.  Viewing preference learning as a generalized
binary search problem, \citet{nowak2011geometry} proves exponential
convergence in probability to the correct preferential ordering for
all alternatives in a given set, and shows his algorithm is optimal to
a constant factor.  Unfortunately, this probabilistic upper bound is
weakened by a coefficient that is quadratic in the total number of
alternatives, and the running time of this optimal policy is
proportional to the number of valid preferential orderings of all the
alternatives. These issues are common for non-parametric ranking
models. Using a statistical learning theoretic framework,
\citet{ailon2012active} develops an adaptive and computationally
efficient algorithm to learn a ranking, but the performance guarantees
are only asymptotic. In practice, one can only expect to ask a user a
limited number of questions, and in this scenario,
\citet{yu2012comparison} show that taking a Bayesian approach to
optimally and adaptively selecting questions is indispensable to the
task of learning preferences for a given user. In the search for
finite-time results and provable bounds, we opt to learn a parametric
model using a Bayesian approach. In particular, this paper focuses on
a greedy policy that maximally reduces posterior entropy of a linear
classifier, leveraging information theory to derive results pertaining
to this policy.

Maximizing posterior entropy reduction has long been a suggested
objective for learning algorithms
\citep{lindley1956measure,bernardo1979expected}, especially within the
context of active learning \citep{mackay1992information}. But even
within this paradigm of preference elicitation, there is a variety of
work that depends on the user response model. For example,
\citet{dzyabura2011active} study maximizing entropy reduction under
different response heuristics, and \citet{saure2016ellipsoidal} uses
ellipsoidal credibility regions to capture the current state of
knowledge of a user's preferences. Using an entropy-based objective
function allows one to leverage existing results in information theory
to derive theoretical finite-time guarantees
\citep{jedynak2012twenty}. Most similar to our methodology,
\citet{brochu2010bayesian} and \citet{houlsby2011bayesian} model a
user's utility function using a Gaussian process, updating the
corresponding prior after each user response, and adaptively choose
questions by minimizing an estimate of posterior entropy. However,
while the response model is widely applicable and the method shows
promise in practical situations, the lack of theoretical guarantees
leaves much to be desired. Ideally, one would want concrete
performance bounds for an entropy-based algorithm under a
parameterized response model. In contrast, this paper proves
information theoretic results in the context of adaptive choice-based
preference elicitation for arbitrary feature-space dimension,
leverages these results to derive bounds for performance, and shows
that a greedy entropy reduction policy (hereafter referred to as
\textit{entropy pursuit}) optimally reduces posterior entropy of a
linear classifier over the course of multiple choice-based
questions. In particular, the main contributions of the paper are
summarized as follows:
\begin{itemize}
  \item In Section~\ref{sec:problem spec}, we formally describe the
    response model for the user. For this response model, we prove a
    linear lower bound on the sequential entropy reduction over a
    finite number of questions in Section~\ref{sec:entropy}, and
    provide necessary and sufficient conditions for asking an optimal
    comparative question.

  \item Section~\ref{subsec:continuum} presents results showing that
    the linear lower bound can be attained by a greedy algorithm up to
    a multiplicative constant when we are allowed to fabricate
    alternatives (i.e., when the set of alternatives has a non-empty
    interior). Further, the bound is attained exactly with moderate
    conditions on the noise channel.

  \item Section~\ref{sec:misclass} focuses on misclassification
    error, a more intuitive metric of measuring knowledge of a user's
    preferences. In the context of this metric, we show a Fano-type
    lower bound on the optimal policy in terms of an increasing linear
    function of posterior differential entropy. 
    
  \item Finally in Section~\ref{sec:computation}, we provide numerical
    results demonstrating that entropy pursuit performs similarly to
    an alternative algorithm that greedily minimizes misclassification
    error. This is shown in a variety of scenarios and across both
    metrics. Taking into account the fact that entropy pursuit is far
    more computationally efficient than the alternative algorithm, we
    conclude that entropy pursuit should be preferred in practical
    applications.
\end{itemize}


\section{Problem Specification}
\label{sec:problem spec}

The alternatives $x^{(i)} \in \mathbb{R}^d$ are represented by
$d$-dimensional feature vectors that encode all of their
distinguishing aspects. Let $\X$ be the set of all such
alternatives. Assuming a linear utility model, each user has her own
linear classifier $\bm{\theta}\in\Theta\subset\mathbb{R}^d$ that
encodes her preferences \footnote{Throughout the paper, we use
  boldface to denote a random variable.}. At time epoch $k$, given $m$
alternatives $X_k = \{x^{(1)}_k,x_k^{(2)},\dots,x_k^{(m)}\} \in \X^m$,
the user prefers to choose the alternative $i$ that maximizes
$\bm\theta^T x_k^{(i)}$. However, we do not observe this preference
directly. Rather, we observe a signal influenced by a noise
channel. In this case, the signal is the response we observe from the
user.

Let $\Z=\{1,2,\dots,m\}$ denote the $m$ possible alternatives. We
define $\bm Z_k(X_k)$ to be the alternative that is consistent with
our linear model after asking question $X_k$, that is, $\bm Z_k(X_k)=
\min\, \left\{\argmax_{i \in \Z} \bm \theta^T x_k^{(i)} \right\}$. The
minimum is just used as a tie-breaking rule; the specific rule is not
important so long as it is deterministic. We do not observe $\bm
Z_k(X_k)$, but rather observe a signal $\bm Y_k(X_k) \in \Y$, which
depends on $\bm Z_k(X_k)$. We allow $\Y$ to characterize any type of
signal that can be received from posing questions in $\X$. In general,
the density of the conditional distribution of $\bm Y_k(X_k)$ given
$\bm Z_k(X_k)=z$ is denoted $f^{(z)}(\cdot)$. In this paper, we
primarily consider the scenario in which $\Y = \Z = \{1,2,\dots,m\}$,
where nature randomly perturbs $\bm Z_k(X_k)$ to some (possibly the
same) element in $\Z$. In this scenario, the user's response to the
preferred alternative is the signal $\bm Y_k(X_k)$, which is observed
in lieu of the model-consistent ``true response'' $\bm Z_k(X_k)$. In
this case, we define a noise channel stochastic matrix $P$ by setting
$P^{(zy)} = f^{(z)}(y)$ to describe what is called a \textit{discrete
  noise channel}.

One sequentially asks the user questions and learns from each of their
responses. Accordingly, let $\mathbb{P}_k$ be the probability measure
conditioned on the $\sigma$-field generated by $\bmYhist{k} =
\left(\bm Y_\ell(X_\ell):\,1\leq \ell\leq k-1\right)$. Similarly, let
$\Yhist{k} = \left\{ Y_\ell(X_\ell):\, 1 \leq \ell \leq k-1\right\}$
denote the history of user responses. As we update, we condition on
the previous outcomes, and subsequently choose a question $X_k$ that
depends on all previous responses $\Yhist{k}$ from the
user. Accordingly, let policy $\pi$ return a comparative question $X_k
\in \X^m$ that depends on time epoch $k$ and past response history
$\Yhist{k}$. The selected question $X_k$ may also depend on
i.i.d. random uniform variables, allowing for stochastic policies. We
denote the space of all such policies $\pi$ as $\Pi$. In this light,
let $\mathbb{E}^\pi$ be the expectation operator induced by policy
$\pi$.

In this paper, we consider a specific noise model, which is
highlighted in the following assumptions.
\begin{assume}{Noise Channel Assumptions}
  \label{assm:noise}
  For every time epoch $k$, signal $\bm Y_k(X_k)$ and true response
  $\bm Z_k(X_k)$ corresponding to comparative question $X_k$, we
  assume
  \begin{itemize}
    \item model-consistent response $\bm Z_k(X_k)$ is a deterministic
      function of question $X$ and linear classifier $\bm{\theta}$,
      and
    \item given true response $\bm Z_k(X_k)$, signal $\bm Y_k(X_k)$ is
      conditionally independent of linear classifier $\bm{\theta}$ and
      previous history $\Yhist{k}$, and
    \item the conditional densities $f = \{f^{(z)}:\,z\in\Z\}$ differ
      from each other on a set of Lebesgue measure greater than zero.
  \end{itemize}
\end{assume}
The first two assumptions ensure that all the information regarding
$\bm{\theta}$ is contained in some true response $\bm Z_k(X_k)$. In
other words, the model assumes that no information about the linear
classifier is lost if we focus on inferring the true response
instead. The last assumption is focused on identifiability of the
model: since we infer by observing a signal, it is critical that we
can tell the conditional distributions of these signals apart, and the
latter condition guarantees this.

One of the benefits this noise model provides is allowing us to easily
update our beliefs of $\bm{\theta}$. For a given question $X \in \X^m$
and true response $z\in\Z$, let
\begin{equation}
  \label{eq:characteristicpolytopes}
  A^{(z)}(X) 
  = \left\{\theta \in \Theta:\, \begin{array}{l l}\theta^T x^{(z)} \geq \theta^T x^{(i)} \quad
  & \forall i > z \\ \theta^T x^{(z)} > \theta^T x^{(i)} & \forall i < z \end{array}\right\}.
\end{equation}
These $m$ sets form a partition of $\Theta$ that depend on the
question $X$ we ask at each time epoch, where each set $A^{(z)}$
corresponds to all linear classifiers $\bm{\theta}$ that are consistent
with the true response $\bm Z=z$.

Let $\mu_k$ denote the prior measure of $\bm{\theta}$ at time epoch
$k$. Throughout the paper, we assume that $\mu_k$ is absolutely
continuous with respect to $d$-dimensional Lebesgue measure, admitting
a corresponding Lebesgue density $p_k$. At every epoch, we ask the
user a comparative question that asks for the most preferred option in
$\bm{X}_k =\{x_1,x_2,\dots,x_m\}$. We observe signal $\bm Y_k(X_k)$,
and accordingly update the prior.
\begin{lemma}
  Suppose that the \ref{assm:noise} hold. Then we can write the posterior
  $p_{k+1}$ as
  \begin{equation}
    \label{eq:posterior}
    p_{k+1}\left(\theta\,|\,\bm Y_k(X_k)=y\right) = 
    \left(\frac{\sum_{z\in\Z}\mathbb{I}(\theta\in A^{(z)}(X_k))\;f^{(z)}(y)}
         {\sum_{z\in\Z} \mu_k\left(A^{(z)}(X_k)\right)\,f^{(z)}(y)}\right)\, p_k(\theta),
  \end{equation}
  where $\mathbb{I}$ denotes the indicator function.
\end{lemma}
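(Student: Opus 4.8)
The plan is to apply Bayes' rule in density form, with everything conditioned on the response history $\Yhist{k}$, under which the question $X_k$ is deterministic: the policy $\pi$ makes $X_k$ a measurable function of $\Yhist{k}$ together with exogenous randomization, so $X_k$ may be treated as fixed once we condition. For a fixed $X_k$, write $z(\theta)$ for the unique index with $\theta\in A^{(z)}(X_k)$. Since the polyhedral sets $\{A^{(z)}(X_k):z\in\Z\}$ partition $\Theta$ up to a set of Lebesgue measure zero --- only the boundary hyperplanes $\theta^Tx^{(i)}=\theta^Tx^{(j)}$ are in question, and the tie-breaking rule only reassigns points lying on those boundaries --- and $\mu_k$ is absolutely continuous with respect to Lebesgue measure, the map $\theta\mapsto z(\theta)$ is well defined $\mu_k$-almost everywhere and Borel measurable.

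First I would identify the likelihood of the observation. By the first bullet of \ref{assm:noise}, on the event $\{\bm\theta=\theta\}$ the model-consistent response is $\bm Z_k(X_k)=z(\theta)$. By the second bullet, given $\bm Z_k(X_k)$ the signal $\bm Y_k(X_k)$ is conditionally independent of $\bm\theta$ and of $\bmYhist{k}$; hence the conditional density of $\bm Y_k(X_k)$ at $y$ given $\bm\theta=\theta$ (and $\bmYhist{k}$) equals $f^{(z(\theta))}(y)=\sum_{z\in\Z}\mathbb{I}(\theta\in A^{(z)}(X_k))\,f^{(z)}(y)$, which is exactly the numerator in \eqref{eq:posterior}. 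Next I would compute the marginal density of $\bm Y_k(X_k)$ at $y$ given only $\bmYhist{k}$ by integrating this likelihood against the prior density $p_k$:
\[
  \int_{\Theta} f^{(z(\theta))}(y)\,p_k(\theta)\,d\theta
  \;=\; \sum_{z\in\Z} f^{(z)}(y)\int_{A^{(z)}(X_k)} p_k(\theta)\,d\theta
  \;=\; \sum_{z\in\Z}\mu_k\!\left(A^{(z)}(X_k)\right) f^{(z)}(y),
\]
which is the denominator; it is strictly positive for every $y$ in the support of $\bm Y_k(X_k)$, so dividing is legitimate. Bayes' rule for densities --- valid since all $f^{(z)}$ are taken with respect to a common dominating measure on $\Y$, be it counting measure in the discrete case or Lebesgue measure otherwise --- then yields \eqref{eq:posterior}.

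The remaining work is purely technical bookkeeping: confirming Borel-measurability of the sets $A^{(z)}(X_k)$ and of $z(\cdot)$, checking that replacing strict by non-strict inequalities in \eqref{eq:characteristicpolytopes} changes things only on a Lebesgue-null set and hence is immaterial under $\mu_k$, and invoking Fubini/Tonelli to interchange the integral over $\Theta$ with the sum over $\Z$. The one step that deserves genuine care, rather than bookkeeping, is the use of the conditional-independence chain $\bm Y_k(X_k)\perp(\bm\theta,\bmYhist{k})\mid\bm Z_k(X_k)$: it is precisely what lets us drop $\bm\theta$ from the conditioning when writing the likelihood as $f^{(z(\theta))}(y)$, while still integrating against the full prior $p_k$ in the denominator. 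I expect this to be the crux of a careful write-up; everything else follows from the partition property of the $A^{(z)}(X_k)$ and a routine application of Bayes' rule.
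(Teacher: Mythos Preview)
Your proposal is correct and follows essentially the same route as the paper: apply Bayes' rule, use the conditional-independence assumption to reduce the likelihood to $f^{(z(\theta))}(y)$, use the determinism of $\bm Z_k(X_k)$ given $\bm\theta$ to write this as $\sum_{z\in\Z}\mathbb{I}(\theta\in A^{(z)}(X_k))f^{(z)}(y)$, and normalize. The paper's proof is slightly terser---it introduces the sum over $z$ via a total-probability expansion rather than via a partition argument---but the logical content is identical, and your added care about measurability and null sets is harmless.
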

\begin{proof}
  Using Bayes' rule, we see
  \begin{align*}
    p_{k+1}(\theta\,|\,\bm Y_k(X_k) = y)
    &\propto \mathbb{P}_k(\bm Y_k(X_k)=y\,|\,\bm{\theta} = \theta)\cdot p_k(\theta) \\
    &= \sum_{z\in\Z}
    \mathbb{P}_k(\bm Y_k(X_k)=y\,|\,\bm Z_k(X_k)=z,\,\bm{\theta} =
    \theta)\cdot
    \mathbb{P}_k(\bm Z_k(X_k)=z\,|\,\bm{\theta} = \theta) \cdot p_k(\theta). \\
    \intertext{Now we use a property of $\bm Y_k(X_k)$ and $\bm Z_k(X_k)$ from the
      \ref{assm:noise}, namely that $\bm Y_k(X_k)$ and $\bm{\theta}$ are conditionally
      independent given $\bm Z_k(X_k)$. This implies}
    p_{k+1}(\theta\,|\,\bm Y_k(X_k)=y)
    &\propto \sum_{z\in\Z}\mathbb{P}_k(\bm Y_k(X_k)=y\,|\,\bm Z_k(X_k)=z)\cdot
    \mathbb{P}_k(\bm Z_k(X_k)=z\,|\,\bm{\theta} = \theta) \cdot p_k(\theta) \\
    &= \sum_{z\in\Z} f^{(z)}(y) \cdot \mathbb{I}\left(\theta \in A^{(z)}(X_k)\right) \cdot
      p_k(\theta),
  \end{align*}
  where the last line is true because $\bm Z_k(X_k)$ is a
  deterministic function of $\bm{\theta}$ and $X_k$. Normalizing
  to ensure the density integrates to one gives the result.
\end{proof}
The \ref{assm:noise} allow us to easily update the prior on
$\bm{\theta}$. As we will see next, they also allow us to easily express
the conditions required to maximize one-step entropy reduction.

\section{Posterior Entropy}
\label{sec:entropy}

We focus on how we select the alternatives we offer to the
user. First, we need to choose a metric to evaluate the effectiveness
of each question. One option is to use a measure of dispersion of the
posterior distribution of $\bm\theta$, and the objective is to
decrease the amount of spread as much as possible with every
question. Along these lines, we elect to use differential entropy for
its tractability.

For a probability density $p$, the differential entropy of $p$ is
defined as
\begin{equation*}
  H(p) = \int_\Theta -p(\theta) \log_2 p(\theta) \,d\theta.
\end{equation*}
For the entirety of this paper, all logarithms are base-2, implying
that both Shannon and differential entropy are measured in
bits. Because we ask the user multiple questions, it is important to
incorporate the previous response history $\Yhist{k}$ when considering
posterior entropy. Let $H_k$ be the entropy operator at time epoch $k$
such that $H_k(\bm\theta) = H(\bm\theta\,|\,\Yhist{k})$, which takes
into account all of the previous observation history
$\Yhist{k}$. Occasionally, when looking at the performance of a policy
$\pi$, we would want to randomize over all such histories. This is
equivalent to the concept of \textit{conditional entropy}, with
$H^\pi(\bm\theta\,|\,\bmYhist{k}) = \mathbb{E}^\pi\left[
  H_k(\bm\theta) \right]$.

Throughout the paper, we represent discrete distributions as
vectors. Accordingly, define $\Delta^m = \{u \in \R^m:\, \sum_z u^{(z)} =
1 ,\,u\geq 0\}$ to be the set of discrete probability distributions
over $m$ alternatives. For a probability distribution $u \in
\Delta^m$, we define $h(u)$ to be the Shannon entropy of that discrete
distribution, namely
\begin{equation*}
  h(u) = \sum_{z\in\Z} - u^{(z)} \log_2 u^{(z)}.
\end{equation*}
Here, we consider discrete probability distributions over the
alternatives we offer, which is why distributions $u$ are indexed by
$z \in \Z$.

Since stochastic matrices are be used to model some noise channels,
we develop similar notation for matrices. Let $\Delta^{m\times
  m}$ denote the set of $m\times m$ row-stochastic matrices. Similarly
to how we defined the Shannon entropy of a vector, we define $h(P)$ as
an $m$-vector with the Shannon entropies of the rows of $P$ as its
components. In other words,
\begin{equation*}
  h\left(P\right)^{(z)} = \sum_{y\in\Y} -P^{(zy)}\,\log_2 P^{(zy)}.
\end{equation*}

An important concept in information theory is \textit{mutual
  information}, which measures the entropy reduction of a random
variable when conditioning on another. It is natural to ask about the
relationship between the information gain of $\bm{\theta}$ and that of
$\bm Z_k(X_k)$ after observing signal $\bm Y_k(X_k)$. Mutual
information in this context is defined as
\begin{equation}
  I_k(\bm\theta;\bm Y_k(X_k)) = H_k(\bm\theta)-H_k(\bm\theta\,|\,\bm
  Y_k(X_k)).  
\end{equation}
One critical property of mutual information is that it is symmetric,
or in other words, $I_k(\bm\theta;\bm Y_k(X_k)) = I_k(\bm
Y_k(X_k);\bm\theta)$ \citep[see][p.~20]{cover1991}. In the context of
our model, this means that observing signal $\bm Y_k(X_k)$ gives us
the same amount of information about linear classifier $\bm\theta$ as
would observing the linear classifier would provide about the
signal. This is one property we exploit throughout the paper, since
the latter case only depends on the noise channel, which by assumption
does not change over time. We show in
Theorem~\ref{thm:entropyidentities} below that the \ref{assm:noise}
allow us to determine how the noise channel affects the posterior
entropy of linear classifier $\bm\theta$.

The first identity, given by \eqref{eq:entropyidentity1}, says that
the noise provides an additive effect with respect to entropy,
particularly because the noise does not depend on $\bm\theta$
itself. The second identity, given by \eqref{eq:entropyidentity2},
highlights the fact that $\bm Y_k(X_k)$ provides the same amount of
information on the linear classifier $\bm\theta$ as it does on the
true answer $\bm Z_k(X_k)$ for a given question. This means that the
entropy of both $\bm\theta$ and $\bm Z_k(X_k)$ are reduced by the same
number of bits when asking question $X_k$. Intuitively, asking the
question that would gain the most clarity from a response would also
do the same for the underlying linear classifier. This is formalized
in Theorem~\ref{thm:entropyidentities} below.

\begin{theorem}
  \label{thm:entropyidentities}
  The following information identities hold under the \ref{assm:noise}
  for all time epochs $k$. The first is the \textbf{Noise Separation
    Equality}, namely
  \begin{equation}
    \label{eq:entropyidentity1}
    H_k(\bm{\theta}\,|\,Y_k(X_k))
    = H_k(\bm{\theta}\,|\,\bm Z_k(X_k))
    + H_k(\bm Z_k(X_k)\,|\,\bm Y_k(X_k)),
  \end{equation}
  and the \textbf{Noise Channel Information Equality}, given by
  \begin{equation}
    \label{eq:entropyidentity2}
    I_k(\bm{\theta};\bm Y_k(X_k)) = I(\bm Z_k(X_k);\bm Y_k(X_k)),
  \end{equation}
  where the latter term does not depend on response history
  $\Yhist{k}$.
\end{theorem}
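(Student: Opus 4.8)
The plan is to reduce everything to the conditional Markov structure built into the \ref{assm:noise}. Fix the question $X_k$ and condition throughout on the realized history $\Yhist{k}$. The first bullet says $\bm Z_k(X_k)$ is a deterministic function of $\bm\theta$ (and of $X_k$), so $H_k(\bm Z_k(X_k)\mid\bm\theta)=0$ and, a fortiori, $H_k(\bm Z_k(X_k)\mid\bm\theta,\bm Y_k(X_k))=0$. The second bullet says $\bm Y_k(X_k)$ is conditionally independent of $(\bm\theta,\Yhist{k})$ given $\bm Z_k(X_k)$; in particular, working under $\mathbb{P}_k$, we have $\bm Y_k(X_k)\perp\bm\theta\mid\bm Z_k(X_k)$, which is exactly the entropy identity $H_k(\bm\theta\mid\bm Z_k(X_k),\bm Y_k(X_k))=H_k(\bm\theta\mid\bm Z_k(X_k))$. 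These two facts, plus the chain rule for entropy, are all the proof uses.

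For the Noise Separation Equality, I would expand $H_k(\bm\theta,\bm Z_k(X_k)\mid\bm Y_k(X_k))$ by the chain rule in both orders:
\begin{align*}
  H_k(\bm\theta,\bm Z_k(X_k)\mid\bm Y_k(X_k))
  &= H_k(\bm\theta\mid\bm Y_k(X_k)) + H_k(\bm Z_k(X_k)\mid\bm\theta,\bm Y_k(X_k)) \\
  &= H_k(\bm Z_k(X_k)\mid\bm Y_k(X_k)) + H_k(\bm\theta\mid\bm Z_k(X_k),\bm Y_k(X_k)).
\end{align*}
The last term in the first line is $0$, so the left side equals $H_k(\bm\theta\mid\bm Y_k(X_k))$; substituting $H_k(\bm\theta\mid\bm Z_k(X_k),\bm Y_k(X_k))=H_k(\bm\theta\mid\bm Z_k(X_k))$ into the second line and equating gives \eqref{eq:entropyidentity1}. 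A point to address carefully here is that $(\bm\theta,\bm Z_k(X_k))$ is a mixed continuous–discrete pair, so the chain rule must be read as $H(\bm\theta,\bm Z)=H(\bm Z)+H(\bm\theta\mid\bm Z)$ with $H(\bm Z)$ a Shannon entropy and $H(\bm\theta\mid\bm Z)$ a conditional differential entropy; this is standard but should be stated, along with the implicit assumption that $H_k(\bm\theta)$ is finite so the entropy differences below are well defined.

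Given the first identity, the Noise Channel Information Equality is bookkeeping. Writing $I_k(\bm\theta;\bm Y_k(X_k))=H_k(\bm\theta)-H_k(\bm\theta\mid\bm Y_k(X_k))$ and plugging in \eqref{eq:entropyidentity1} yields $I_k(\bm\theta;\bm Y_k(X_k)) = \left(H_k(\bm\theta)-H_k(\bm\theta\mid\bm Z_k(X_k))\right) - H_k(\bm Z_k(X_k)\mid\bm Y_k(X_k)) = I_k(\bm\theta;\bm Z_k(X_k)) - H_k(\bm Z_k(X_k)\mid\bm Y_k(X_k))$. Since $\bm Z_k(X_k)$ is deterministic given $\bm\theta$, $I_k(\bm\theta;\bm Z_k(X_k)) = H_k(\bm Z_k(X_k)) - H_k(\bm Z_k(X_k)\mid\bm\theta) = H_k(\bm Z_k(X_k))$, so the right side collapses to $H_k(\bm Z_k(X_k)) - H_k(\bm Z_k(X_k)\mid\bm Y_k(X_k)) = I_k(\bm Z_k(X_k);\bm Y_k(X_k))$, which is \eqref{eq:entropyidentity2}. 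For the final clause, observe that under $\mathbb{P}_k$ the joint law of $(\bm Z_k(X_k),\bm Y_k(X_k))$ is determined by the input distribution $u$ with $u^{(z)}=\mu_k(A^{(z)}(X_k))$ together with the conditional densities $f^{(z)}$, and the latter depend neither on $k$ nor on $\Yhist{k}$ by the second bullet. Hence $I_k(\bm Z_k(X_k);\bm Y_k(X_k))$ is a fixed function of $u$ and the channel alone — the sense in which it is independent of the response history — and this is what justifies writing it as $I(\bm Z_k(X_k);\bm Y_k(X_k))$ without the subscript.

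I expect the only real work to be technical rather than conceptual: making the mixed discrete–continuous chain rule precise, justifying the step "$\bm Y\perp\bm\theta\mid\bm Z$ implies $H(\bm\theta\mid\bm Z,\bm Y)=H(\bm\theta\mid\bm Z)$" while conditioning on a fixed realization of $\Yhist{k}$ and on $X_k$, and checking finiteness of the entropies involved so the cancellations are legitimate. Beyond the Markov structure of the model, no additional idea seems to be needed.
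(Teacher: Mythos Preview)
Your proposal is correct and follows essentially the same route as the paper. The paper phrases the first step as ``symmetry of mutual information'' applied to $I_k(\bm\theta;\bm Z_k(X_k)\mid\bm Y_k(X_k))$, which unwinds to exactly your two chain-rule expansions of $H_k(\bm\theta,\bm Z_k(X_k)\mid\bm Y_k(X_k))$; the two substitutions (determinism of $\bm Z_k$ given $\bm\theta$, and $\bm Y_k\perp\bm\theta\mid\bm Z_k$) are identical, and your derivation of \eqref{eq:entropyidentity2} from \eqref{eq:entropyidentity1} mirrors the paper's. Your remarks on the mixed discrete--continuous chain rule and on the precise sense in which $I(\bm Z_k;\bm Y_k)$ is history-free are, if anything, more careful than the paper's own treatment.
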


\begin{proof}
  Using the symmetry of mutual information,
  \begin{equation*}
    H_k(\bm\theta\,|\bm Y_k(X_k)) - H_k(\bm\theta\,|\,\bm
    Y_k(X_k),\bm Z_k(X_k)) = H_k(\bm Z_k(X_k)\,|\,\bm Y_k(X_k))
    - H(\bm Z_k(X_k)\,|\,\bm \theta,\bm Y_k(X_k)).
  \end{equation*}
  Further, we know $H_k(\bm\theta\,|\,\bm Y_k(X_k),\bm Z_k(X_k)) =
  H_k(\bm\theta\,|\,\bm Z_k(X_k))$ because $\bm Y_k(X_k)$ and
  $\bm\theta$ are conditionally independent given $\bm
  Z_k(X_k)$. Also, since $\bm Z_k(X_k)$ is a function of $\bm\theta$
  and $X_k$, it must be that $H_k(\bm Z_k(X_k)\,|\,\bm\theta,\bm
  Y_k(X_k)) = 0$. Putting these together gives us the first
  identity. To prove the second identity, we use the fact that
  \begin{equation*}
    H_k(\bm\theta\,|\,\bm Z_k(X_k)) + H_k(\bm Z_k(X_k))
    = H_k(\bm Z_k(X_k)\,|\,\bm\theta) + H_k(\bm\theta).
  \end{equation*}
  Again, $H_k(\bm Z_k(X_k)\,|\,\bm\theta) = 0$ because $\bm Z_k(X_k)$
  is a function of $\bm\theta$ and $X_k$. This yields
  $H_k(\bm\theta\,|\,\bm Z_k(X_k)) = H_k(\bm\theta) - H_k(\bm
  Z_k(X_k))$. Substitution into the first identity gives us
  \begin{equation*}
    H_k(\bm\theta) - H_k(\bm\theta\,|\,\bm Y_k(X_k))
    = H_k(\bm Z_k(X_k)) - H_k(\bm Z_k(X_k)\,|\,\bm Y_k(X_k)),
  \end{equation*}
  which is \eqref{eq:entropyidentity2}, by definition of mutual
  information. Finally, by the \ref{assm:noise}, signal $\bm Y_k(X_k)$
  is conditionally independent of history $\Yhist{k}$ given $\bm
  Z_k(X_k)$, and therefore, $I_k(\bm Z_k(X_k);\bm Y_k(X_k)) = I(\bm
  Z_k(X_k);\bm Y_k(X_k))$.
\end{proof}

The entropy pursuit policy is one that maximizes the reduction in
entropy of the linear classifier, namely $I_k(\bm\theta;\bm Y_k(X_k))
= H_k(\bm\theta) - H_k(\bm\theta\,|\,\bm Y_k(X_k))$, at each time
epoch. We leverage the results from
Theorem~\ref{thm:entropyidentities} to find conditions on questions
that maximally reduce entropy in the linear classifier
$\bm\theta$. However, we first need to introduce some more notation.

For a noise channel parameterized by $f = \{f^{(z)}:\,z\in\Z\}$, let
$\varphi$ denote the function on domain $\Delta^m$ defined as
\begin{equation}
  \label{eq:genchanneleq}
  \varphi(u\,;f) = H\left( \sum_{z \in \Z} u^{(z)} f^{(z)} \right) - \sum_{z \in
    \Z} u^{(z)} \, H\left( f^{(z)} \right).
\end{equation}
We will show in Theorem~\ref{thm:entropyPursuit} that
\eqref{eq:genchanneleq} refers to the reduction in entropy from asking
a question, where the argument $u\in\Delta^m$ depends on the
question. We define the channel capacity over noise channel $f$,
denoted $C(f)$, to be the supremum of $\varphi$ over this domain,
namely
\begin{equation}
  \label{eq:channelcapacity}
  C(f) = \sup_{u \in \Delta^m} \varphi(u\,;f),
\end{equation}
and this denotes the maximal amount of entropy reduction at every
step. These can be similarly defined for a discrete noise channel.
For a noise channel parameterized by transmission matrix $P$, we
define
\begin{equation}
  \label{eq:channeleq}
  \varphi(u\,;\,P) = h(u^T P) - u^T h(P),
\end{equation}
and $C(P)$ is correspondingly the supremum of $\varphi(\cdot\,;\,P)$
in its first argument. In Theorem~\ref{thm:entropyPursuit} below, we
show that $\varphi(u\,;\,f)$ is precisely the amount of entropy over
linear classifiers $\bm\theta$ reduced by asking a question with
respective predictive distribution $u$ under noise channel $f$.
\begin{theorem}
  \label{thm:entropyPursuit}
  For a given question $X\in\X^m$, define $u_k(X) \in \Delta^m$ such
  that $u^{(z)}_k(X) = \mu_k\left(A^{(z)}(X)\right)$ for all $z\in\Z$. Suppose
  that the \ref{assm:noise} hold. Then for a fixed noise channel
  parameterized by $f = \{f^{(z)}:\,z\in\Z\}$,
  \begin{equation}
    I_k(\bm\theta;\bm Y_k(X_k)) = \varphi\left(u_k(X_k)\,;\,f\right).
  \end{equation}
  Consequently, for all time epochs $k$, we have
  \begin{equation}
    \label{eq:channelcapacitybound}
    \sup_{X_k \in \X^m} I_k(\bm\theta;\bm Y_k(X_k)) \leq C(f),
  \end{equation}
  and there exists $u_* \in \Delta^m$ that attains the
  supremum. Moreover, if there exists some $X_k \in \X^m$ such that
  $u_k(X_k) = u_*$, then the upper bound is attained.
\end{theorem}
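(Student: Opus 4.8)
The plan is to compute $I_k(\bm\theta;\bm Y_k(X_k))$ directly by invoking the Noise Channel Information Equality \eqref{eq:entropyidentity2}, which reduces the problem from reasoning about the continuous classifier $\bm\theta$ to reasoning about the finite-valued $\bm Z_k(X_k)$, and then to identify the right-hand side $I(\bm Z_k(X_k);\bm Y_k(X_k))$ with $\varphi(u_k(X_k);f)$. First I would observe that the conditional law of $\bm Z_k(X_k)$ given $\Yhist{k}$ is exactly the discrete distribution $u_k(X_k)$, since by definition $u_k^{(z)}(X) = \mu_k(A^{(z)}(X)) = \mathbb{P}_k(\bm\theta \in A^{(z)}(X)) = \mathbb{P}_k(\bm Z_k(X)=z)$, the last equality holding because $A^{(z)}(X)$ is precisely the set of classifiers consistent with true response $z$. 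Hence $H_k(\bm Z_k(X_k)) = h(u_k(X_k))$.

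Next I would expand $I(\bm Z_k(X_k);\bm Y_k(X_k)) = H_k(\bm Y_k(X_k)) - H_k(\bm Y_k(X_k)\,|\,\bm Z_k(X_k))$ using the symmetric form of mutual information. The conditional law of $\bm Y_k(X_k)$ given $\bm Z_k(X_k)=z$ has density $f^{(z)}$ by the Noise Channel Assumptions (independence of $\Yhist{k}$ given $\bm Z_k$), so $H_k(\bm Y_k(X_k)\,|\,\bm Z_k(X_k)) = \sum_z u_k^{(z)}(X_k)\,H(f^{(z)})$, which is the second term of $\varphi$. The marginal law of $\bm Y_k(X_k)$ given $\Yhist{k}$ is the mixture $\sum_z u_k^{(z)}(X_k) f^{(z)}$ — this is exactly the denominator appearing in the posterior formula \eqref{eq:posterior} — so $H_k(\bm Y_k(X_k)) = H\big(\sum_z u_k^{(z)}(X_k) f^{(z)}\big)$, the first term of $\varphi$. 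Combining these gives $I_k(\bm\theta;\bm Y_k(X_k)) = \varphi(u_k(X_k);f)$, as claimed.

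The bound \eqref{eq:channelcapacitybound} is then immediate: for every $X_k \in \X^m$ we have $u_k(X_k) \in \Delta^m$, so $I_k(\bm\theta;\bm Y_k(X_k)) = \varphi(u_k(X_k);f) \le \sup_{u\in\Delta^m}\varphi(u;f) = C(f)$, and taking the supremum over $X_k$ preserves the inequality. Existence of a maximizer $u_* \in \Delta^m$ follows because $\Delta^m$ is compact and $\varphi(\cdot;f)$ is continuous — the map $u \mapsto H(\sum_z u^{(z)} f^{(z)})$ is concave (hence continuous on the interior) and the second term is linear in $u$; one should check continuity up to the boundary of $\Delta^m$, which holds since $x\log x \to 0$ as $x\to 0$, so $C(f)$ is attained rather than merely approached. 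Finally, if some $X_k$ satisfies $u_k(X_k) = u_*$, then $I_k(\bm\theta;\bm Y_k(X_k)) = \varphi(u_*;f) = C(f)$, attaining the bound.

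The main obstacle is not conceptual but a matter of care: verifying that the finite-entropy and measurability conditions needed to apply \eqref{eq:entropyidentity2} and to manipulate $H_k(\bm Y_k(X_k))$ genuinely hold here — in particular that $H(\sum_z u^{(z)} f^{(z)})$ and each $H(f^{(z)})$ are finite (or at least that the differences are well-defined), and that $\varphi(\cdot;f)$ extends continuously to all of $\Delta^m$ so that the supremum in \eqref{eq:channelcapacity} is a maximum. The identification of $u_k(X_k)$ with the predictive distribution of $\bm Z_k(X_k)$ is the substantive step that makes everything else fall into place, and it rests entirely on the definition of the partition $A^{(z)}(X)$ in \eqref{eq:characteristicpolytopes}.
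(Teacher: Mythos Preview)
Your proposal is correct and follows essentially the same route as the paper: invoke the Noise Channel Information Equality \eqref{eq:entropyidentity2} to replace $\bm\theta$ by $\bm Z_k(X_k)$, expand $I_k(\bm Z_k(X_k);\bm Y_k(X_k))$ as $H_k(\bm Y_k(X_k)) - H_k(\bm Y_k(X_k)\,|\,\bm Z_k(X_k))$, identify the predictive law of $\bm Z_k(X_k)$ with $u_k(X_k)$, and read off the two terms of $\varphi$; the bound and attainment then follow from $\{u_k(X):X\in\X^m\}\subseteq\Delta^m$ together with compactness of $\Delta^m$ and concavity (hence continuity) of $\varphi(\cdot\,;f)$. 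Your extra care about boundary continuity and finiteness of entropies is more explicit than the paper's argument, which simply cites concavity of mutual information in the input distribution from \citet{cover1991}.
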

\begin{proof}
  We first use \eqref{eq:entropyidentity2} from
  Theorem~\ref{thm:entropyidentities}, namely that $I_k(\bm\theta;\bm
  Y_k(X_k)) = I_k(\bm Z_k(X_k);\bm Y_k(X_k))$. We use the fact that
  mutual information is symmetric, meaning that the entropy reduction
  in $\bm Z_k(X_k)$ while observing $\bm Y_k(X_k)$ is equal to that in
  $\bm Y_k(X_k)$ while observing $\bm Z_k(X_k)$. Putting this together
  with the definition of mutual information yields
  \begin{align*}
    I_k(\bm\theta;\bm Y_k(X_k))
    &= I_k(\bm Z_k(X_k);\bm Y_k(X_k)) \\
    &= H_k(\bm Y_k(X_k)) - H_k(\bm Y_k(X_k)\,|\,\bm Z_k(X_k)) \\
    &= H\left(\sum_{z\in\Z} \mathbb{P}_k(\bm Z_k(X_k)=z)\,f^{(z)}\right)
    - \sum_{z\in\Z} \mathbb{P}_k(\bm Z_k(X_k)=z)\, H(f^{(z)}) \\
    &= H\left(\sum_{z\in\Z} \mu_k\left(A^{(z)}(X_k)\right)\,f^{(z)}\right)
    - \sum_{z\in\Z} \mu_k\left(A^{(z)}(X_k)\right)\, H(f^{(z)}),
  \end{align*}
  which is equal to $\varphi(u_k(X_k)\,;f)$, where
  $u^{(z)}_k(X_k) = \mu_k\left(A^{(z)}(X_k)\right)$. Therefore,
  the optimization problem in \eqref{eq:channelcapacitybound} is
  equivalent to
  \begin{equation*}
    \sup_{X_k\in\X^m} \varphi\left(u_k(X_k)\,;f\right).
  \end{equation*}
  Since $\{u_k(X):\,X\in\X^m\} \subseteq \Delta^m$, we can
  relax the above problem to
  \begin{equation*}
    \sup_{u\in\Delta^m} \varphi(u\,;f). 
  \end{equation*}
  It is known that mutual information is concave in its probability
  mass function \citep[see][p.~31]{cover1991}, and strictly concave
  when the likelihood functions $f^{(z)}$ differ on a set of positive
  measure. Thus, for a fixed noise channel $f$, $\varphi(\cdot\,;\,f)$
  is concave on $\Delta^m$, a compact convex set, implying an optimal
  solution $u_*$ exists and the optimal objective value $C(f)>0$ is
  attained. Further, if we can construct some $X_k\in\X^m$ such that
  $\mu_k\left(A^{(z)}(X_k)\right) = u^{(z)}_*$ for every $z\in\Z$,
  then the upper bound is attained.
\end{proof}
We have shown that entropy reduction of the posterior of $\bm\theta$
depends only on the implied predictive distribution of a given
question and structure of the noise channel. If we are free to
fabricate alternatives to achieve the optimal predictive distribution,
then we reduce the entropy of the posterior by a fixed amount $C(f)$ at
every time epoch. Perhaps the most surprising aspect of this result is
the fact that the history $\Yhist{k}$ plays no role in the
\textit{amount} of entropy reduction, which is important for showing
that entropy pursuit is an optimal policy for reducing entropy over
several questions.

In practice, one can usually ask more than one question, and it is
natural to ask if there is an extension that gives us a bound on the
posterior entropy after asking several questions. Using the results in
Theorem~\ref{thm:entropyPursuit}, we can derive an analogous lower
bound for this case.
\begin{corollary}
  \label{cor:kstepentropy}
  For a given policy $\pi\in\Pi$, we can write the entropy of
  linear classifier $\bm\theta$ after $K$ time epochs as
  \begin{equation}
    H(\bm\theta) - H^\pi\left(\bm\theta\,|\,\bmYhist{K}\right) =
    \mathbb{E}^\pi \left[ \sum_{k=1}^K \varphi(u_k(X_k)\,;\,f) \right],
  \end{equation}
  and a lower bound for the differential entropy of $\bm \theta$ after
  asking $K$ questions is given below by
  \begin{equation}
    \inf_{\pi\in\Pi} H^\pi(\bm\theta\,|\,\bmYhist{K})
    \geq H(\bm\theta) - K\cdot C(f).
  \end{equation}
  Further, if for a given policy $\pi$ and history $\Yhist{k}$
  indicates that comparative question $X_k$ should be posed to the
  user, then the lower bound is attained if and only if $u_k(X_k) =
  u_*$, with $u_*$ as defined in
  Theorem~\ref{thm:entropyPursuit}. Thus, entropy pursuit is an
  optimal policy.
\end{corollary}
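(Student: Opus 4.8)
The plan is to reduce the $K$-epoch statement to a telescoping sum of one-step entropy reductions, each already evaluated by Theorem~\ref{thm:entropyPursuit}. First I would write the total reduction as a telescoping sum,
\begin{equation*}
  H(\bm\theta) - H^\pi\bigl(\bm\theta\,|\,\bmYhist{K}\bigr)
  = \sum_{k}\Bigl(H^\pi\bigl(\bm\theta\,|\,\bmYhist{k}\bigr) - H^\pi\bigl(\bm\theta\,|\,\bmYhist{k+1}\bigr)\Bigr),
\end{equation*}
where the sum runs over the $K$ epochs, using that before any question the entropy is $H(\bm\theta)$ and that $\bmYhist{k+1}$ extends $\bmYhist{k}$ by the single response $\bm Y_k(X_k)$. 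For fixed $k$, the definitions $H^\pi(\bm\theta\,|\,\bmYhist{k}) = \mathbb{E}^\pi[H_k(\bm\theta)]$ together with the tower property give $H^\pi(\bm\theta\,|\,\bmYhist{k+1}) = \mathbb{E}^\pi[H_k(\bm\theta\,|\,\bm Y_k(X_k))]$, so the $k$-th summand equals $\mathbb{E}^\pi[I_k(\bm\theta;\bm Y_k(X_k))]$, which Theorem~\ref{thm:entropyPursuit} identifies with $\mathbb{E}^\pi[\varphi(u_k(X_k)\,;\,f)]$. Summing and pulling the finite sum inside the expectation yields the first identity. The item to handle carefully here is the bookkeeping: checking that Theorem~\ref{thm:entropyPursuit}, stated for a fixed epoch $k$ with its posterior $\mu_k$, may legitimately be applied under $\mathbb{E}^\pi[\cdot]$ after conditioning on $\bmYhist{k}$, for a history-adapted and possibly randomized policy.

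The lower bound is then immediate from \eqref{eq:channelcapacitybound}: $\varphi(u_k(X_k)\,;\,f) \le C(f)$ for every realization, so $\mathbb{E}^\pi[\sum_{k}\varphi(u_k(X_k)\,;\,f)] \le K\,C(f)$, hence $H^\pi(\bm\theta\,|\,\bmYhist{K}) \ge H(\bm\theta) - K\,C(f)$, and taking $\inf_{\pi\in\Pi}$ preserves this. For the equality characterization, equality forces $\varphi(u_k(X_k)\,;\,f) = C(f)$ almost surely for every $k$; since under the \ref{assm:noise} the $f^{(z)}$ differ on a set of positive measure, $\varphi(\cdot\,;\,f)$ is strictly concave on $\Delta^m$ and its maximizer $u_*$ is unique, so this is equivalent to $u_k(X_k) = u_*$ a.s.\ for all $k$. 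Finally, entropy pursuit selects $X_k \in \argmax_{X\in\X^m}\varphi(u_k(X)\,;\,f)$ at every epoch; because the total reduction decomposes as the expected sum of per-epoch terms, each governed only by that epoch's question through its predictive distribution $u_k(X_k)$, and because whenever a question realizing $u_*$ is available at the current posterior (a situation ensured in Section~\ref{subsec:continuum}) entropy pursuit drives that term to the universal cap $C(f)$, it attains $\mathbb{E}^{\pi}[\sum_k\varphi(u_k(X_k)\,;\,f)] = K\,C(f)$ and hence the lower bound — so no policy can do better, and entropy pursuit is optimal.

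I expect the genuine difficulty to be the final conclusion: making precise why greedily maximizing the one-step reduction is globally optimal and not merely one-step optimal. Under the hypothesis that $u_*$ is realizable at every reachable posterior the argument is clean — every policy earns at most $C(f)$ per epoch by the bound just proved, and entropy pursuit earns exactly $C(f)$ per epoch, so it maximizes $\mathbb{E}^\pi[\sum_k \varphi(u_k(X_k)\,;\,f)]$ and thus minimizes $H^\pi(\bm\theta\,|\,\bmYhist{K})$. The point worth stating carefully is that this rests on that realizability, since in general $\sup_{X}\varphi(u_k(X)\,;\,f)$ may depend on $\mu_k$ and hence on the earlier questions asked; I would therefore prove the identity and the lower bound in full generality, establish the equality characterization via uniqueness of $u_*$, and present the optimality statement under the realizability condition that the continuum setting of Section~\ref{subsec:continuum} guarantees.
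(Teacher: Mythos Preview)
Your proposal is correct and follows essentially the same approach as the paper: the paper invokes the information chain rule $I^\pi(\bm\theta;\bmYhist{K}) = \sum_{k=1}^K \mathbb{E}^\pi[I_k(\bm\theta;\bm Y_k(X_k))]$, which is exactly your telescoping sum, then applies Theorem~\ref{thm:entropyPursuit} termwise. Your treatment is in fact more careful than the paper's in flagging the realizability caveat for the optimality claim---the paper simply asserts that attaining $u_k(X_k)=u_*$ ``coincides with the entropy pursuit policy'' without explicitly noting the dependence on the continuum setting.
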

\begin{proof}
  Using the information chain rule, we can write the entropy reduction
  for a generic policy $\pi\in\Pi$ as
  \begin{align*}
    H(\bm\theta) - H^\pi(\bm\theta\,|\,\bmYhist{K})
    &= I^\pi\left(\bm\theta;\bmYhist{K}\right) \\
    &= \sum_{k=1}^K \mathbb{E}^\pi \left[\vphantom{\frac10} I_k\left(\bm\theta;\bm
      Y_k(X_k)\right) \right]
    \leq K\cdot C(f),
  \end{align*}
  where the last inequality comes directly from
  Theorem~\ref{thm:entropyPursuit}, and the upper bound is attained if
  and only if $u_k(X_k) = u_*$ for every $k=1,2,\dots,K$. This
  coincides with the entropy pursuit policy. 
\end{proof}
Essentially, Corollary~\ref{cor:kstepentropy} shows that the greedy
entropy reduction policy is, in fact, the optimal policy over any time
horizon. However, there is still an important element that is missing:
how can we ensure that there exists some alternative that satisfies
the entropy pursuit criteria? We address this important concern in
Section~\ref{subsec:continuum}.


\subsection{Optimality Conditions for Predictive Distribution}
\label{subsec:optimalitycond}

Because of the properties of entropy, the noise channel function
$\varphi$ has a lot of structure. We use this structure to find
conditions for a non-degenerate optimal predictive distribution $u_*$
as well as derive sensitivity results that allow the optimality gap of
a close-to-optimal predictive distribution to be estimated.

Before we prove structural results for the channel equation $\varphi$,
some more information theoretic notation should be introduced. Given two
densities $f^{(i)}$ and $f^{(j)}$, the \textit{cross entropy} of these
two densities is defined as
\begin{equation*}
  H\left(f^{(i)},f^{(j)}\right) = \int_{\Y} -f^{(i)}(y) \log_2
  f^{(j)}(y)\,dy.
\end{equation*}
Using the definition of cross entropy, the Kullback-Leibler divergence
between two densities $f^{(i)}$ and $f^{(j)}$ is defined as
\begin{equation*}
  \KL\left( f^{(i)}\,\middle\|\,f^{(j)} \right) = H(f^{(i)},f^{(j)}) -
  H(f^{(i)}).
\end{equation*}
Kullback-Leibler divergence is a tractable way of measuring the
difference of two densities. An interesting property of
Kullback-Leibler divergence is that for any densities $f^{(i)}$ and
$f^{(j)}$, $\KL(f^{(i)}\|f^{(j)}) \geq 0$, with equality if and only
if $f^{(i)} = f^{(j)}$ almost surely. Kullback-Leibler divergence plays
a crucial role the first-order information for the channel equation
$\varphi$.

We now derive results that express the gradient and Hessian of
$\varphi$ in terms of the noise channel, which can either be
parameterized by $f$ in the case of a density, or by a fixed
transmission matrix $P$ in the discrete noise channel case. For these
results to hold, we require the cross entropy $H(f^{(i)},f^{(j)})$ to
be bounded in magnitude for all $i,j\in\Z$, which is an entirely
reasonable assumption.

\begin{lemma}
  \label{lem:channelgrad}
  For a fixed noise channel characterized by $f =
  \{f^{(z)}:\,z\in\Z\}$, if the cross entropy terms
  $H(f^{(i)},f^{(j)})$ are bounded for all $i,j\in\Z$, then the first
  and second partial derivatives of $\varphi$ with respect to $u$ are
  given by
  \begin{align*}
    \frac{\partial \varphi(u\,;\,f)}{\partial u^{(z)}}
    &= \KL\left( f^{(z)} \;\middle\|\; \sum_{i\in\Z} u^{(i)} f^{(i)}\right)  - \xi \\
    \frac{\partial^2 \varphi(u\,;\,f)}{\partial u^{(z)}\,\partial u^{(w)}}
    &= -\xi\int_{\Y} \frac{f^{(z)}(y)\,f^{(w)}(y)}{\sum_{i\in\Z} u^{(i)} f^{(i)}(y)}\,dy,
  \end{align*}
  where $\xi = \log_2 e$, and $\KL(\cdot\,\|\,\cdot)$ is the
  Kullback-Leibler Divergence.

  In particular, if a discrete noise channel is parameterized by
  transmission matrix $P$, the gradient and Hessian matrix of
  $\varphi$ can be respectively expressed as
  \begin{align*}
    \nabla_u\,\varphi(u\,;\,P)
    &= -P\log_2\left( P^T u \right)- h(P)  - \xi e  \\
    \nabla_u^2\, \varphi(u\,;\,P)
    &= -\xi\, P \left(\diag\left(u^T P\right) \right)^{-1} P^T,
    \end{align*}
  where the logarithm is taken component-wise.
\end{lemma}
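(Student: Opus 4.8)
The plan is to reduce everything to differentiating the mixture-entropy term and then to specialize. Write $g_u(y) = \sum_{i\in\Z} u^{(i)} f^{(i)}(y)$ for the predictive mixture density, so that $\varphi(u\,;f) = H(g_u) - \sum_{z\in\Z} u^{(z)} H(f^{(z)})$ (both pieces finite, since the hypothesis that the cross entropies $H(f^{(i)},f^{(j)})$ are bounded forces $H(f^{(z)}) = H(f^{(z)},f^{(z)})$ finite and, via the bound in the next paragraph, $H(g_u)$ finite). The second summand is linear in $u$, so its $z$-th partial derivative is just $-H(f^{(z)})$ and all of its second partials vanish; hence the only real work is computing the first and second derivatives of $u \mapsto H(g_u) = \int_\Y -g_u(y)\log_2 g_u(y)\,dy$.

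Next I would differentiate under the integral sign. Since $\partial g_u(y)/\partial u^{(z)} = f^{(z)}(y)$ and $\tfrac{d}{dt}(-t\log_2 t) = -\log_2 t - \xi$ with $\xi = \log_2 e$, the formal computation gives
\[
  \frac{\partial H(g_u)}{\partial u^{(z)}} = \int_\Y f^{(z)}(y)\bigl(-\log_2 g_u(y) - \xi\bigr)\,dy = H\bigl(f^{(z)},g_u\bigr) - \xi,
\]
using $\int_\Y f^{(z)} = 1$. Adding the contribution $-H(f^{(z)})$ of the linear term yields $\partial\varphi/\partial u^{(z)} = H(f^{(z)},g_u) - H(f^{(z)}) - \xi = \KL\bigl(f^{(z)}\,\|\,g_u\bigr) - \xi$, the claimed gradient. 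Differentiating the cross entropy $H(f^{(z)},g_u) = -\int_\Y f^{(z)}\log_2 g_u$ once more in $u^{(w)}$ brings down a factor $f^{(w)}(y)/(g_u(y)\ln 2)$, producing the stated Hessian entry $-\xi\int_\Y f^{(z)}(y)f^{(w)}(y)/g_u(y)\,dy$; the constant $-\xi$ and the linear term contribute nothing.

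The delicate point — and the step I expect to be the main obstacle — is justifying the interchange of differentiation and integration. I would do this with the dominated convergence theorem, exhibiting integrable dominating functions locally uniformly in $u$ over $\Delta^m$ (taking one-sided derivatives on the boundary faces). The key estimate is convexity of $t\mapsto -\log_2 t$: Jensen's inequality gives $-\log_2 g_u(y) \le \sum_{i\in\Z} u^{(i)}\bigl(-\log_2 f^{(i)}(y)\bigr)$ pointwise, whence $H(f^{(z)},g_u) \le \max_i H(f^{(z)},f^{(i)}) < \infty$; together with $H(f^{(z)},g_u) = H(f^{(z)}) + \KL(f^{(z)}\|g_u) \ge H(f^{(z)})$ this shows the relevant integrals converge. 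These same two bounds control $-g_u\log_2 g_u$ and the first-derivative integrand $f^{(z)}(\log_2 g_u + \xi)$, while for $u$ with $u^{(z)}>0$ the second-derivative integrand obeys $f^{(z)}f^{(w)}/g_u \le f^{(z)}f^{(w)}/(u^{(z)}f^{(z)}) = f^{(w)}/u^{(z)}$, which is integrable; this supplies the dominating functions and legitimizes the computation above.

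Finally, the discrete-channel formulas follow by specializing to $\Y = \Z$ and $f^{(z)}(y) = P^{(zy)}$, so that integrals become finite sums and $g_u(y) = (u^T P)^{(y)} = (P^T u)^{(y)}$. Substituting into the general gradient,
\[
  \frac{\partial\varphi(u\,;P)}{\partial u^{(z)}} = \sum_{y\in\Y} P^{(zy)}\log_2 P^{(zy)} - \sum_{y\in\Y} P^{(zy)}\log_2 (P^T u)^{(y)} - \xi = -h(P)^{(z)} - \bigl(P\log_2(P^T u)\bigr)^{(z)} - \xi,
\]
i.e. $\nabla_u\varphi(u\,;P) = -P\log_2(P^T u) - h(P) - \xi e$ with the logarithm component-wise; and the $(z,w)$ entry $-\xi\sum_{y} P^{(zy)}P^{(wy)}/(u^T P)^{(y)}$ of the Hessian is exactly the $(z,w)$ entry of $-\xi\,P\bigl(\diag(u^T P)\bigr)^{-1}P^T$. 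These are routine matrix identifications once the general case is established.
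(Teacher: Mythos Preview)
Your proposal is correct and follows essentially the same route as the paper: differentiate the mixture-entropy integral directly, justify the interchange via Jensen's inequality on $-\log_2 g_u$ together with dominated convergence, and then specialize to the discrete channel to obtain the matrix forms. Your domination argument for the Hessian (bounding $f^{(z)}f^{(w)}/g_u$ by $f^{(w)}/u^{(z)}$ on the interior of $\Delta^m$) is in fact more explicit than the paper's brief appeal to monotone convergence at that step.
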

\begin{proof}
  We first prove the result in the more general case when the noise
  channel is parameterized by $f$. From the definition of $\varphi$,
  \begin{equation*}
    \varphi(u\,;\,f) = \int_{\Y} -\left(\sum_{i\in\Z} u^{(i)} f^{(i)}(y) \right)
    \log_2\left(\sum_{i\in\Z} u^{(i)} f^{(i)}(y) \right)\,dy - \sum_{i\in\Z}
    u^{(i)} H(f^{(i)}).
  \end{equation*}
  Since $t \mapsto -\log t$ is convex, by Jensen's inequality,
  $H(f^{(z)},\sum_i u^{(i)} f^{(i)}) \leq \sum_i u^{(i)}
  H(f^{(z)},f^{(i)})$, which is bounded. By the Dominated Convergence
  Theorem, we can switch differentiation and integration operators,
  and thus,
  \begin{align*}
    \frac{\partial}{\partial u^{(z)}}\,\varphi(u\,;\,f)
    &= \int_{\Y} -f^{(z)}(y)\log_2\left(\sum_{i\in\Z} u^{(i)}
    f^{(i)}(y)\right)\,dy - \xi - H(f^{(z)}) \\
    &= \KL\left(f^{(z)}\;\middle\|\;\sum_{i\in\Z} u^{(i)} f^{(i)} \right) - \xi.
  \end{align*}
  Concerning the second partial derivative, Kullback-Leibler
  divergence is always non-negative, and therefore, Monotone
  Convergence Theorem again allows us to switch integration and
  differentiation, yielding
  \begin{equation*}
    \frac{\partial^2\varphi(u\,;\,f)}{\partial u^{(z)}\,\partial u^{(w)}}
    = - \xi\, \int_{\Y} \frac{f^{(z)}(y)\,f^{(w)}(y)}{\sum_{i\in\Z} u^{(i)} f^{(i)}(y)} \,dy.
  \end{equation*}
  For the discrete noise channel case, the proof is analogous to
  above, using Equation~\eqref{eq:channeleq}. Vectorizing yields
  \begin{align*}
    \nabla_u\,\varphi(u\,;P) &= -P \left( \log_2(P^T u) + \xi e\right) - h(P)
    \\
    &= -P\log_2\left(P^T u\right) - h(P) - \xi e.
  \end{align*}
  Similarly, the discrete noise channel analogue for the second
  derivative is
  \begin{equation*}
    \frac{\partial^2\varphi(u\,;P)}{\partial u^{(z)} \partial u^{(w)}} =
    - \xi\,\sum_{y\in\Y} \frac{P^{(zy)} P^{(wy)}}{\sum_{i\in\Z} u^{(i)} P^{(iy)}},
  \end{equation*}
  and vectorizing gives us the Hessian matrix.
\end{proof}

One can now use the results in Lemma~\ref{lem:channelgrad} to find
conditions for an optimal predictive distribution for a noise channel
parameterized either by densities $f = \{f^{(z)}:\,z\in\Z\}$ or
transmission matrix $P$. There has been much research on how to find
the optimal predictive distribution $u_*$ given a noise channel, as in
\citet{gallager1968information}. Generally, there are two methods for
finding this quantity. The first relies on solving a constrained
concave maximization problem by using a first-order method. The other
involves using the Karush-Kuhn-Tucker conditions necessary for an
optimal solution \citep[see][p.~91 for
  proof]{gallager1968information}.
\begin{theorem}[Gallager]
  \label{thm:kktsolution}
  Given a noise channel parameterized by $f = \{f^{(z)}:\,z\in\Z\}$,
  the optimal predictive distribution $u_*$ satisfies
  \begin{equation*}
    \KL\left( f^{(z)}\,\middle\|\,\sum_{i\in\Z}u^{(i)}f^{(i)} \right)
    \begin{cases} = C(f) & u_*^{(z)} > 0 \\
      < C(f) & u_*^{(z)} = 0,\\
    \end{cases}
  \end{equation*}
  where $C(f)$ is the channel capacity.
\end{theorem}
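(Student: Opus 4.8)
The plan is to run the first-order optimality argument for maximizing a concave function over the simplex, and then to identify the resulting Lagrange constant with the channel capacity using an identity about cross entropy. By Theorem~\ref{thm:entropyPursuit} (together with the cited concavity of mutual information), $\varphi(\cdot\,;\,f)$ is concave on the compact convex set $\Delta^m$ and attains its supremum $C(f)$ at some $u_*$; moreover, by the last of the \ref{assm:noise} it is strictly concave, so $u_*$ is unique. Since $\Delta^m$ is a polytope and, by Lemma~\ref{lem:channelgrad} (using the cross-entropy boundedness), $\varphi$ has well-defined partial derivatives, a point $u_*$ maximizes $\varphi$ over $\Delta^m$ if and only if the variational inequality $\nabla_u\varphi(u_*\,;\,f)^T(u-u_*)\le 0$ holds for all $u\in\Delta^m$. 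Testing this against the vertices $e_z$, and against the feasible directions $e_z-e_w$ with $w$ in the support of $u_*$, gives the usual dichotomy: there is a constant $\lambda$ such that $\partial\varphi(u_*)/\partial u^{(z)} = \lambda$ whenever $u_*^{(z)}>0$ and $\partial\varphi(u_*)/\partial u^{(z)} \le \lambda$ whenever $u_*^{(z)}=0$.

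Next I would substitute the gradient from Lemma~\ref{lem:channelgrad}, namely $\partial\varphi(u_*)/\partial u^{(z)} = \KL\bigl(f^{(z)}\,\big\|\,\bar f\bigr) - \xi$ with $\bar f = \sum_{i\in\Z} u_*^{(i)} f^{(i)}$ and $\xi = \log_2 e$, so the two cases read $\KL(f^{(z)}\,\|\,\bar f) = \lambda+\xi$ on the support of $u_*$ and $\KL(f^{(z)}\,\|\,\bar f)\le \lambda+\xi$ off it. The remaining task is to show $\lambda+\xi = C(f)$. For that I would multiply the support equalities by $u_*^{(z)}$ and sum over $z$: the right-hand side is $\lambda+\xi$ since $\sum_z u_*^{(z)}=1$, and the left-hand side is $\sum_z u_*^{(z)}\KL(f^{(z)}\,\|\,\bar f) = \sum_z u_*^{(z)}H(f^{(z)},\bar f) - \sum_z u_*^{(z)}H(f^{(z)})$. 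Linearity of cross entropy in its first argument (swap the finite sum past the integral, justified by the cross-entropy bound) collapses $\sum_z u_*^{(z)}H(f^{(z)},\bar f)$ to $H(\bar f)$, so the left-hand side equals $H(\bar f) - \sum_z u_*^{(z)}H(f^{(z)}) = \varphi(u_*\,;\,f) = C(f)$. Hence $\lambda+\xi = C(f)$, which is precisely the stated characterization. The discrete-channel version goes through verbatim with $\sum_{y\in\Y}$ replacing $\int_\Y$ and the matrix gradient $\nabla_u\varphi(u\,;\,P) = -P\log_2(P^Tu) - h(P) - \xi e$ from Lemma~\ref{lem:channelgrad}.

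I expect the main obstacle to be the boundary behavior rather than the algebra: one must be sure the first-order characterization of optimality is valid even though $\bar f$ may fail to dominate some $f^{(z)}$ at certain boundary points of $\Delta^m$, which is where the cross-entropy boundedness hypothesis of Lemma~\ref{lem:channelgrad} earns its keep (it makes the KL terms finite and lets Dominated/Monotone Convergence legitimize the differentiation and the sum-integral interchange); phrasing the condition via one-sided directional derivatives into the simplex is the clean way to stay rigorous. A secondary delicate point is the \emph{strict} inequality at inactive coordinates: the plain variational-inequality argument only yields ``$\le C(f)$,'' and promoting it to ``$<C(f)$'' should be done by combining it with the strict concavity of $\varphi$ and the uniqueness of $u_*$ (equality $\partial\varphi(u_*)/\partial u^{(z)}=\lambda$ at an inactive $z$ would let a perturbation toward $e_z$ stay optimal, contradicting uniqueness once strict concavity is used carefully), and this is the step most worth writing out in full.
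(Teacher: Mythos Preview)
The paper does not supply its own proof here; it simply cites Gallager's textbook. Your KKT route---first-order optimality on $\Delta^m$, the gradient formula from Lemma~\ref{lem:channelgrad}, and identifying the multiplier by summing $u_*^{(z)}\KL(f^{(z)}\|\bar f)$ over the support to recover $C(f)$---is precisely the standard argument and correctly yields $\KL(f^{(z)}\|\bar f)=C(f)$ when $u_*^{(z)}>0$ and $\KL(f^{(z)}\|\bar f)\le C(f)$ when $u_*^{(z)}=0$.

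Where your plan breaks is the promotion of $\le$ to strict $<$ at inactive coordinates. If $u_*^{(z)}=0$ and $\KL(f^{(z)}\|\bar f)=C(f)$, then for any active $w$ the directional derivative of $\varphi$ at $u_*$ along $e^{(z)}-e^{(w)}$ vanishes; strict concavity then forces the second-order term to be strictly negative, so $\varphi\bigl(u_*+t(e^{(z)}-e^{(w)})\bigr)<\varphi(u_*)$ for small $t>0$. The perturbation is strictly \emph{suboptimal}, not ``still optimal,'' and there is no contradiction with uniqueness of $u_*$. In fact the strict inequality can fail even under strict concavity: take a ternary output with $f^{(1)}=(0.7,0.2,0.1)$ and $f^{(2)}=(0.1,0.2,0.7)$, so that the two-input optimum is $(1/2,1/2)$ with output $\bar f=(0.4,0.2,0.4)$ and capacity $C$, and then choose $f^{(3)}=(a,1-2a,a)$ with $a\approx 0.24$ so that $\KL(f^{(3)}\|\bar f)=C$. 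The three densities are affinely independent, $\varphi$ is strictly concave on $\Delta^3$, and all three partial derivatives at $(1/2,1/2,0)$ equal $C-\xi$; hence $u_*=(1/2,1/2,0)$ is the unique maximizer, yet the inactive coordinate satisfies equality. Gallager's original statement carries only $\le$ at inactive inputs; the strict inequality as written here does not follow from the \ref{assm:noise}.
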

The difficulty in solving this problem comes from determining whether
or not $u_*^{(z)} > 0$. In the context of preference elicitation, when
fixing the number of offered alternatives $m$, it is critical for
every alternative to contribute to reducing uncertainty. However,
having a noise channel where $u_*^{(z)} = 0$ implies that it is more
efficient to learn without offering alternative $z$.

To be specific, we say that a noise channel parameterized by $f =
\{f^{(z)}:\,z\in\Z\}$ is \textit{admissible} if there exists some $f_*
\in \Int\left(\Hull(f)\right)$ such that for all $z\in\Z$,
\begin{equation*}
  \KL\left(f^{(z)} \,\middle\|\, f_*\right) = C
\end{equation*}
for some $C > 0$. Otherwise, we say the noise channel is
\textit{inadmissible}. Admissibility is equivalent the existence of a
predictive distribution $u_* > 0$ where all $m$ alternatives are used
to learn a user's preferences. For pairwise comparisons, any noise
channel where $f^{(1)}$ and $f^{(2)}$ differ on a set of non-zero
Lebesgue measure is admissible. Otherwise, for $m>2$, there are
situations when $u_*^{(z)} = 0$ for some $z\in\Z$, and
Lemma~\ref{lem:zeropredictive} provides one of them. In particular, if
one density $f^{(z)}$ is a convex combination of any of the others,
then the optimal predictive distribution will always have $u_*^{(z)} =
0$.

\begin{lemma}
  \label{lem:zeropredictive}
  Suppose the noise channel is parameterized by densities $f =
  \{f^{(z)}:\,z\in\Z\}$, and its corresponding optimal predictive
  distribution is $u_*$. If there
  exists $\lambda^{(i)} \geq 0$ for $i\neq z$ such that $\sum_{i\neq
    z} \lambda^{(i)} = 1$ and $f^{(z)}(y) = \sum_{i\neq z}
  \lambda^{(i)} f^{(i)}(y)$ for all $y\in\Y$, then $u_*^{(z)} = 0$.
\end{lemma}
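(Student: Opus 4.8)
The plan is a contradiction argument: assuming the optimal predictive distribution satisfies $u_*^{(z)} > 0$, I will exhibit a strictly better $u' \in \Delta^m$, obtained from $u_*$ by redistributing the mass $u_*^{(z)}$ onto the remaining coordinates using exactly the weights $\lambda^{(i)}$. Precisely, set $u'^{(z)} = 0$ and $u'^{(i)} = u_*^{(i)} + u_*^{(z)} \lambda^{(i)}$ for $i \neq z$; since the $\lambda^{(i)}$ are nonnegative and sum to one, $u' \in \Delta^m$. The crucial point is that this shift leaves the implied predictive density unchanged: because $f^{(z)} = \sum_{i \neq z} \lambda^{(i)} f^{(i)}$,
\begin{equation*}
  \sum_{i\in\Z} u'^{(i)} f^{(i)}
  = \sum_{i\neq z} u_*^{(i)} f^{(i)} + u_*^{(z)} \sum_{i\neq z} \lambda^{(i)} f^{(i)}
  = \sum_{i\in\Z} u_*^{(i)} f^{(i)},
\end{equation*}
so the mixture-entropy term $H\bigl(\sum_i u^{(i)} f^{(i)}\bigr)$ in \eqref{eq:genchanneleq} is identical at $u'$ and at $u_*$.

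Only the linear term $\sum_i u^{(i)} H(f^{(i)})$ changes, and substituting the definition of $u'$ yields
\begin{equation*}
  \varphi(u'\,;\,f) - \varphi(u_*\,;\,f)
  = \sum_{i\in\Z} u_*^{(i)} H\bigl(f^{(i)}\bigr) - \sum_{i\in\Z} u'^{(i)} H\bigl(f^{(i)}\bigr)
  = u_*^{(z)}\left( H\bigl(f^{(z)}\bigr) - \sum_{i\neq z} \lambda^{(i)} H\bigl(f^{(i)}\bigr) \right).
\end{equation*}
Next I would appeal to concavity of differential entropy: $H(f^{(z)}) = H\bigl(\sum_{i\neq z}\lambda^{(i)} f^{(i)}\bigr) \geq \sum_{i\neq z}\lambda^{(i)} H(f^{(i)})$, so the bracket is nonnegative, and the whole point is to upgrade this to a strict inequality. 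For that I would observe that the combination is nondegenerate: by the third part of the \ref{assm:noise} the $f^{(i)}$ are pairwise distinct on sets of positive Lebesgue measure, so no single $\lambda^{(i)}$ equals one and $f^{(z)}$ is genuinely a mixture of at least two distinct densities with positive weight. Strict concavity of $t \mapsto -t\log_2 t$ under the integral then gives $H(f^{(z)}) > \sum_{i\neq z}\lambda^{(i)} H(f^{(i)})$; combined with $u_*^{(z)} > 0$ this makes $\varphi(u'\,;\,f) > \varphi(u_*\,;\,f)$, contradicting optimality of $u_*$. Hence $u_*^{(z)} = 0$.

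Everything except the strict-concavity step is routine bookkeeping (membership $u' \in \Delta^m$, cancellation of the mixture term, the displayed identity). The step that needs care is justifying $H\bigl(\sum_j \lambda^{(j)} f^{(j)}\bigr) > \sum_j \lambda^{(j)} H(f^{(j)})$ whenever the $f^{(j)}$ carrying positive weight are not almost surely equal; here I would lean on the standing assumption that the cross entropies $H(f^{(i)},f^{(j)})$ are bounded, which keeps every $H(f^{(i)})$ finite so the inequality is well defined, and then use that $-t\log_2 t$ is strictly concave on $(0,\infty)$ so the pointwise integrand inequality is strict on the positive-measure set where the densities disagree. A slightly less self-contained alternative is to use Gallager's characterization (Theorem~\ref{thm:kktsolution}): $u_*^{(z)}>0$ would force $\KL(f^{(z)}\,\|\,\bar f_*) = C(f)$ with $\bar f_* = \sum_i u_*^{(i)} f^{(i)}$, but strict convexity of $\KL(\cdot\,\|\,\bar f_*)$ in its first argument together with $\KL(f^{(i)}\,\|\,\bar f_*) \leq C(f)$ for all $i$ forces $\KL(f^{(z)}\,\|\,\bar f_*) < C(f)$, again a contradiction — so the genuine obstacle in either approach is the same strictness argument.
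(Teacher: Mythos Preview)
Your proposal is correct and follows essentially the same route as the paper: redistribute the mass $u_*^{(z)}$ onto the other coordinates via the weights $\lambda^{(i)}$, observe that the mixture $\sum_i u^{(i)} f^{(i)}$ is unchanged so only the linear term moves, and then invoke strict concavity of entropy to obtain $\varphi(u') > \varphi(u_*)$. If anything you are more careful than the paper, which asserts the strict inequality $H(f^{(z)}) > \sum_{i\neq z}\lambda^{(i)} H(f^{(i)})$ without discussing why the convex combination is nondegenerate; your appeal to the \ref{assm:noise} to rule out $\lambda^{(i)}=1$ for some $i$ fills exactly that gap.
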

\begin{proof}
  Suppose $f^{(z)} = \sum_{i\neq z} \lambda^{(i)}f^{(i)}$. Take
  any $u\in\Delta^m$ such that $u^{(z)} > 0$. We will construct a
  $\bar u \in \Delta^m$ such that $\bar u ^{(z)} = 0$ and
  $\varphi(\bar u\,;\,f) > \varphi(u\,;\,f)$. Define $\bar u$ as
  \begin{equation*}
    \bar u^{(i)} = \begin{cases}
      u^{(i)} + \lambda^{(i)} u^{(z)} & i\neq z \\
      0 & i = z. \end{cases}
  \end{equation*}
  It is easy to verify that $\sum_i \bar u^{(i)} f^{(i)} = \sum_i
  u^{(i)} f^{(i)}$. But since entropy is strictly concave, we have
  $H\left(f^{(z)}\right) > \sum_{i\neq z}\lambda^{(i)}
  f^{(i)}$. Consequently,
  \begin{align*}
    \varphi(u\,;\,f) &= H\left( \sum_{i\in\Z} u^{(i)} f^{(i)}\right) -
    \sum_{i\in\Z} u^{(i)} H\left(f^{(i)}\right) \\
    &= H\left( \sum_{i\neq z} \bar u^{(i)} f^{(i)} \right)
    - \sum_{i\neq z} u^{(i)} H(f^{(i)}) - u^{(z)} H(f^{(z)}) \\
    &< H\left( \sum_{i\neq z} \bar u^{(i)} f^{(i)} \right)
    - \sum_{i\neq z} u^{(i)} H(f^{(i)}) - u^{(z)} \sum_{i\neq z}
    \lambda^{(i)} H(f^{(i)}) \\
    &= H\left(\sum_{i\neq z} \bar u^{(i)} f^{(i)} \right)
    - \sum_{i\neq z} \bar u^{(i)} H(f^{(i)}) = \varphi(\bar u\,;\,f),
  \end{align*}
  and therefore, one can always increase the objective value of
  $\varphi$ by setting $u^{(z)} = 0$.
\end{proof}
Of course, there are other cases where the predictive distribution
$u_*$ is not strictly positive for every $z\in\Z$. For example, even
if one of the densities is an \textit{approximate} convex combination,
the optimal predictive distribution would likely still have $u_*^{(z)}
= 0$. In general, there is no easy condition to check whether or not
$u_* > 0$. However, our problem assumes $m$ is relatively small, and
so it is simpler to find $u_*$ and confirm the channel is
admissible. In the case of a discrete noise channel,
\citet{shannon1948mathematical} gave an efficient way to do this by
solving a relaxed version of the concave maximization problem,
provided that the transmission matrix $P$ is invertible.

\begin{theorem}[Shannon]
  \label{thm:firstorder}
  For a discrete noise channel parameterized by a non-singular
  transmission matrix $P$, let
  \begin{equation}
    \label{eq:postpredictive}
    v = \frac{\exp\left(-\xi^{-1} P^{-1} h(P)\right)}{e^T\,
      \exp\left(-\xi^{-1} P^{-1}h(P)\right)},
  \end{equation}
  where the exponential is taken component-wise. If there exists $u>0$
  such that $u^T P = v^T$, then $u \in \Int(\Delta^m)$ is the optimal
  predictive distribution, meaning that
  $\nabla_u\,\varphi(u\,;P) = \beta e$ for some $\beta\in\mathbb{R}$,
  and $\varphi(u_*\,;P) = C(P)$, and the noise channel is
  admissible. Otherwise, then there exists some $z\in\Z$ such that
  $u^{(z)} = 0$, and the noise channel is inadmissible.
\end{theorem}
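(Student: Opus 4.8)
The plan is to read the optimality conditions for the concave program $\sup_{u\in\Delta^m}\varphi(u\,;\,P)$ off the gradient formula in Lemma~\ref{lem:channelgrad} and to solve them explicitly, exploiting invertibility of $P$. Two preliminary observations are needed. First, $P$ is row-stochastic, so $Pe=e$ and hence $P^{-1}e=e$. Second, by Lemma~\ref{lem:channelgrad} the Hessian $-\xi\,P(\diag(u^TP))^{-1}P^T$ is negative definite whenever $u^TP>0$ (which holds throughout $\Int(\Delta^m)$, since an invertible $P$ has no zero column), so $\varphi(\cdot\,;\,P)$ is strictly concave on $\Int(\Delta^m)$, and concave on all of $\Delta^m$. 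The standard first-order condition for the simplex then states that $u\in\Int(\Delta^m)$ is the global maximizer of $\varphi$ over $\Delta^m$ if and only if $\nabla_u\varphi(u\,;\,P)$ is orthogonal to the tangent space $\{d:e^Td=0\}$, that is, $\nabla_u\varphi(u\,;\,P)=\beta e$ for some $\beta\in\R$ (with sufficiency coming from concavity).

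The core step is to solve $\nabla_u\varphi(u\,;\,P)=\beta e$ in closed form. Substituting the gradient gives $-P\log_2(P^Tu)=h(P)+(\xi+\beta)e$; applying $P^{-1}$ and using $P^{-1}e=e$ yields $\log_2(P^Tu)=-P^{-1}h(P)-(\xi+\beta)e$. Exponentiating componentwise and changing base (recall $\xi=\log_2 e$, so $2^x=\exp(\xi^{-1}x)$) gives $P^Tu=c\cdot\exp(-\xi^{-1}P^{-1}h(P))$ for the positive scalar $c=\exp(-1-\beta/\xi)$. Since any $u\in\Delta^m$ satisfies $(P^Tu)^Te=u^TPe=u^Te=1$, the scalar $c$ is forced to equal the normalizing constant, so $P^Tu=v$, i.e. $u^TP=v^T$, with $v$ as in \eqref{eq:postpredictive}. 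Reversing the computation shows conversely that any $u>0$ with $u^TP=v^T$ satisfies $\nabla_u\varphi(u\,;\,P)=\beta e$, and such a $u$ lies in $\Delta^m$ because $u^Te=v^TP^{-1}e=v^Te=1$. Hence, if such a $u>0$ exists it is a stationary interior point, so by concavity it is the global maximizer $u_*=u\in\Int(\Delta^m)$ and $\varphi(u_*\,;\,P)=C(P)$. Admissibility then follows: Theorem~\ref{thm:kktsolution} applied at $u_*$ gives $\KL(f^{(z)}\,\|\,\sum_i u_*^{(i)}f^{(i)})=C(P)$ for every $z$ with $u_*^{(z)}>0$, hence for every $z$ since $u_*>0$; and $\sum_i u_*^{(i)}f^{(i)}=u_*^TP=v$ lies in $\Int(\Hull(f))$ because the rows of the invertible $P$ are affinely independent and $u_*>0$. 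Taking $f_*=v$ and $C=C(P)>0$ (positivity from Theorem~\ref{thm:entropyPursuit}) exhibits admissibility.

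For the ``otherwise'' case, invertibility of $P$ makes $u^TP=v^T$ have the unique solution $u=(P^T)^{-1}v$, so the nonexistence of a strictly positive such $u$ means this unique solution is not strictly positive. If the optimal predictive distribution $u_*$ were strictly positive, the interior first-order condition above would force $u_*^TP=v^T$, making $u_*$ a strictly positive solution — a contradiction. Therefore $u_*$ lies on the boundary of $\Delta^m$, i.e. $u_*^{(z)}=0$ for some $z\in\Z$; and since admissibility is equivalent to the optimal predictive distribution being strictly positive (as noted after Theorem~\ref{thm:kktsolution}), the channel is inadmissible.

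I expect the main difficulty to be bookkeeping rather than conceptual: keeping the base change between $\log_2$ and $\exp$ straight through the factor $\xi=\log_2 e$, and noticing that $P^{-1}e=e$ so that, after applying $P^{-1}$, the only residual degree of freedom is a single global scaling that normalization pins down. A secondary point needing care is the converse direction, where one must argue that non-positivity of the (unique, by invertibility) solution of $u^TP=v^T$ genuinely forces the maximizer onto the boundary — this combines uniqueness with the interior first-order condition — and then invoke the stated equivalence between admissibility and $u_*>0$.
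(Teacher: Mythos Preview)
Your proposal is correct and takes essentially the same approach as the paper: both solve the first-order stationarity condition $\nabla_u\varphi(u\,;\,P)=\beta e$ by applying $P^{-1}$ (using $P^{-1}e=e$) and then normalizing, which is exactly the paper's Lagrangian-relaxation computation rephrased. If anything, your treatment of the admissibility claim and the ``otherwise'' case is more careful than the paper's own proof, which dispatches these points rather briefly.
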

\begin{proof}
  Using \eqref{eq:channeleq} and Lagrangian relaxation,
  \begin{align*}
    \sup_{u:\,e^T u = 1} \varphi(u\,;P)
    &= \sup_{u:\,e^T u = 1} h(u^T P) - u^T h(P) \\
    &= \sup_{u \in\mathbb{R}^m} \inf_{\lambda \in \mathbb{R} \vphantom{R^d}}
    h(u^T P) - u^T h(P) - \lambda \left( e^T u - 1\right).
  \end{align*}
  Differentiating with respect to $u$ and setting equal to zero
  yields
  \begin{equation*}
    -P \log_2 \left(P^T u\right) - h(P) + \xi e - \lambda e = 0,
  \end{equation*}
  and since $P$ is invertible,
  \begin{equation*}
    -\log_2 \left(P^T u\right) = P^{-1} h(P) + (\lambda - \xi) e,
  \end{equation*}
  since $Pe = e$ for all stochastic matrices $P$. Algebra yields
  \begin{align*}
    P^T u &= \exp\left(-\xi^{-1} P^{-1}h(P)
    + (\lambda/\xi - 1)e\right) \\
    &= \Lambda\cdot \exp\left(-\xi^{-1} P^{-1} h(P)\right),
  \end{align*}
  where $\Lambda = \exp\left(\lambda/\xi - 1\right)$ is some positive
  constant. We require $e^T u = 1$, and if $u^T P = v^T$, it must be
  that
  \begin{equation*}
    u^T e = u^T P e = v^T e,
  \end{equation*}
  implying that $e^T u = 1$ if and only if $e^T v = 1$. Hence,
  $\Lambda$ is a normalizing constant that allows $v^T P = 1$. Thus,
  we can set $v$ as in \eqref{eq:postpredictive}, and now it is clear
  that $v \in \Delta^m$. We can invert $P$ to find an explicit form
  for $u$, but $P^{-T} v$ is only feasible for the original
  optimization problem if it is non-negative. However, if there exists
  some $u\in\Delta^m$ such that $u^T P = v^T$, then the optimal
  solution to the relaxed problem is feasible for the original
  optimization problem, proving the theorem.
\end{proof}

If there does not exist some $u\geq 0$ that satisfied $u^T P = v^T$
for $v$ defined in \eqref{eq:postpredictive}, then the non-negativity
constraint would be tight, and $u^{(z)}_* = 0$ for some $z\in\Z$. In
this case, the noise channel is inadmissible, because it implies
asking the optimal question under entropy pursuit would assign zero
probability to one of the alternatives being the model consistent
answer, and thus posits a question of strictly less than $m$
alternatives to the user.

The condition of $P$ being non-singular has an enlightening
interpretation. Having a non-singular transmission matrix implies
there would be no two distinct predictive distributions for $\bm
Z_k(X_k)$ that yield the same predictive distribution over $\bm
Y_k(X_k)$. This is critical for the model to be identifiable, and
prevents the previous problem of having one row of $P$ being a convex
combination of other rows. The non-singular condition is reasonable in practice:
it is easy to verify that matrices in the form $P = \alpha I +
(1-\alpha)ve^T$ for some $v\in\Delta^m$ is invertible if and only if
$\alpha > 0$. Transmission matrices of this type are fairly
reasonable: with probability $\alpha$, the user selects the ``true
response,'' and with probability $(1-\alpha)$, the user selects from
discrete distribution $v$, regardless of $\bm Z_k(X_k)$. The symmetric
noise channel is a special case of this. In general, if one models $P
= \alpha I + (1-\alpha) S$, where $S$ is an $m\times m$ stochastic
matrix, then $P$ is non-singular if and only if $-\alpha/(1-\alpha)$
is not an eigenvalue of $S$, which guarantees that $P$ is invertible when
$\alpha > 1/2$. Nevertheless, regardless of whether or not $P$ is
singular, it is relatively easy to check the admissibility of a noise
channel, and consequently conclude whether or not it is a good
modeling choice for the purpose of preference elicitation.

\subsection{Sensitivity Analysis}
\label{subsec:finite alts}

In reality, we cannot always fabricate alternatives so that the
predictive distribution is exactly optimal. In many instances, the set
of alternatives $\X$ is finite. This prevents us from choosing an
$X_k$ such that $u_k(X_k) = u_*$ exactly. But if we can find a
question that has a predictive distribution that is sufficiently close
to optimal, then we can reduce the entropy at a rate that is close to
the channel capacity. Below, we elaborate on our definition of
sufficiently close by showing $\varphi$ is strongly concave, using the
Hessian to construct quadratic upper and lower bounds on the objective
function $\varphi$.

\begin{theorem}
  \label{thm:taylorbounds}
  If there exists $u_*\in\Delta^m$ such that $u_*>0$ and
  $\varphi(u_*\,;\,f) = C(f)$ (i.e., if the noise channel is
  admissible), then there exist constants $0 \leq r(f) \leq R(f)$ such
  that
  \begin{equation*}
    r(f)\cdot\|u-u_*\|^2 \leq C(f) - \varphi(u\,;\,f)
    \leq R(f)\cdot \|u-u_*\|^2.
  \end{equation*}
  Further, suppose transmission matrix $P$ encoding a discrete noise
  channel is non-singular, and has minimum probability $\kappa_1 =
  \min_{zy} P^{(zy)} > 0$, maximum probability $\kappa_2 = \max_{zy}
  P^{(zy)}$, channel capacity $C(P)$ and distribution $u_*$ such that
  $\varphi(u_*\,;P) = C(P)$. If $u_* > 0$, we have
  \begin{equation*}
    \frac{\xi}{2\kappa_2}\left\|(u-u_*)^T P\right\|^2 \leq C(P) - \varphi(u\,;P) \leq
    \frac{\xi}{2\kappa_1}\left\|(u-u_*)^T P\right\|^2
  \end{equation*}
  for all $u \in \Delta^m$, with $\xi = \log_2 e$.
\end{theorem}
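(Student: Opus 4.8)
The plan is to read off both inequalities from the second-order Taylor expansion of $\varphi$ about the interior maximizer $u_*$, using the gradient and Hessian of Lemma~\ref{lem:channelgrad}. Write $v=u-u_*$ and $u_t=u_*+tv$. Since the channel is admissible, $u_*>0$ maximizes the concave function $\varphi(\cdot\,;f)$ from the interior, so its gradient there is a multiple of $e$: Lemma~\ref{lem:channelgrad} together with the Gallager/KKT identity (Theorem~\ref{thm:kktsolution}) gives $\partial\varphi(u_*\,;f)/\partial u^{(z)}=\KL\!\big(f^{(z)}\,\|\,\sum_i u_*^{(i)}f^{(i)}\big)-\xi=C(f)-\xi$ for every $z$, hence $\nabla\varphi(u_*\,;f)^Tv=(C(f)-\xi)\,e^Tv=0$. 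The integral form of Taylor's theorem together with concavity then yields
\[
C(f)-\varphi(u\,;f)=\int_0^1(1-t)\,\big[-v^T\nabla^2\varphi(u_t\,;f)\,v\big]\,dt\ \ge\ 0 ,
\]
so the left inequality already holds with $r(f)=0$; the work is to bracket the integrand by a fixed quadratic form in $v$.

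The upper bound is the crux, and a naive ``Hessian is continuous on the compact simplex'' argument fails: the entries $\xi\int_\Y f^{(z)}f^{(w)}/\big(\sum_i u^{(i)}f^{(i)}\big)$ of $-\nabla^2\varphi(u\,;f)$ blow up as $u$ approaches a face on which some $f^{(z)}$ dominates the surviving densities. The remedy is to route the estimate through the interior point $u_*$: for $t<1$ one has $\sum_i u_t^{(i)}f^{(i)}\ge(1-t)\sum_i u_*^{(i)}f^{(i)}$, so $-v^T\nabla^2\varphi(u_t\,;f)v\le(1-t)^{-1}\big[-v^T\nabla^2\varphi(u_*\,;f)v\big]$; the factor $(1-t)$ cancels the Taylor weight and $\int_0^1 dt=1$ gives
\[
C(f)-\varphi(u\,;f)\ \le\ -v^T\nabla^2\varphi(u_*\,;f)\,v\ \le\ R(f)\,\|v\|^2 ,
\]
where $R(f)$ is the largest eigenvalue of $-\nabla^2\varphi(u_*\,;f)$, finite because $u_*>0$ forces its entries to be at most $\xi/\min_z u_*^{(z)}$. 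For a strictly positive $r(f)$ — the strong-concavity statement promised in the text — bound the integrand the other way: $\sum_i u_t^{(i)}f^{(i)}\le\max_i f^{(i)}\le\sum_i f^{(i)}$ yields $-v^T\nabla^2\varphi(u_t\,;f)v\ge v^TMv$ with $M^{(zw)}=\xi\int_\Y f^{(z)}f^{(w)}/\big(\sum_i f^{(i)}\big)$ a fixed finite PSD matrix; since $v\in e^\perp$ and $M$ is positive definite there — an affine dependence among the $f^{(z)}$ would make $\varphi$ affine along a direction of $e^\perp$, hence either not strictly concave (contradicting Theorem~\ref{thm:entropyPursuit}) or with its maximum forced onto $\partial\Delta^m$ (contradicting admissibility; in the convex-combination case this is Lemma~\ref{lem:zeropredictive}) — we obtain $r(f)=\tfrac12\lambda_{\min}\!\big(M|_{e^\perp}\big)>0$ after $\int_0^1(1-t)\,dt=\tfrac12$.

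The discrete case runs the same argument made explicit, and it has no boundary blow-up. From Lemma~\ref{lem:channelgrad}, $-\nabla^2\varphi(u\,;P)=\xi\,P\,\diag(u^TP)^{-1}P^T$, and for every $u\in\Delta^m$ each coordinate $(u^TP)^{(y)}=\sum_i u^{(i)}P^{(iy)}$ is a convex combination of the $y$-th column of $P$, hence lies in $[\kappa_1,\kappa_2]$; therefore $\tfrac{\xi}{\kappa_2}\,PP^T\preceq-\nabla^2\varphi(u\,;P)\preceq\tfrac{\xi}{\kappa_1}\,PP^T$ uniformly in $u$. Since $u_*>0$ is an interior maximizer its gradient is again a multiple of $e$ (Theorem~\ref{thm:firstorder}), so the first-order Taylor term vanishes, and substituting the uniform Hessian bounds into the remainder with $\int_0^1(1-t)\,dt=\tfrac12$ gives
\[
\tfrac{\xi}{2\kappa_2}\,v^TPP^Tv\ \le\ C(P)-\varphi(u\,;P)\ \le\ \tfrac{\xi}{2\kappa_1}\,v^TPP^Tv ,
\]
and $v^TPP^Tv=\|v^TP\|^2=\|(u-u_*)^TP\|^2$ is exactly the claimed sandwich (non-singularity of $P$ making the left-hand side strictly positive for $u\neq u_*$). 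The one genuine obstacle in the whole argument is the general-$f$ upper bound, where the unboundedness of $\nabla^2\varphi$ near $\partial\Delta^m$ must be absorbed by the $(1-t)$ weight and the interior maximizer $u_*$ rather than by any crude uniform Hessian estimate.
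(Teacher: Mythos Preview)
Your proof is correct and, for the discrete case and the general lower bound $r(f)$, essentially matches the paper's argument: bound the denominator of the Hessian entries uniformly (by $\kappa_1,\kappa_2$ in the discrete case, by $\max_i f^{(i)}$ or $\sum_i f^{(i)}$ in the general case), then Taylor-expand about $u_*$ and kill the first-order term via $\nabla\varphi(u_*)\in\mathrm{span}\{e\}$.

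Where you genuinely diverge is the general upper bound $R(f)$. The paper does \emph{not} control the Hessian near $\partial\Delta^m$; instead it argues existentially: since $\varphi\ge 0$, the quadratic $q_R(u)=C(f)-\tfrac{R}{2}\|u-u_*\|^2$ only needs to minorize $\varphi$ on the shrinking ball $\{q_R>0\}$, and continuity of $\nabla^2\varphi$ at $u_*$ guarantees some finite $R$ works. Your route---$\sum_i u_t^{(i)}f^{(i)}\ge(1-t)\sum_i u_*^{(i)}f^{(i)}$, so the $(1-t)^{-1}$ blow-up of the Hessian along the segment is exactly cancelled by the Taylor weight $(1-t)$---is cleaner and yields an explicit constant $R(f)=\lambda_{\max}\bigl(-\nabla^2\varphi(u_*\,;f)\bigr)$, which the paper's argument does not. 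Your observation that this eigenvalue is finite because each Hessian entry at $u_*$ is at most $\xi/\min_z u_*^{(z)}$ is a nice concrete bound the paper lacks.

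One caveat: your argument that $r(f)>0$ is more than the theorem claims (it allows $r(f)=0$), and the justification is not airtight. An affine dependence $\sum_z v^{(z)}f^{(z)}=0$ with $v\in e^\perp$ makes $\varphi$ \emph{constant} (not merely affine) along $u_*+tv$, which produces a non-unique interior maximizer rather than forcing the maximum to the boundary; neither Theorem~\ref{thm:entropyPursuit} nor admissibility directly rules this out, and the paper's Noise Channel Assumptions only require the $f^{(z)}$ to differ pairwise, not to be affinely independent. This does not affect the theorem as stated.
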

\begin{proof}
  The $(z,w)$ component of $-\nabla^2 \varphi(\cdot\,;\,f)$ is lower
  bounded by
  \begin{equation}
    \label{eq:firstmatrixbound} 
    \int_{\Y} \frac{f^{(z)}(y)f^{(w)}(y)}{\sum_{i\in\Z} u^{(i)}
      f^{(i)}(y)}\,dy
    \geq \frac{1}{\max_{i\in\Z,\,y\in\Y} f^{(i)}(y)} \int_{\Y} f^{(z)}(y)
    f^{(w)}(y)\,dy,    
  \end{equation}
  since the denominator can be upper bounded. Let $M$ denote the
  $m\times m$ matrix with its $(z,w)$ component equal to the
  right-most term in \eqref{eq:firstmatrixbound} above. Since it can
  be written as a Gram matrix for an integral product space, $M$ is
  positive semi-definite, and it is clear that $M \preceq -\nabla^2
  \varphi(u\,;\,f)$ for all $u\in\Delta^m$. Correspondingly, let
  $r(f)$ be the smallest eigenvalue of $M$.

  For an upper bound, we employ a different approach. Let $q_R(u) = C
  - (R/2)\, \|u-u_*\|^2$ denote the implied quadratic lower bound to
  $\varphi$. It is clear that $q_R(u) \geq 0$ if and only if
  $\|u-u^*\| \leq \sqrt{2C/R}$. Since $\varphi$ is a non-negative
  function, we only need to find $R$ so that $q_R$ is a lower bound
  when $q_R(u) > 0$. Consider
  \begin{align*}
    \inf_R &\quad R \\
    \text{s.t.} &\quad q_R(u) \leq \varphi(u\,;\,f) \quad \forall u:\,\|u -
    u_*\| < \sqrt{2C/R}.
  \end{align*}
  The problem is feasible since $\nabla^2 \varphi$ is continuous about
  $u_*$, and hence, there exists an $R$ sufficiently large such that
  $q_R$ is a lower bound of $\varphi$ in a small neighborhood around
  $u_*$. The problem is obviously bounded since the optimal value must
  be greater than $r(f)$. Now let $R(f)$ denote the optimal value to
  the problem above. Taylor expanding about $u_*$ yields
  \begin{equation*}
    r(f)\cdot\|u-u_*\|^2 \leq C(f) - \varphi(u\,;\,f) +
    \nabla_u\,\varphi(u_*\,;\,f)^T (u-u_*)
    \leq R(f)\cdot\|u-u_*\|^2.
  \end{equation*}
  But since $u_*>0$, optimality requires $\nabla_u\,\varphi(u_*\,;\,f)
  = \beta e$ for some $\beta\in\mathbb{R}$. Since $u$ and $u_*$ are
  both probability distributions,
  \begin{equation*}
    \nabla_u\,\varphi(u_*\,;\,f)^T (u-u_*) = \beta e^T (u-u_*) = 0,
  \end{equation*}
  and hence the lower and upper bounds hold.
  
  The proof for the discrete noise channel case is similar, with the
  exception being that we can easily find constants that satisfy the
  quadratic lower and upper bounds of the optimality gap $C(P) -
  \varphi(u_*\,;\,P)$. We observe the elements of $u^T P$ are lower
  bounded by $\kappa_1 = \min_{zy} P_{zy}$ and upper bounded by $\kappa_2 =
  \max_{zy} P_{zy}$. Therefore, for all $u \in \Delta^m$,
  \begin{equation*}
    \xi \kappa_2^{-1}\,P P^T \preceq \nabla_u^2\, \varphi(u\,;P) \preceq \xi \kappa_1^{-1}\, P P^T.
  \end{equation*}
  Lemma~\ref{lem:channelgrad} implies that $\nabla_u\varphi(u_*\,|\,P)
  = \beta e$ since $u_* > 0$. Thus, Taylor expansion about $u_*$
  yields
  \begin{equation*}
    \frac{\xi}{2\kappa_1}\left\|(u-u_*)^T P\right\|^2 \leq C(f) - \varphi(u;P)
    + \nabla_u\,\varphi(u_*;P)^T (u-u_*)
    \leq \frac{\xi}{2\kappa_2} \left\|(u-u_*)^T P\right\|^2.
  \end{equation*}
  Lastly, since both $u_*$ and $u$ are distributions, their components
  sum to one, implying $\nabla_u\,\varphi(u\,;\,P)^T (u - u_*) =
  0$. The result directly follows.
\end{proof}
This gives us explicit bounds on the entropy reduction in terms of the
$L_2$ distance of a question's predictive distribution from the
optimal predictive distribution. In theory, this allows us to
enumerate through all questions in $\X^m$ and select that whose
predictive distribution is closest to optimal, although this is
difficult when the size of $\X$ is large.

\subsubsection{Symmetric Noise Channel}
\label{subsubsec:symmetricnoise}

A symmetric noise channel is a special case of a discrete noise
channel, where the transmission matrix entries only depend on whether
or not $y=z$. There are many instances where in a moment of
indecision, the user can select an alternative uniformly at random,
especially when she does not have a strong opinion on any of the
presented alternatives. A symmetric noise channel useful for modeling
situations when $m$ is relatively small; if $m$ is large, with the
offered alternatives being presented as a list, the positioning in the
list might have an effect on the user's response. However, if the
number of alternatives in the comparative question is small, the
ordering should not matter, and a symmetric noise channel would be a
reasonable modeling choice. 

One way to parameterize a symmetric noise channel is by representing
the transmission matrices as $P_\alpha = \alpha I + (1-\alpha)(1/m)\,e
e^T$, where $e$ is a vector of all ones, and $\alpha \in [0,1]$. There
are other scenarios including symmetric noise channels that allow
$P^{(zy)} > P^{(zz)}$ for $y\neq z$, but these situations would be
particularly pessimistic from the perspective of learning, so we opt
to exclude these noise channels from our definition. Since
$\varphi(\cdot\,;\,P_\alpha)$ is concave and now symmetric in its
first argument, choosing $u^{(z)}_* = 1/m$ for every $z \in \Z$ is an
optimal solution. Thus, we want to choose the question $X_k$ so that
the user is equally likely to choose any of the offered alternatives.

In the case of symmetric noise, we can easily calculate the channel
capacity using \eqref{eq:channeleq}, yielding
\begin{equation}
  \label{eq:symchanneleq}
  C(P_\alpha) = \log_2 m - h\left(\alpha e^{(1)} + (1-\alpha)\,(1/m)\,e \right),
\end{equation}
where $e^{(1)}$ is an $m$-vector with its first component equal to
one, and all others equal to zero. The concavity of $h$ gives a crude
upper bound for the channel capacity, namely $C(P_\alpha) \leq \alpha
\log_2 m$. Comparatively, under no noise, one can reduce the entropy
of the posterior of the linear classifier by $\log_2 m$ bits at every
time epoch. There is an intuitive explanation for this result. With
noise level $\alpha$, we only observe the model-consistent response
with probability $\alpha$ at each step. Even under the best case
scenario of knowing which responses were model-consistent and which
were a random draw, the expected number of bits of reduced posterior
entropy at each step would only be $\alpha \log_2 m$. In fact, the
expected entropy reduction in reality is lower than this because we do
not know which responses are informative of linear classifier $\bm
\theta$.

Because the symmetric noise channel is a special case of a discrete
noise channel, we can leverage the results from
Theorem~\ref{thm:taylorbounds} to derive symmetric noise channel
sensitivity bounds.
\begin{corollary}
  Suppose we have a symmetric noise channel parameterized by
  $P_\alpha$, where $P_\alpha = \alpha I + (1-\alpha)(1/m)\, ee^T$,
  implying that $u^{(z)}_* = 1/m$ for all $z$. Then
  \begin{equation*}
    \frac{\xi\alpha^2}{2\left(\alpha + (1-\alpha)(1/m)\right)}
    \left\|u-u_*\right\|^2 \;\leq\; C(P_\alpha) -
    \varphi(u\,;P_\alpha)
    \;\leq\; \frac{\xi\alpha^2}{2(1-\alpha)(1/m)}\left\|u-u_*\right\|^2
  \end{equation*}
  for all $u \in \Delta^m$. 
\end{corollary}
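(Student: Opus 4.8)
The plan is to invoke the discrete-noise-channel half of Theorem~\ref{thm:taylorbounds} directly, since $P_\alpha$ is a discrete noise channel, and then simplify the resulting bound using the special structure of $P_\alpha$. The two ingredients I need are (i) the extreme entries $\kappa_1 = \min_{zy} P_\alpha^{(zy)}$ and $\kappa_2 = \max_{zy} P_\alpha^{(zy)}$, and (ii) an explicit evaluation of $\|(u-u_*)^T P_\alpha\|$.

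First I would record the entries of $P_\alpha = \alpha I + (1-\alpha)(1/m)ee^T$: every diagonal entry equals $\alpha + (1-\alpha)(1/m)$ and every off-diagonal entry equals $(1-\alpha)(1/m)$. Since $\alpha \geq 0$, the diagonal entry is the largest, so $\kappa_2 = \alpha + (1-\alpha)(1/m)$ and $\kappa_1 = (1-\alpha)(1/m)$. I should also note that $P_\alpha$ is non-singular whenever $\alpha > 0$ (as remarked after Theorem~\ref{thm:firstorder}, matrices of the form $\alpha I + (1-\alpha)ve^T$ are invertible iff $\alpha > 0$), and that $u_* = (1/m)e > 0$ with $\varphi(u_*;P_\alpha) = C(P_\alpha)$ by the symmetry argument preceding the corollary, so the hypotheses of Theorem~\ref{thm:taylorbounds} are met.

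Next I would compute $(u-u_*)^T P_\alpha$. Writing $w = u - u_*$, both $u$ and $u_*$ lie in $\Delta^m$, so $e^T w = 0$. Hence
\begin{equation*}
  w^T P_\alpha = \alpha\, w^T + (1-\alpha)(1/m)\,(w^T e)\,e^T = \alpha\, w^T,
\end{equation*}
which gives $\|(u-u_*)^T P_\alpha\|^2 = \alpha^2 \|u-u_*\|^2$. Substituting this identity together with the values of $\kappa_1,\kappa_2$ into the inequality of Theorem~\ref{thm:taylorbounds},
\begin{equation*}
  \frac{\xi}{2\kappa_2}\left\|(u-u_*)^T P_\alpha\right\|^2 \leq C(P_\alpha) - \varphi(u;P_\alpha) \leq \frac{\xi}{2\kappa_1}\left\|(u-u_*)^T P_\alpha\right\|^2,
\end{equation*}
yields exactly the claimed bound
\begin{equation*}
  \frac{\xi\alpha^2}{2\left(\alpha + (1-\alpha)(1/m)\right)}\|u-u_*\|^2 \leq C(P_\alpha) - \varphi(u;P_\alpha) \leq \frac{\xi\alpha^2}{2(1-\alpha)(1/m)}\|u-u_*\|^2.
\end{equation*}

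There is no real obstacle here; the only point requiring the slightest care is the observation that $u - u_*$ is orthogonal to $e$, which collapses the rank-one part of $P_\alpha$ and makes the action of $P_\alpha$ on the relevant subspace just scalar multiplication by $\alpha$. Everything else is bookkeeping on the entries of $P_\alpha$ and a direct appeal to the already-proved Theorem~\ref{thm:taylorbounds}. One might also add the caveat that the upper bound is vacuous (infinite) when $\alpha = 1$, which is consistent with the fact that a noiseless channel makes $\varphi$ merely concave rather than strongly concave in every direction — but since $\alpha = 1$ gives $\kappa_1 = 0$, this is automatically encoded in the statement.
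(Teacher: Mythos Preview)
Your proposal is correct and follows exactly the same route as the paper's proof: invoke Theorem~\ref{thm:taylorbounds}, identify $\kappa_1 = (1-\alpha)(1/m)$ and $\kappa_2 = \alpha + (1-\alpha)(1/m)$ from the entries of $P_\alpha$, and verify that $(u-u_*)^T P_\alpha = \alpha(u-u_*)^T$ using $e^T(u-u_*)=0$. If anything, you spell out more of the hypothesis-checking (non-singularity, $u_*>0$) than the paper bothers to.
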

\begin{proof}
  We start with the bounds from Theorem~\ref{thm:taylorbounds} and
  further refine. The off-diagonal entries of $P_\alpha$, by our
  parameterization of symmetric noise channel, are its smallest
  elements, and therefore, $\kappa_1 = (1-\alpha)(1/m)$. Similarly,
  the diagonal entries of $P_\alpha$ are the largest elements, and so
  $\kappa_2 = \alpha + (1-\alpha)(1/m)$. Lastly, one can easily verify
  $(u-u_*)^T P = \alpha \, (u-u_*)^T$.
\end{proof}
We return to the symmetric noise channel case in
Section~\ref{subsec:continuum}, where we show that in the theoretical
case of allowing fabrication of alternatives, a subset of alternatives
can always be constructed to achieve a uniform predictive
distribution regardless of the prior, and hence the optimal rate of
entropy reduction can always be achieved.

\subsection{Selection of Alternatives from the Continuum}
\label{subsec:continuum}

Now that we have results relating the predictive distribution
$u_k(X_k)$ to the entropy reduction in the linear classifier $\bm
\theta$, we now explore how we can appropriately choose alternatives
$X_k$ at every time epoch that yield a desirable predictive
distribution.

We first focus on the easier case where we can construct alternatives
to ask any comparative questions we desire. For a set of $m$
alternatives $(x^{(1)},\dots,x^{(m)}) = X \in \X^m$ and a prior
probability measure $\mu$, the characteristic polytopes
$A^{(1)}(X),\dots,A^{(m)}(X)$ determine the predictive
probabilities. Each set $A^{(z)}(X)$ composed of constraints $\theta^T
\left(x^{(z)}-x^{(i)}\right) \geq 0$ for $i\neq z$ (ignoring
strictness vs. non-strictness of inequalities). Thus, for the set of
alternatives $\X$ to have full expressiveness with respect to our
model, one must be able to choose alternatives so that $x^{(i)} -
x^{(j)}$ can take any direction in $\mathbb{R}^d$. A reasonable and
sufficient condition for the interior of $\X$ to be non-empty. When
this is the case, we can always choose alternatives such that the
relative direction between any two can take any value. This is what we
refer to as the continuum regime.

\begin{figure}[t]
    \centering
    \includegraphics[width=0.3\textwidth]{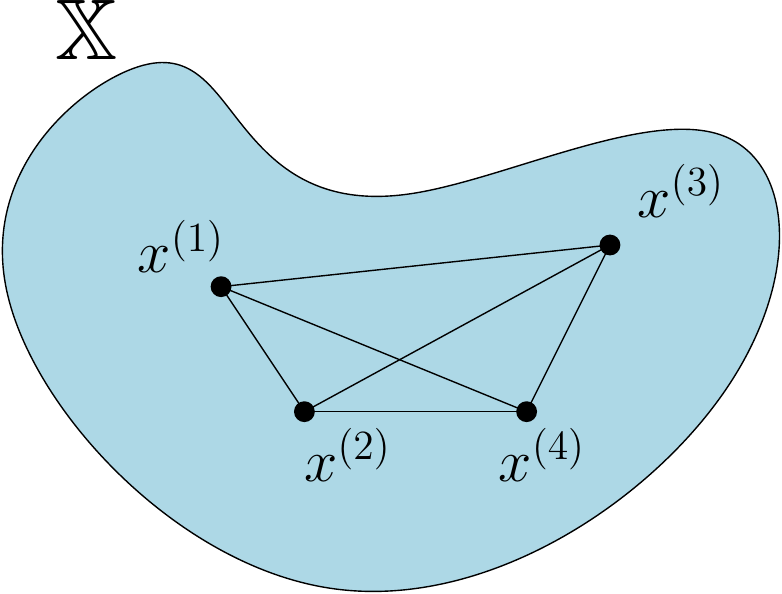}
    \caption{The continuum regime for alternative set $\X$. The
      document vectors $x^{(i)}$ can be chosen so that any direction
      $x^{(i)}-x^{(j)}$ can be achieved for all possible
      combinations.}
    \label{fig:interior}
\end{figure}

In most practical situations, the set of alternatives is finite, and
such construction is not possible. However, this assumption is more
mathematically tractable and allows us to give conditions for when we
can ask questions that yield a desirable predictive distribution, and
consequently maximize entropy reduction. We return to the more
realistic assumption of a finite alternative set later in
Section~\ref{sec:computation}.

Consider using pairwise comparisons, i.e., when $m=2$. Is it true that
regardless of the noise channel and the prior distribution of
$\bm\theta$ that we can select a question $X_k$ that achieves the
optimal predictive distribution $u_k(X_k)$? A simple example proves
otherwise. Suppose \textit{a priori}, the linear classifier
$\bm\theta$ is normally distributed with zero mean and an identity
covariance matrix. Because the distribution is symmetric about the
origin, regardless of the hyperplane we select, exactly $1/2$ of the
probabilistic mass lies on either side of the hyperplane. This is the
desirable outcome when the noise channel is symmetric, but suppose
this were not the case. For example, if the noise channel required
$2/3$ of the probabilistic mass on one side of the hyperplane, there
is no way to achieve this. 

This issue is related to a certain metric called \textit{halfspace
  depth}, first defined by \citet{tukey1975mathematics} and later
refined by \citet*{donoho1992breakdown}.  The halfspace depth at a
point $\eta \in \mathbb{R}^d$ refers to the minimum probabilistic mass
able to be partitioned to one side of a hyperplane centered at
$\eta$. In this paper, we only consider the case where the cutting
plane is centered at the origin, and need only to consider the case
where $\eta = 0$. Hence, let
\begin{equation}
  \label{eq:halfspacedepth}
  \delta(\mu_k)
  = \inf_{v\neq 0} \mu_k\left(\left\{\theta:\,\theta^T v
  \geq 0 \right\} \right).
\end{equation}
In our previous example, the halfspace depth of the origin was equal
to $1/2$, and therefore, there were no hyperplanes that could
partition less than $1/2$ of the probabilistic mass on a side of a
hyperplane.

The question now is whether we can choose a hyperplane such that
$u_k^{(z)}(X_k) = u^{(z)}_*$ for any $u^{(z)}_* \in
\left[\delta(\mu_k),1-\delta(\mu_k)\right]$. We first prove an
intuitive result regarding the continuity of probabilistic mass of a
halfspace with respect to the cutting plane. One can imagine rotating
a hyperplane about the origin, and since the probability measure has a
density with respect to Lebesgue measure, there will not be any sudden
jumps in probabilistic mass on either side of the hyperplane.

\begin{lemma}
  \label{lem:halfspacecontinuous}
  If probability measure $\mu$ is absolutely continuous with respect
  to Lebesgue measure, then the mapping $v \mapsto
  \mu\left(\{\theta\in\Theta:\,\theta^T v \geq 0\}\right)$ is
  continuous.
\end{lemma}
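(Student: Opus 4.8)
The plan is to deduce this from the Dominated Convergence Theorem, after reducing it to a statement about pointwise convergence of indicator functions. Since $\mathbb{R}^d$ is metrizable, it suffices to check sequential continuity. Writing $p$ for the Lebesgue density of $\mu$, set $g(v) = \mu\left(\{\theta\in\Theta:\,\theta^T v \geq 0\}\right) = \int_\Theta \mathbb{I}(\theta^T v \geq 0)\,p(\theta)\,d\theta$. Because $g(cv) = g(v)$ for every $c>0$, it is enough to prove continuity at each $v\neq 0$ (equivalently, on the unit sphere), which is precisely the range of directions relevant to the halfspace depth in \eqref{eq:halfspacedepth}.

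Next I would fix $v\neq 0$ and an arbitrary sequence $v_n \to v$. For every $\theta$ with $\theta^T v \neq 0$, continuity of the inner product gives $\theta^T v_n \to \theta^T v$, and hence $\mathbb{I}(\theta^T v_n \geq 0) \to \mathbb{I}(\theta^T v \geq 0)$; thus the integrands converge pointwise off the hyperplane $H_v = \{\theta:\,\theta^T v = 0\}$. The key step is to argue that this exceptional set is $\mu$-negligible: since $v\neq 0$, $H_v$ is a proper linear subspace of $\mathbb{R}^d$, so it has $d$-dimensional Lebesgue measure zero, and absolute continuity of $\mu$ then forces $\mu(H_v)=0$. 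Consequently $\mathbb{I}(\theta^T v_n \geq 0)\to \mathbb{I}(\theta^T v \geq 0)$ for $\mu$-almost every $\theta$. Each integrand is dominated in absolute value by the integrable function $p$, so the Dominated Convergence Theorem yields $g(v_n)\to g(v)$, establishing continuity at $v$ and hence on all of $\mathbb{R}^d\setminus\{0\}$.

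I do not expect a genuine obstacle here; the only place the hypothesis is used is the assertion $\mu(H_v)=0$, and this is exactly what absolute continuity buys us. It is worth noting in passing that the same fact shows the value of $g$ is unchanged if the defining inequality $\theta^T v \geq 0$ is replaced by the strict one, so the strict/non-strict bookkeeping in \eqref{eq:characteristicpolytopes} never affects the predictive probabilities. An alternative proof avoiding the Dominated Convergence Theorem would bound $|g(v_n)-g(v)|$ by the $\mu$-measure of the symmetric difference of the two halfspaces and show this measure tends to zero, but the convergence-of-indicators route is the most economical.
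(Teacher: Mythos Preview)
Your argument is correct and follows essentially the same route as the paper: sequential continuity, almost-sure convergence of the indicator functions (with the exceptional hyperplane having $\mu$-measure zero by absolute continuity), and then the Dominated Convergence Theorem. If anything, your write-up is more explicit than the paper's about why the a.e.\ convergence holds, which is the one step the paper leaves to the reader.
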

\begin{proof}
  Suppose we have a sequence $(v_j:\,j\geq 0)$ in $\mathbb{R}^d
  \setminus \{0\}$ such that $v_j \to v$. The functions
  $\mathbb{I}(\{\theta:\,\theta^T v_j \geq 0\})$ converge to
  $\mathbb{I}(\theta:\,\theta^T v \geq 0\})$ almost surely. Taking
  expectations and using Dominated Convergence Theorem gives the result.
\end{proof}

Lemma~\ref{lem:halfspacecontinuous}
enables us to find conditions under which we can ask a
question $X_k$ that yields a desirable predictive distribution
$u_k(X_k)$.  In particular, Corollary~\ref{cor:halfspaceselection}
uses a variant of the intermediate value theorem.
\begin{corollary}
  \label{cor:halfspaceselection}
  Suppose $u_* > 0$ and $\Int(\X) \neq \varnothing$. Then there exists
  $X_k = (x_1,x_2)\in\X^2$ such that $u_k(X_k) = u_*$ if and only if
  $\max u_* \leq 1-\delta(\mu_k)$.
\end{corollary}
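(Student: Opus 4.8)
The plan is to reduce the claim to an intermediate-value argument for the single scalar $g(v) := \mu_k(\{\theta : \theta^T v \geq 0\})$ as $v$ ranges over nonzero directions. First I would unpack the $m = 2$ geometry: for $X = (x_1, x_2)$ with $v = x_1 - x_2 \neq 0$, the sets in \eqref{eq:characteristicpolytopes} reduce to $A^{(1)}(X) = \{\theta : \theta^T v \geq 0\}$ and $A^{(2)}(X) = \{\theta : \theta^T v < 0\}$, so since $\mu_k$ is absolutely continuous with respect to Lebesgue measure the separating hyperplane carries no mass and $u_k(X) = (g(v),\, 1 - g(v))$. Thus $u_k(X)$ depends only on the direction of $v$. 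The assumption $\Int(\X) \neq \varnothing$ makes every direction attainable: fixing a ball $B(x_0,\rho) \subseteq \X$ and any unit vector $w$, the points $x_1 = x_0 + (\rho/2) w$ and $x_2 = x_0 - (\rho/2) w$ lie in $\X$ with $x_1 - x_2 = \rho w$. Hence $\{u_k(X) : X \in \X^2\} = \{(g(v),\, 1 - g(v)) : v \neq 0\}$, and the problem becomes identifying the range of $g$.

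Next I would pin down that range. Because $g(v)$ depends only on the direction of $v$, it is a continuous (Lemma~\ref{lem:halfspacecontinuous}) function on the compact connected sphere $S^{d-1}$, so its image is a closed interval $[\underline{g}, \overline{g}]$. By \eqref{eq:halfspacedepth}, $\underline{g} = \inf_{v \neq 0} g(v) = \delta(\mu_k)$. For the upper endpoint, the complementary-halfspace identity $g(v) + g(-v) = \mu_k(\{\theta^T v \geq 0\}) + \mu_k(\{\theta^T v \leq 0\}) = 1 + \mu_k(\{\theta^T v = 0\}) = 1$, again using absolute continuity, gives $\overline{g} = \sup_{v \neq 0} g(v) = 1 - \inf_{v \neq 0} g(-v) = 1 - \delta(\mu_k)$. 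Therefore the set of attainable predictive distributions is exactly $\{(t,\, 1 - t) : \delta(\mu_k) \leq t \leq 1 - \delta(\mu_k)\}$.

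Reading off both directions is then immediate. An $X_k \in \X^2$ with $u_k(X_k) = u_*$ exists iff $u_*^{(1)} \in [\delta(\mu_k),\, 1 - \delta(\mu_k)]$; since $u_*^{(2)} = 1 - u_*^{(1)}$, this is equivalent to $\min\{u_*^{(1)}, u_*^{(2)}\} \geq \delta(\mu_k)$, i.e. $\max u_* = 1 - \min u_* \leq 1 - \delta(\mu_k)$. The ``only if'' direction in fact uses neither $\Int(\X) \neq \varnothing$ nor continuity: from $u_*^{(1)} = g(v)$ and $u_*^{(2)} = g(-v)$ both are $\geq \delta(\mu_k)$ directly. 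So the real content sits in the ``if'' direction, and its crux is the combination of the three ingredients assembled above: a nonempty interior to realize every direction $x_1 - x_2$, continuity plus connectedness of $S^{d-1}$ to invoke the intermediate value property, and the complementary-halfspace identity to certify that the extreme attainable values are precisely $\delta(\mu_k)$ and $1 - \delta(\mu_k)$. The points requiring care are the strict-versus-nonstrict inequalities in \eqref{eq:characteristicpolytopes} (harmless, since hyperplanes are $\mu_k$-null) and the degenerate case $d = 1$, where the direction set $\{+1,-1\}$ is disconnected and only the endpoint $\max u_* = 1 - \delta(\mu_k)$ is met, so the statement is to be read for $d \geq 2$.
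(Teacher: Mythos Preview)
Your proposal is correct and follows essentially the same route as the paper: reduce to the scalar map $g(v)=\mu_k(\{\theta:\theta^T v\ge 0\})$ on the unit sphere, use Lemma~\ref{lem:halfspacecontinuous} plus connectedness and compactness of $S^{d-1}$ to conclude the image is $[\delta(\mu_k),\,1-\delta(\mu_k)]$ via the antipodal identity $g(v)+g(-v)=1$, and then realize any required direction $x_1-x_2$ from $\Int(\X)\neq\varnothing$. Your write-up is a bit more explicit than the paper's about which hypotheses each direction actually needs and about the $d=1$ degeneracy, but the argument is the same.
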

\begin{proof}
  Take any $v\in C = \{w\in\mathbb{R}^d:\,\|w\| = 1\}$, where
  $\mu_k\left(\{\theta:\,\theta^T v \geq 0 \}\right) =
  \delta(\mu_k)$. Now let $v' = -v$, and since $\mu_k$ is absolutely
  continuous with respect to Lebesgue measure, $\mu_k(\theta^T v' \geq
  0) = \mu_k(\theta^T v' > 0) = 1-\delta(\mu_k)$. Also, $C$ is
  connected, and $w \mapsto \mu_k(\{\theta:\,\theta^T w \geq 0\})$ is
  a continuous mapping: it follows that the image of any path from $v$
  to $v'$ must also be connected. But the image is a subset of the
  real line, and therefore must be an interval. Lastly, $C$ is a
  compact set, implying that the endpoints of this interval are
  attainable, and so the image of any such path is equal to
  $[\delta(\mu_k),1-\delta(\mu_k)]$.

  To recover the two alternatives, first select a vector
  $w\in\mathbb{R}^d$ such that $\mu_k(\{\theta:\,\theta^T w \geq 0\})
  = u^{(1)}$. Choose $x^{(1)} \in \Int(\X)$, and subsequently
  choose $x^{(2)} = x^{(1)} - cw$, where $c>0$ is a positive scalar
  that ensures $x^{(2)} \in \X$. Finally, let $X_k =
  \left(x^{(1)},x^{(2)}\right)$.

  \begin{figure}[t]
    \centering
    \includegraphics[width=0.3\textwidth]{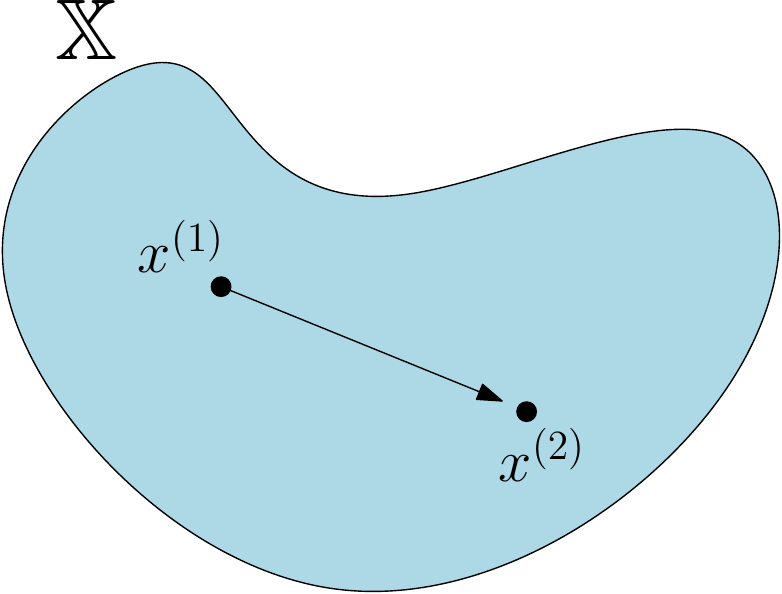}
    \caption{Selection of alternatives $x^{(1)}$ and $x^{(2)}$. Since
      $x^{(1)}$ lies in the interior of $\X$, it is always possible to
      choose $x^{(2)}$ so that $x^{(1)} - x^{(2)}$ has the same
      direction as $v$.}
    \label{fig:intpairdefine}
  \end{figure}

  To prove the converse statement, suppose $\max\{u_*\} >
  1-\delta(\mu_k)$. Then by definition of halfspace depth,
  $\min\{u_*\} \notin \left\{\mu_k(\{\theta:\,\theta^T v \geq 0
  \}):\,v \neq 0\right\}$. Thus, there does not exist a hyperplane
  that can separate $\mathbb{R}^d$ into two halfspaces with
  probabilistic mass $u_*$.
\end{proof}

Can we draw a similar conclusion if we offer more more than two
alternatives at each time epoch?  The mass partition problem becomes
increasingly complex when greater than two alternatives are
included. Since the sets $A^{(z)}(X)$ correspond to convex polyhedral
cones, the problem becomes that of finding a partition of $m$ convex
polyhedral cones, or a polyhedral $m$-fan as it is known in the
computational geometry literature, that attains the prescribed
probabilistic mass $u_*$. There are a number of results pertaining to
convex equipartitions and extensions of the Borsuk-Ulam Theorem, most
notably the Ham Sandwich Theorem. Despite this, to the best of our
knowledge, there is no result for general mass partitions of convex
polyhedral $m$-fans in the computational geometry literature. For this
reason, we prove such a result here: that one can construct a
polyhedral $m$-fan with the corresponding predictive distribution
$u_*$ if the measure $\mu$ is such that $\max\{u_*\} < 1-\delta(\mu)$.

Unlike the previous case that focused on pairwise comparisons, the
inequality is strict. One of the reasons this is the case is because
of the specific structure of the polyhedral cones in our
problem. Since $A^{(z)}(X)$ corresponds to the linear classifier in
which the dot product with alternative $z$ is maximal, these
polyhedral cones cannot be halfspaces unless the predictive
probability for some alternative equals zero, which we do not
allow. Thus, we enforce the additional constraint that each $A^{(z)}$
is a \textit{salient} convex polyhedral cone, meaning that it does not
contain a linear subspace.

To prove the result, we first show the result in the case of two
dimensions: constructing the polyhedral $m$-fan, then deriving the
feature vectors for the corresponding alternatives. This result is
then generalized to the case of any dimension by using a projection
argument. 

\begin{lemma}
  \label{lem:2dpolyfan}
  Suppose $d=2$ and $m>2$. If $\max\{ u_*\} < 1-\delta(\mu)$, then
  there exists a two-dimensional polyhedral $m$-fan characterized by
  polyhedral cones $(A^{(z)}:\,z\in\Z)$ such that $\mu(A^{(z)}) =
  u^{(z)}_*$ for all $z\in\Z$.
\end{lemma}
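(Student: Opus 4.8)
The plan is to reduce the statement to a one‑dimensional (angular) mass‑partition problem and settle it by a continuity / intermediate‑value argument. In $d=2$ a convex polyhedral cone based at the origin is exactly an angular sector, and it is \emph{salient} precisely when its angular width is strictly less than $\pi$; moreover the $m$ bounding rays of an $m$-fan have all consecutive angular gaps $<\pi$ if and only if those rays are not all contained in a common closed halfplane through the origin. So it suffices to produce $m$ rays through the origin, inducing (in cyclic order) sectors $A^{(1)},\dots,A^{(m)}$ with $\mu(A^{(z)})=u^{(z)}_*$, such that no sector has width $\geq\pi$.

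First I would set up a one‑parameter family of candidate fans. Writing directions as $\mathbb{R}/2\pi\mathbb{Z}$ and using absolute continuity of $\mu$ (Lemma~\ref{lem:halfspacecontinuous}, together with the fact that each ray has $\mu$-measure zero), the angular cumulative‑mass function $\phi\mapsto\mu(\text{sector }[0,\phi])$ is continuous, so it has a nondecreasing right‑inverse $\Phi$ with the property that the sector from angle $\Phi(a)$ to angle $\Phi(b)$ has mass exactly $b-a$. (If the angular density vanishes on some arcs then $\Phi$ has jumps, but a cut ray can then be placed anywhere inside such an arc without changing any mass, which only adds freedom; so one may assume $\mu$ has full angular support and $\Phi$ is a homeomorphism.) With cumulative targets $P_z=\sum_{j\leq z}u^{(j)}_*$, define for each $q\in\mathbb{R}/\mathbb{Z}$ the fan whose cut rays are at angles $\Phi(q+P_0),\dots,\Phi(q+P_{m-1})$. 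By construction sector $z$ has mass $u^{(z)}_*$ for every $q$, its angular width $\gamma_z(q)$ depends continuously on $q$, and $\sum_z\gamma_z(q)=2\pi$.

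Next, assume for contradiction that every $q$ yields a non‑salient fan, i.e.\ for each $q$ some $\gamma_z(q)\geq\pi$. Since $m>2$, and treating indices with $u^{(z)}_*=0$ separately (place those cones along zero‑mass rays, or argue by a limiting perturbation), we may take all $u^{(z)}_*>0$; then at most one sector can have width $\geq\pi$ at a given $q$, and two sectors cannot both equal $\pi$, since that would force the remaining widths — hence the remaining masses — to vanish. Hence there is a unique ``large'' index $z^\ast(q)$. A value change of $z^\ast$ would, by continuity and the fact that \emph{some} sector is always $\geq\pi$, force two sectors to equal $\pi$ simultaneously, which is impossible; thus $z^\ast$ is locally constant, hence constant on the connected circle, say $z^\ast\equiv z_0$. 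Then $\gamma_{z_0}(q)\geq\pi$ for all $q$, i.e.\ \emph{every} sector of mass $u^{(z_0)}_*$ has angular width at least $\pi$. But $\sup\{\mu(S):S\text{ a sector of width }<\pi\}=1-\delta(\mu)$, while $u^{(z_0)}_*\leq\max\{u_*\}<1-\delta(\mu)$; so a slightly shrunk optimal halfplane is a width‑$<\pi$ sector of mass strictly exceeding $u^{(z_0)}_*$, and shrinking it continuously to width $0$ and applying the intermediate value theorem produces a width‑$<\pi$ sector of mass exactly $u^{(z_0)}_*$ — a contradiction. Therefore some $q$ gives an $m$-fan all of whose cones are salient and carry the prescribed masses.

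The main obstacle is the step that pins down $z^\ast(q)$ as a single index independent of $q$: this is exactly what upgrades the easy fact that for \emph{each} $z$ separately there is a good placement (an immediate consequence of $u^{(z)}_*<1-\delta(\mu)$ via the shrinking argument) into a \emph{single simultaneous} choice of $q$. The remaining soft points — continuity of $\gamma_z$ when the angular density has zeros, and the bookkeeping for indices with $u^{(z)}_*=0$ — are routine but should be spelled out.
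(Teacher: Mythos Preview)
Your argument is correct and takes a genuinely different route from the paper's. The paper proceeds by explicit construction: it parameterizes directions by angles on $[0,2\pi)$, splits into the two regimes $\max\{u_*\}<\delta(\mu)$ and $\max\{u_*\}\in[\delta(\mu),1-\delta(\mu))$, and in the harder second case places the sector of largest mass first, then argues (via a somewhat delicate decomposition of the remaining mass into pieces $a,b,c$) that the sector of smallest mass can always be placed adjacent to it with width $<\pi$, after which the remainder fits into a half‑circle and is handled greedily. Your approach instead builds a one‑parameter family of fans that \emph{all} carry the correct masses, and uses a connectedness argument on $\mathbb{R}/\mathbb{Z}$: the sets $\{q:\gamma_z(q)\geq\pi\}$ are closed, pairwise disjoint (since all $u^{(z)}_*>0$ forbids two sectors of width $\pi$), and by hypothesis cover the circle, so exactly one is nonempty --- whence a fixed index $z_0$ has $\gamma_{z_0}(q)\geq\pi$ for every $q$, contradicting $u^{(z_0)}_*<1-\delta(\mu)$ via the halfplane‑shrinking IVT. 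This is cleaner and avoids the paper's case analysis and the $a,b,c$ bookkeeping; on the other hand it is non‑constructive, whereas the paper's argument actually hands you the angles $\eta^{(1)},\dots,\eta^{(m)}$ that Theorem~\ref{thm:2dconstruction} then converts into alternatives. The one place to be careful is the full‑angular‑support reduction you invoke to make $\Phi$ a homeomorphism and $\gamma_z$ continuous: a limiting argument from $\mu_\epsilon=(1-\epsilon)\mu+\epsilon\cdot\text{uniform}$ does give fans with the right masses, but salience (strict inequality $\gamma_z<\pi$) need not survive the limit, so you should instead argue directly --- e.g., by noting that the closed‑cover‑of‑a‑connected‑space step only needs upper semicontinuity of each $\gamma_z$, which holds if you take $\Phi$ to be the right‑continuous inverse.
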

\begin{proof}
  Without loss of generality we can assume $\|\theta\| = 1$, and in
  the case of two dimensions that is equivalent to $\theta$ being
  parameterized by the interval $[0,2\pi)$ on the unit circle. For an
    interval $\mathcal{I}$ measuring angles in radians, let
  \begin{equation*}
    \text{Cone}(\mathcal{I}) = \left\{\left(\begin{array}{c}
      r\cos\eta \\ r\sin\eta \\ \end{array} \right):\,\eta\in
    \mathcal I,\,r>0\right\}.
  \end{equation*}
  Accordingly, let $\mu^C$ be a measure defined on the unit circle
  such that $\mu^C(\mathcal I) = \mu\left(\Cone(\mathcal I)\right)$ for every
  Lebesgue-measurable interval on $[0,2\pi)$. This implies that
    $\delta(\mu^C) = \delta(\mu)$. For radian angles $\eta^{(1)} <
    \eta^{(2)} < \dots < \eta^{(m+1)} = \eta^{(1)} + 2\pi$, we define
  \begin{equation*}
    B^{(z)} = \begin{cases}
      [\eta^{(1)},\eta^{(2)}] & z = 1 \\
      (\eta^{(z)},\eta^{(z+1)}] & z = 2,\dots,m-1\\
      (\eta^{(m)},\eta^{(m+1)}) & z = m\\ \end{cases}
  \end{equation*}
  for all $z\in\Z$. The asymmetry with respect to sets being $B^{(z)}$
  closed or open is due to the definition of $A^{(z)}$ in
  \eqref{eq:characteristicpolytopes}. For each $B^{(z)}$ to correspond
  to a convex set strictly contained in a halfspace, we require
  $\eta^{(z+1)} - \eta^{(z)} < \pi$. Our objective is to appropriately
  select the angles $(\eta^{(z)}:\,z\in\Z)$ so that $\mu^C(B^{(z)}) =
  u^{(z)}$. It suffices to consider only two cases. \\
    
  \textbf{Case
    1: $\max\{u_*\} < \delta(\mu)$}\\
 
  This is the simpler case, since all the probabilities from the
  predictive distribution are strictly smaller than \textit{any}
  halfspace measure. Arbitrarily choose $\eta^{(1)}$. Now we want to
  choose $\eta^{(2)} \in \left(\eta^{(1)},\eta^{(1)}+\pi\right)$ so
  that the interval contains prescribed measure $u_*^{(1)}$. The
  function $\eta^{(2)} \mapsto \mu^C\left( [\eta^{(1)},\eta^{(2)}]
  \right) $ is monotonically increasing, continuous, and takes values
  on $(0,\delta(\mu)]$. Since $u_*^{(z)} \leq \max u_* \leq
  \delta(\mu_k)$, the Intermediate Value Theorem allows us to choose
  $\eta^{(2)}$ so the interval has measure $u_*^{(1)}$. Continue in
  this way until all such angles $\eta^{(z)}$ are attained. \\

  \textbf{Case 2: $\max\{u_*\} \in
    \left[\delta(\mu),1-\delta(\mu)\right)$} \\

  Here, it is necessary to define the set $B^{(z)}$ corresponding to
  the largest predictive probability $u_*$ first, and that with the
  smallest second. Without loss of generality, suppose $u_*^{(1)} =
  \max u_*$ and $u_*^{(2)} = \min u_*$. Then choose $\eta^{(1)}$
  such that
  \begin{equation}
    \label{eq:pizzacondition}
    \mu^C\left([\eta^{(1)},\eta^{(1)}+\pi]\right)
    \in \left(\max u_*\,,\, \max u_* + \min u_*\right).
  \end{equation}
  This is possible because
  \begin{equation*}
    \left( \delta(\mu), 1-\delta(\mu) \right) \cap
    \left( \max u_*\,,\,\max u_* + \min u_* \right) \neq \varnothing,
  \end{equation*}
  due to the assumptions that $\max u_* \in
  \left[\delta(\mu),1-\delta(\mu)\right)$ and $\min u_* > 0$. Now
  define $\eta^{(2)}$ such that $\mu^C
  \left[\eta^{(1)},\eta^{(2)}\right] = \max u_*$.

  Now we define the interval corresponding to $\min u_*$ directly
  adjacent. Suppose $\mu^C(\eta^{(2)},\eta^{(2)}+\pi] > \min
  u_*$. Then by the Intermediate Value Theorem, there exists some
  $\eta^{(3)}$ such that $\mu^C(\eta^{(2)},\eta^{(3)}] = \min
  u_*$. Otherwise, suppose that $\mu^C(\eta^{(m+1)}-\pi,\eta^{(m+1)})
  > \min u_*$. Again, by the Intermediate Value Theorem, we can find
  $\eta^{(m)}$ less than $\pi$ radians from $\eta^{(m+1)}$ such that
  $\mu^C(\eta^{(m)},\eta^{(m+1)}) = \min u_*$.

  We claim that these are the only two possibilities. By way of
  contradiction, suppose that neither of these scenarios are true; in
  other words,
  \begin{align*}
    \mu^C[\eta^{(2)},\eta^{(2)}+\pi] &\leq \min u_* \\
    \mu^C[\eta^{(m+1)}-\pi,\eta^{(m+1)}] &\leq \min u_*.
  \end{align*}
  We can decompose these intervals into non-overlapping parts. Define
  \begin{align*}
    a &= \mu^C(\eta^{(2)}+\pi,\eta^{(1)}+2\pi] \\
    b &= \mu^C(\eta^{(2)},\eta^{(1)}+\pi] \\
    c &= \mu^C(\eta^{(1)}+\pi,\eta^{(2)}+\pi).    
  \end{align*}
  \begin{figure}[t]
    \centering
    \label{fig:minpizza}
    \includegraphics[width=0.5\textwidth]{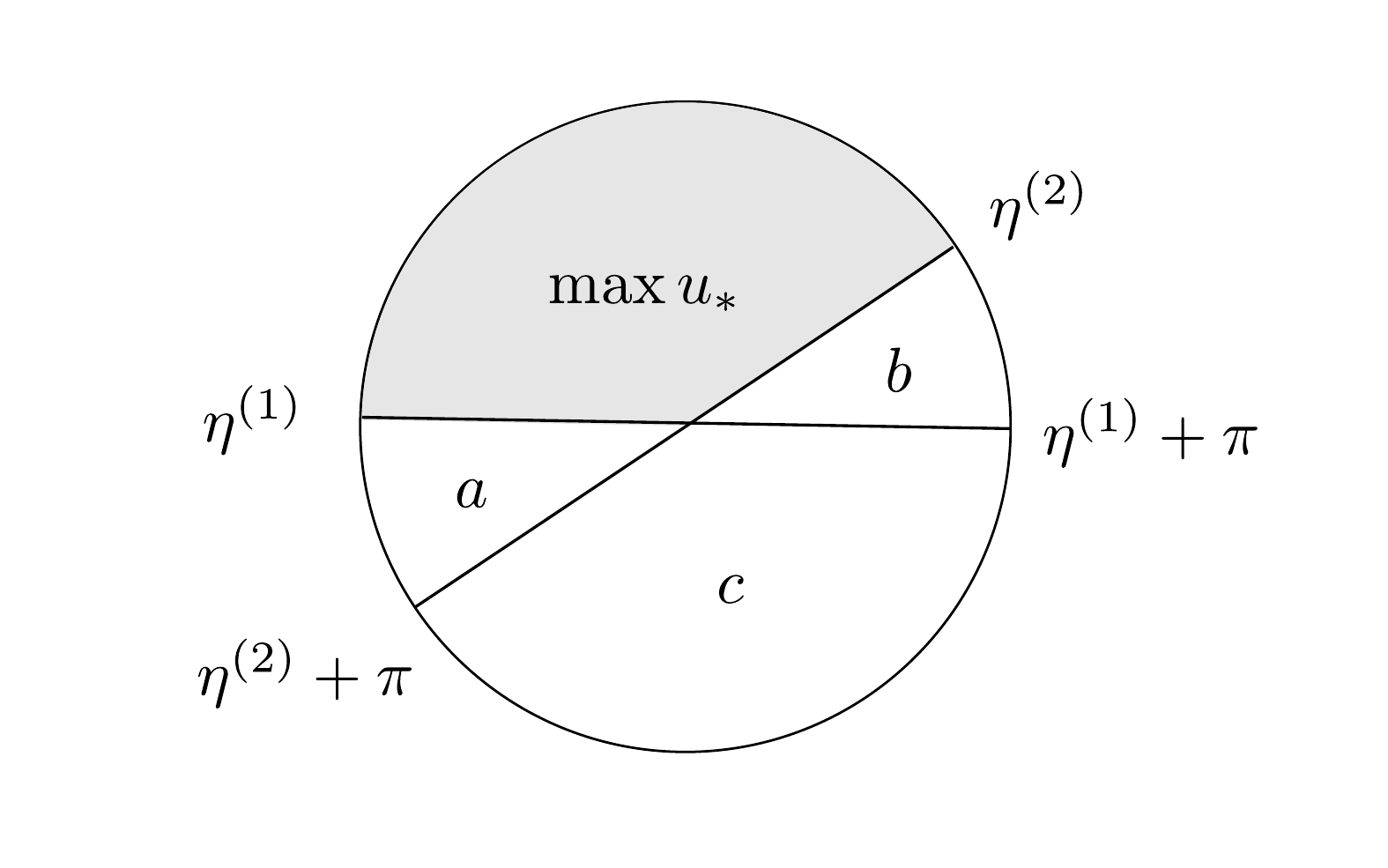}
    \caption{Diagram showing the distribution of probabilistic mass
      partitioned in unit circle.}
  \end{figure}
  Suppose that $\max\{a,b\} + c \leq \min u_*$. The measure of the
  union of the three intervals $1-\max u_* = a + b + c$, which implies
  $1-\max u_* \leq \min u_* + \min\{a,b\}$.  Finally, since the
  smallest component of $u_*$ must be smaller in magnitude than the sum
  of the other non-maximal components,
  \begin{equation*}
    \max\{a,b\} + c \leq \min u_* \leq 1-\max u_* -\min u_* \leq \min\{a,b\},
  \end{equation*}
  implying among other things that $b = \min u_*$ in this
  scenario. However, this is a contradiction, since we originally
  chose $\eta^{(1)}$ such that $b+c < \max u_* + \min u_*$ due to
  \eqref{eq:pizzacondition}. Therefore, this scenario is not possible,
  and we can always find an interval with probabilistic mass strictly
  greater than $\min u_*$ directly adjacent to an interval with
  maximal probabilistic mass.
      
  In all cases, we have defined the first two intervals, and the
  remaining unallocated region of the unit circle is strictly
  contained in an interval of width less than $\pi$ radians. Thus, one
  can easily define a partition as in Case 1, and every subsequent
  interval would necessarily have to have length strictly less than
  $\pi$ radians. To recover the convex cones, let $A^{(z)} =
  \Cone\left(B^{(z)}\right)$ for every $z\in\Z$, and it is clear that
  $A^{(z)}$ contains the desired probabilistic mass.
\end{proof}

Lemma~\ref{lem:2dpolyfan} gives a way to construct polyhedral fans
with the desired probabilistic mass. We are interested in finding a
set of alternatives that represents this polyhedral fan, and this is
exactly what Theorem~\ref{thm:2dconstruction} does in the
two-dimensional case. The critical condition required is for the set
of alternatives $\X$ to have non-empty interior.

\begin{theorem}
  \label{thm:2dconstruction}
  Suppose $d = 2$ and $m > 2$. Then given a measure $\mu$ that is
  absolutely continuous with respect to Lebesgue measure and an
  optimal predictive distribution $u_*$, if $\Int(\X) \neq
  \varnothing$ and $\max u_* < 1-\delta(\mu)$, then there exists $X
  \in \X^m$ such that $u(X) = u_*$.
\end{theorem}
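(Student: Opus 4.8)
The plan is to build directly on Lemma~\ref{lem:2dpolyfan}, which already supplies angular sectors carrying the prescribed masses; what remains is purely constructive, namely to exhibit $m$ alternatives in $\X$ whose induced regions $A^{(z)}(X)$ agree, up to a $\mu$-null set, with those sectors. First I would recall the geometric meaning of \eqref{eq:characteristicpolytopes} when $d=2$: each constraint $\theta^T(x^{(z)}-x^{(i)})\ge 0$ is positively homogeneous in $\theta$, so $A^{(z)}(X)$ is a cone over an arc of the unit circle, and away from its two boundary rays it is exactly the set of directions at which $x^{(z)}$ is the unique maximiser of $i\mapsto\theta^T x^{(i)}$. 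When $x^{(1)},\dots,x^{(m)}$ are in convex position this arc is the normal cone of the polygon $\Hull\{x^{(1)},\dots,x^{(m)}\}$ at the vertex $x^{(z)}$. Since $\mu$ is absolutely continuous and a pair of rays has two-dimensional Lebesgue measure zero, the tie-breaking convention in \eqref{eq:characteristicpolytopes} does not affect $\mu(A^{(z)}(X))$. Hence it suffices to produce a convex $m$-gon whose normal fan, read off in cyclic order, is the fan $(\Cone(B^{(z)}):z\in\Z)$ of Lemma~\ref{lem:2dpolyfan}, and then to slide its vertices into $\X$.

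For the construction, let $\eta^{(1)}<\dots<\eta^{(m)}<\eta^{(m+1)}=\eta^{(1)}+2\pi$ be angles as in Lemma~\ref{lem:2dpolyfan}, so that $B^{(z)}$ is the arc between the unit vectors $n_z=(\cos\eta^{(z)},\sin\eta^{(z)})$ and $n_{z+1}$, every gap $\eta^{(z+1)}-\eta^{(z)}$ lies in $(0,\pi)$, $\sum_z(\eta^{(z+1)}-\eta^{(z)})=2\pi$, and $\mu(\Cone(B^{(z)}))=u_*^{(z)}$ for all $z$. Because all gaps are strictly below $\pi$ while summing to $2\pi$, the $n_z$ lie in no closed halfspace through the origin, so $0\in\Int(\Hull\{n_1,\dots,n_m\})$ and there exist weights $\ell_z>0$ with $\sum_z \ell_z n_z=0$. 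With $R$ the rotation by $\pi/2$, set $e_z=\ell_z R n_z$, so $\sum_z e_z=R\sum_z\ell_z n_z=0$; fixing a starting vertex and appending the edge vectors $e_1,\dots,e_m$ in order produces a closed polygonal path that is a simple convex polygon traversed counterclockwise, since its successive exterior angles are the gaps $\eta^{(z+1)}-\eta^{(z)}\in(0,\pi)$ and sum to $2\pi$ while its edges have positive lengths $\ell_z$. The outward normal of the edge from $x^{(z-1)}$ to $x^{(z)}$ is $n_z$, so the normal cone at the vertex $x^{(z)}$ is exactly $\Cone(B^{(z)})$, with the cyclic labelling of vertices matching that of the sectors.

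It remains to place the vertices in $\X$. The regions $A^{(z)}(X)$ are invariant under any positive rescaling $x\mapsto cx$ and any common translation $x\mapsto x+v$, since $\theta^T(cx^{(z)}+v)-\theta^T(cx^{(i)}+v)=c\,\theta^T(x^{(z)}-x^{(i)})$ has the same sign as $\theta^T(x^{(z)}-x^{(i)})$. Choosing $p\in\Int(\X)$ and an open Euclidean ball around $p$ contained in $\X$, I rescale the polygon by a small enough $c>0$ that its diameter is below the radius of that ball and then translate it so that all of $x^{(1)},\dots,x^{(m)}$ lie inside the ball; this defines $X=(x^{(1)},\dots,x^{(m)})\in\X^m$. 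For this $X$ each $A^{(z)}(X)$ coincides with $\Cone(B^{(z)})$ except on a pair of rays, so $\mu(A^{(z)}(X))=\mu(\Cone(B^{(z)}))=u_*^{(z)}$ for every $z\in\Z$, i.e.\ $u(X)=u_*$, as claimed.

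The step I expect to be the crux is the passage from ``a fan with the right masses exists'' (Lemma~\ref{lem:2dpolyfan}) to ``that fan is the normal fan of a genuine convex $m$-gon'', which reduces to solving the closure equation $\sum_z\ell_z n_z=0$ in strictly positive unknowns; this is precisely where the structural output of the lemma — all sector widths strictly below $\pi$, total angle $2\pi$ — is used, via the fact that $n_1,\dots,n_m$ then positively span $\R^2$ (equivalently, $0$ lies in the interior of their convex hull). The remaining points — matching the cyclic order of vertices to the indices, discarding measure-zero boundaries, and shrinking into $\Int(\X)$ — are routine, so I would keep them brief.
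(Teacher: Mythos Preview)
Your proof is correct and broadly parallel to the paper's, but it diverges at the key constructive step. Both arguments start from Lemma~\ref{lem:2dpolyfan} and use step directions $v^{(z)}=R n_z$ perpendicular to the sector-boundary rays $n_z=(\cos\eta^{(z)},\sin\eta^{(z)})$. The paper, however, never solves the closure equation: it picks $x^{(1)}\in\Int(\X)$ and sets $x^{(z+1)}=x^{(z)}+c^{(z+1)}v^{(z+1)}$ with \emph{arbitrary} $c^{(z+1)}>0$ chosen only to keep $x^{(z+1)}\in\Int(\X)$; the path need not close into a convex polygon. Equality $A^{(z)}(X)=\bar A^{(z)}$ is then obtained by a partition trick: the adjacent constraints give $A^{(z)}(X)\subseteq\bar A^{(z)}$, and since both families partition $\R^2$, the inclusions must be equalities. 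Your route instead invokes the fact that the $n_z$ positively span $\R^2$ (exactly because all gaps lie in $(0,\pi)$) to find strictly positive $\ell_z$ with $\sum_z\ell_z n_z=0$, builds a genuine convex $m$-gon, reads off the regions as its normal fan, and then scales and translates into $\Int(\X)$. This is the classical Minkowski-type reconstruction; it is slightly heavier machinery but has the virtue of handling the cyclic wrap-around between indices $m$ and $1$ transparently, whereas the paper's partition argument leaves that step implicit.
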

\begin{proof}
  First, use Lemma~\ref{lem:2dpolyfan} to construct a polyhedral fan
  with the correct probabilistic weights.  Using the angles
  $\eta^{(1)},\dots,\eta^{(m)}$ constructed in the Lemma, we
  can define separating hyperplanes $v^{(1)},\dots,v^{(m)}$ by setting
  $v^{(z)} = \left( -\sin \eta^{(z)},\,\cos
  \eta^{(z)}\right)$. Then we have
  \begin{equation*}
    \bar A^{(z)} = \left\{\theta:\,\begin{array}{c}
    \theta^T v^{(z)} > 0 \\
    \theta^T v^{(z+1)} \leq 0 \\ \end{array} \right\}.
  \end{equation*}
  The goal now is to define the alternatives. First, choose $x^{(1)}
  \in \Int(\X)$. Now define $x^{(z+1)} = x^{(z)} + c^{(z+1)}
  v^{(z+1)}$, where $c^{(z+1)} > 0$ is a positive scaling that ensures
  $x^{(z+1)}\in \Int(\X)$ if $x^{(z)} \in \Int(\X)$. Now we can
  equivalently write

  \begin{figure}[t]
    \centering
    \label{fig:intmultidefine}
    \includegraphics[width=0.3\textwidth]{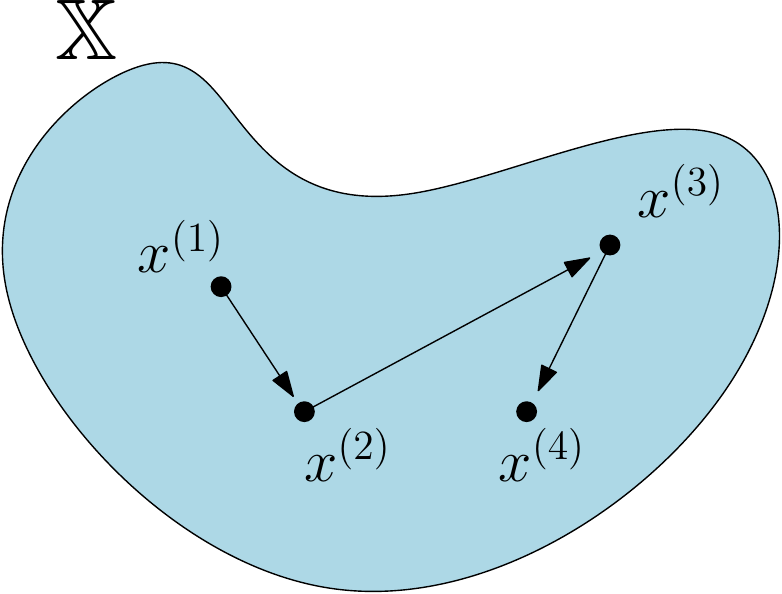}
    \caption{Iterative selection of alternatives. Since each $x^{(z)}$
      is in the interior of $\X$, it is always possible to select
      $x^{(z+1)}$ to maintain a specific direction for
      $x^{(z+1)}-x^{(z)}$.}
  \end{figure}
  
  \begin{equation*}
    \bar A^{(z)} = \left\{\theta:\,\begin{array}{c}
    \theta^T\left(x^{(z)} - x^{(z-1)}\right) > 0 \\
    \theta^T\left(x^{(z)} - x^{(z+1)}\right) \geq 0. \\
    \end{array} \right\}.
  \end{equation*}
  Let $X = (x^{(1)},\dots,x^{(m)})$. It remains to show that
  $A^{(z)}(X) = \bar A^{(z)}$. Because $A^{(z)}(X)$ has the same linear
  inequalities as $\bar A^{(z)}$, it is clear that $A^{(z)}(X) \subseteq
  \bar A^{(z)}$ for all $z$. Now suppose there exists some $\theta \in
  \bar A^{(z)}$. Since $(\bar A^{(z)}:\,z\in\Z)$ is a partition of
  $\mathbb{R}^2$, it is clear that $\theta \notin A^{(z')}$ for
  $z'\neq z$, and thus, $\theta \notin A^{(z')}(X)$. Since
  $\left(A^{(z)}(X):\,z\in\Z\right)$ is also a partition of
  $\mathbb{R}^2$, it must be that $\theta\in A^{(z)}(X)$. This
  directly implies $\bar A^{(z)} = A^{(z)}(X)$, and so $u^{(z)}(X) =
  \mu(A^{(z)}) = u_*^{(z)}$.
\end{proof}
Theorem~\ref{thm:2dconstruction} shows that in the case of two
dimensions, a set of $m$ alternatives can be generated to ensure that
the entropy of the posterior distribution of $\bm\theta$ maximally
decreases. This result can be generalized to arbitrary dimension by
selecting a two dimensional subspace and leveraging the previous
result.

\begin{theorem}
  \label{thm:2dslices}
  Suppose $\Int(\X)\neq \varnothing$ and $u_* > 0$. If $\max u_*
  < 1-\delta(\mu_k)$, then there exists $X_k =
  (x^{(1)},x^{(2)},\dots,x^{(m)}) \in \X^m$ such that $u_k(X_k) =
  u_*$. Further, if $\max u_* > 1-\delta(\mu_k)$, then finding such a
  question is not possible.
\end{theorem}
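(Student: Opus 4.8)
The plan is to reduce to the two‑dimensional construction (Theorem~\ref{thm:2dconstruction}) by slicing $\R^d$ with a well‑chosen two‑plane, and to handle the impossibility claim by a direct halfspace estimate. If $m=2$ the positive assertion is precisely Corollary~\ref{cor:halfspaceselection}, so for the forward direction assume $m>2$ (and, implicitly, $d\geq 2$). The subtlety is that one cannot slice along an arbitrary plane: if $\Pi_V$ is orthogonal projection onto a subspace $V$ and $\mu^V_k = \mu_k\circ\Pi_V^{-1}$, then for $v\in V$ we have $\mu^V_k(\{w\in V:\,w^Tv\geq 0\}) = \mu_k(\{\theta:\,\theta^Tv\geq 0\})$, so taking infima shows $\delta(\mu^V_k)\geq\delta(\mu_k)$, and a generic slice could strictly enlarge the depth and destroy the hypothesis $\max u_* < 1-\delta(\mu_k)$. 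To avoid this, first note that by Lemma~\ref{lem:halfspacecontinuous} the map $v\mapsto\mu_k(\{\theta:\,\theta^Tv\geq 0\})$ is continuous on the unit sphere, which is compact, so the infimum in \eqref{eq:halfspacedepth} is attained at some $v^*$. Then choose any two‑dimensional $V$ with $v^*\in V$: testing against $v^*$ gives $\delta(\mu^V_k)\leq\delta(\mu_k)$, hence $\delta(\mu^V_k)=\delta(\mu_k)$, while $\mu^V_k$ is absolutely continuous with respect to Lebesgue measure on $V\cong\R^2$.

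With this slice fixed, transport the two‑dimensional argument into $V$. Since $\max u_* < 1-\delta(\mu^V_k)$ and $u_*>0$, Lemma~\ref{lem:2dpolyfan} yields angles $\eta^{(1)}<\dots<\eta^{(m)}$, and, writing $e_1,e_2$ for an orthonormal basis of $V$ and $v^{(z)} = -\sin\eta^{(z)}\,e_1 + \cos\eta^{(z)}\,e_2 \in V$, the cones these normals cut out of $V$ carry $\mu^V_k$‑mass exactly $u^{(z)}_*$. Recovering the alternatives then proceeds as in the proof of Theorem~\ref{thm:2dconstruction}, now within $V$: pick $x^{(1)}\in\Int(\X)$ and set $x^{(z+1)} = x^{(z)} + c^{(z+1)}v^{(z+1)}$ with $c^{(z+1)}>0$ small enough that $x^{(z+1)}$ stays in the open nonempty set $\Int(\X)$. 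Every difference $x^{(z)}-x^{(i)}$ lies in $V$, so $\theta^T(x^{(z)}-x^{(i)})$ depends on $\theta$ only through $\Pi_V\theta$; hence each $A^{(z)}(X)$ equals $\Pi_V^{-1}(\bar A^{(z)})$ for the corresponding planar cone $\bar A^{(z)}$, and therefore $u^{(z)}_k(X) = \mu_k(A^{(z)}(X)) = \mu^V_k(\bar A^{(z)}) = u^{(z)}_*$, as required.

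For the converse, suppose $\max u_* > 1-\delta(\mu_k)$ but some $X_k\in\X^m$ satisfies $u_k(X_k)=u_*$; let $j$ attain $\max u_*$. Because $u_*>0$ the $m$ alternatives are not all equal (otherwise some predictive probability vanishes), so there is $i\neq j$ with $v:=x^{(j)}-x^{(i)}\neq 0$. From \eqref{eq:characteristicpolytopes}, $A^{(j)}(X_k)\subseteq\{\theta:\,\theta^Tv\geq 0\}$, so by absolute continuity of $\mu_k$ (the hyperplane $\theta^Tv=0$ is null),
\begin{equation*}
  \max u_* = \mu_k\!\left(A^{(j)}(X_k)\right) \leq \mu_k\!\left(\{\theta:\,\theta^Tv\geq 0\}\right) = 1 - \mu_k\!\left(\{\theta:\,\theta^T(-v)\geq 0\}\right) \leq 1 - \delta(\mu_k),
\end{equation*}
contradicting the hypothesis; hence no such $X_k$ exists.

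The heart of the argument, and the step I expect to be the main obstacle, is the forward direction — specifically the realization that the slicing plane must be chosen to contain a depth‑minimizing direction, since projection can only increase halfspace depth. Once that is in place, the remaining work is the routine bookkeeping needed to re‑run Lemma~\ref{lem:2dpolyfan} and the alternative‑recovery step of Theorem~\ref{thm:2dconstruction} inside the slice $V$ and to lift the resulting cones back to $\R^d$.
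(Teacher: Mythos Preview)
Your proposal is correct and follows essentially the same approach as the paper: select a two-dimensional subspace containing a depth-attaining direction $v^*$ so that the projected measure inherits $\delta(\mu_k)$, invoke the planar construction (Lemma~\ref{lem:2dpolyfan}/Theorem~\ref{thm:2dconstruction}) there, and lift the resulting alternatives back to $\R^d$; the impossibility claim via the halfspace inclusion $A^{(j)}(X_k)\subseteq\{\theta:\theta^Tv\geq 0\}$ is likewise the paper's argument. Your write-up is in fact slightly more careful than the paper's on two points --- you explain \emph{why} the slice must contain the depth minimizer (projection can only raise depth), and you recover the alternatives directly inside $\Int(\X)$ rather than mapping back through the isometry $\Gamma$.
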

\begin{proof}
  We begin by proving the last claim of the theorem. Since any
  $A^{(z)}(X_k)$ can be contained by a halfspace centered at the
  origin, and since all such halfspaces have probabilistic mass less
  than or equal to $1-\delta(\mu_k)$, then we must have
  $\mu_k(A^{(z)}(X)) \leq 1-\delta(\mu_k)$ for every $z\in\Z$ and for
  every $X_k\in\X^m$.
  
  Now we show the main result of the theorem. There exists some $\bar
  \beta \in\mathbb{R}^m\setminus \{0\}$ such that $\mu_k\left( \{\theta:\,
  \theta^T \bar \beta \geq 0\}\right) = \delta(\mu_k)$, since $\mu_k$ has
  density $p_k$ and is continuous. Let $\mathcal{H} = \{\theta:\,\theta^T
  \bar \beta = 0\}$ denote the hyperplane. Now choose a two-dimensional
  subspace $L$ such that $L \perp \mathcal{H}$. For $\nu \in L$,
  define density $p_k^L$ as
  \begin{equation*}
    p_k^L(\nu) = \int_{\omega \in L^\perp}
    p_k(\nu + \omega)\,\lambda_{d-2}(d\omega),
  \end{equation*}
  where $\lambda_{d-2}$ is $(d-2)$-Lebesgue measure, and let $\mu_k^L$
  denote measure induced by density $p_k^L$. For $\beta \in L$, we
  have
  \begin{align*}
    \mu_k^L\left(\{\nu\in L:\, \nu^T \beta \geq 0\}\right)
    &= \int_{\nu\in L:\, \nu^T \beta\geq 0}
      p_k^L(\nu)\,\lambda_2(d\nu) \\
    &= \int_{\nu\in L:\, \nu^T \beta\geq 0}
      \int_{\omega\in L^\perp}
      p_k(\nu+\omega)\,\lambda_{m-2}(d\omega) \,\lambda_2(d\nu) \\
    &= \int_{(\nu,\omega) \in (L\times L^\perp):\,
        \left(\nu + \omega\right)^T \beta \geq 0}
      p_k(\nu+\omega)\,\lambda_d(d\nu\times d\omega) \\
    &= \int_{\theta:\,\theta^T \beta\geq 0} p_k(\theta)\,
      \lambda(d\theta) = \mu_k(\{\theta:\,\theta^T \beta \geq 0\}).        
  \end{align*}
  Thus, $\mu_k^L$ is consistent with $\mu_k$. In particular,
  $\mu_k^L(\{\theta:\, \theta^T \bar \beta \geq 0\}) = \delta(\mu_k)$,
  and thus $\delta(\mu_k^L) \leq \delta(\mu_k)$, meaning we can use
  the previous Theorem~\ref{thm:2dconstruction} to find an appropriate
  comparative question.

  In particular, let $\gamma_1$ and $\gamma_2$ denote two orthogonal
  $d$-vectors that span $L$, and let $\Gamma \in \mathbb{R}^{d\times
    2}$ contain $\gamma_1$ and $\gamma_2$ as its columns. To use the
  Theorem~\ref{thm:2dconstruction}, we pass $\mu_k^L \circ \Gamma$,
  and to convert the resulting question $X_k^L =
  (x^{(1)},\dots,x^{(m)})$ back into $d$-dimensional space, take $X_k
  = (\Gamma x^{(1)},\dots,\Gamma x^{(m)})$.
\end{proof}
Theorem~\ref{thm:2dslices} provides one possible construction for a
question $X_k$ that gives a desirable predictive distribution,
although there may be others. However, it is clear that if the
halfspace depth $\delta(\mu_k)$ is too large, it will not always be
possible to find a question that can yield the optimal predictive
distribution, even if we can construct questions in the continuum. But
while it may not be possible to \textit{maximally} reduce the entropy
of the posterior distribution, we may choose a question $X_k$ that can
still reduce entropy by a constant amount at each time epoch.

We conclude the section by showing that entropy in the linear
classifier $\bm \theta$ can be reduced linearly, even if not optimally.
\begin{theorem}
  \label{thm:continuumlower}
  Suppose $\X = \mathbb{R}^d$, and let $\bm\sigma_k =
  \left(\max\{u_*\} - \left(1 - \delta(\bm\mu_k)
  \right)+\epsilon\right)^+$. Then the following upper bound holds
  when $m=2$ for $\epsilon = 0 $ and when $m>2$ for arbitrarily small
  $\epsilon > 0$.
  \begin{align*}
    K\cdot C(f) - \sup_\pi \mathbb{E}^\pi \left[
      I(\bm\theta;\Yhist{k}) \right] &\leq
    r(f) \left(1 + \frac{1}{m-1}\right) \sum_{k=1}^K
    \mathbb{E}^\pi[\bm\sigma_k^2] \\
    &\leq K\cdot r(f) \left(1 + \frac{1}{m-1}\right)
    \left((\max\{u_*\} - 1/2 + \epsilon)^+\right)^2.
  \end{align*}  
\end{theorem}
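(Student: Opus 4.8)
The plan is to construct a single policy $\pi^\dagger$ that, at every epoch, stays within the claimed gap of the channel capacity; since the left-hand side involves a supremum over $\pi$, it then suffices to verify the bound for this one policy. By Corollary~\ref{cor:kstepentropy}, for any $\pi$ we have $K\cdot C(f) - \E^\pi\!\bigl[I(\bm\theta;\bmYhist{K})\bigr] = \E^\pi\bigl[\sum_{k=1}^{K}\bigl(C(f)-\varphi(u_k(X_k);f)\bigr)\bigr]$, so the whole problem reduces to controlling a single epoch's shortfall $C(f)-\varphi(u_k(X_k);f)$. By the quadratic upper bound of Theorem~\ref{thm:taylorbounds}, this shortfall is at most $r(f)\,\|u_k(X_k)-u_*\|^2$, so it is enough to choose $X_k$ whose predictive distribution $u_k(X_k)$ is $\ell_2$-close to the channel optimum $u_*$.

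I would next identify the reachable predictive distributions. Because $\X=\R^d$ has nonempty interior, Corollary~\ref{cor:halfspaceselection} (when $m=2$) and Theorem~\ref{thm:2dslices} (when $m>2$) guarantee that every $u\in\Delta^m$ with $u>0$ and $\max_i u^{(i)}\le 1-\delta(\bm\mu_k)$ — with the inequality strict when $m>2$ — equals $u_k(X_k)$ for some $X_k\in\X^m$. The per-epoch task is therefore the Euclidean projection of $u_*$ onto the capped simplex $\{u\in\Delta^m:\ u>0,\ \max_i u^{(i)}\le c_k\}$, where $c_k := 1-\delta(\bm\mu_k)$ if $m=2$ and $c_k := 1-\delta(\bm\mu_k)-\epsilon$ if $m>2$; with this choice $\bm\sigma_k=(\max u_* - c_k)^+$. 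The key estimate is that the projection has squared distance at most $\bigl(1+\tfrac1{m-1}\bigr)\bm\sigma_k^2$ from $u_*$. In the generic case where $u_*$ attains its maximum at a unique coordinate, say the first, set $u^{(1)}=u_*^{(1)}-\bm\sigma_k$ and $u^{(i)}=u_*^{(i)}+\bm\sigma_k/(m-1)$ for $i\neq 1$; then $\|u-u_*\|^2=\bm\sigma_k^2+(m-1)(\bm\sigma_k/(m-1))^2=\bigl(1+\tfrac1{m-1}\bigr)\bm\sigma_k^2$, and $u$ is feasible: it sums to one, stays positive, its first coordinate equals $c_k$, and each inflated coordinate still lies below $c_k$ because $\delta(\bm\mu_k)\le\tfrac12$ — any origin-centered hyperplane places at most half the mass on each side — so $c_k$ is essentially at least $\tfrac12$, while every non-maximal coordinate of $u_*$ is at most $1-\max u_*$, which is essentially at most $\tfrac12$. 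Let $\pi^\dagger$ be the policy that at epoch $k$ selects an $X_k$ realizing this $u$.

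Assembling the bound is then routine. For $\pi^\dagger$ we have, pathwise, $C(f)-\varphi(u_k(X_k);f)\le r(f)\,\|u_k(X_k)-u_*\|^2\le r(f)\bigl(1+\tfrac1{m-1}\bigr)\bm\sigma_k^2$; summing over $k$, taking $\E^{\pi^\dagger}$, and invoking Corollary~\ref{cor:kstepentropy} yields the first displayed inequality for $\pi=\pi^\dagger$, hence for the supremum over all policies. For the second inequality, $\delta(\bm\mu_k)\le\tfrac12$ gives $\bm\sigma_k=(\max u_*-(1-\delta(\bm\mu_k))+\epsilon)^+\le(\max u_*-\tfrac12+\epsilon)^+$, which is deterministic, so $\sum_{k=1}^{K}\E^{\pi^\dagger}[\bm\sigma_k^2]\le K\bigl((\max u_*-\tfrac12+\epsilon)^+\bigr)^2$.

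The step I expect to be the main obstacle is the capped-simplex projection estimate: verifying feasibility of the explicit redistribution and, in particular, treating the degenerate case in which several coordinates of $u_*$ tie for the maximum. In that case all of the tied coordinates must be capped, and each of them is then forced to be close to $\tfrac12$ — which keeps $\bm\sigma_k$ small — but one must still argue that the redistribution does not inflate the constant $1+\tfrac1{m-1}$. The remaining bookkeeping, namely the strict-versus-nonstrict inequality that explains why $\epsilon=0$ suffices for $m=2$ but $\epsilon>0$ is needed for $m>2$, is handled transparently by the definition of $c_k$.
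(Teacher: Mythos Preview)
Your proposal is correct and follows essentially the same route as the paper: reduce to per-epoch shortfalls via Corollary~\ref{cor:kstepentropy}, construct the same explicit redistribution $\bar u_k$ (shave $\bm\sigma_k$ from the maximal coordinate and spread it uniformly over the others), realize $\bar u_k$ via Corollary~\ref{cor:halfspaceselection} or Theorem~\ref{thm:2dslices}, and then apply the quadratic bound of Theorem~\ref{thm:taylorbounds} together with $\delta(\bm\mu_k)\le\tfrac12$. The paper dispatches your flagged tied-maximum case in one line by noting that whenever $\bm\sigma_k>0$ the maximum of $u_*$ is unique (and when $\bm\sigma_k=0$ the choice is irrelevant), so you need not treat it as a separate obstacle.
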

\begin{proof}
  We start with the case when $m > 2$. Fix any small $\epsilon >
  0$. Let $z' = \argmax\{u_*\}$. We write equality because in the
  cases where $\bm\sigma_k > 0$, the maximum component is unique;
  otherwise, when $\bm \sigma_k = 0$, the choice of $z'$ is
  irrelevant. We construct an ``approximate predictive distribution''
  $\bar u_k$ such that
  \begin{equation*}
    \bm{\bar u}_k^{(z)} = \begin{cases}
      u_*^{(z)} - \bm\sigma_k & z = z' \\
      u_*^{(z)} + \bm\sigma_k / (m-1) & z \neq z'\\
    \end{cases}
  \end{equation*}
  This new vector $\bm{\bar{u}}_k$ is the projection of $u_*$ onto the
  set $\{u\in\Delta^m:\,\max\{u\} \leq (1-\delta(\mu_k)) -\epsilon\}$. This
  ``approximate predictive distribution'' is chosen to minimize the
  $L_2$ distance from optimal $u_*$, and therefore maximize entropy
  reduction.

  One can show that $\max\{\bar u_k\} < 1-\delta(\mu_k)$, and $\|\bar
  u_k - u_*\|^2 \leq \sigma_k^2 \left( 1 + 1/(m-1) \right)$. Now we
  can construct $\bar X_k$ such that $u_k(\bar X_k) = \bar u_k$ at every
  step, which is possible by Theorem~\ref{thm:2dslices} since $\bar u
  > 0$ and $\max\{\bar u_k\} < 1- \delta(\mu_k)$. Now we use
  Theorem~\ref{thm:taylorbounds} to show
  \begin{align*}
    K\cdot C(f) - \sup_\pi I^\pi(\bm\theta;\Yhist{k})
    &\leq \sum_{k=1}^K \left(C(f) - \mathbb{E}_k^\pi\left[\varphi(u_k(\bar X_k)\,;\,f)\right]\right) \\
    &= \sum_{k=1}^K \left(C(f) - \mathbb{E}_k^\pi\left[\varphi(\bm{\bar u_k}\,;\,f)\right]\right) \\
    &\leq \sum_{k=1}^K r(f) \,\mathbb{E}_k^\pi\left[\|\bm{\bar u_k} - u_*\|^2 \right]\\
    &= r(f) \left(1 + \frac{1}{m-1}\right) \sum_{k=1}^K \mathbb{E}^\pi_k\left[\bm\sigma_k^2\right].
  \end{align*}
  And since $1-\delta(\mu_k) \geq 1/2$, it follows that $\sigma_k \leq
  \left(\max\{u_*\} - 1/2 + \epsilon\right)^+$, regardless of
  $\mu_k$. The proof is analogous for the $m=2$ case: the only change
  required is to set $\epsilon = 0$, because by
  Corollary~\ref{cor:halfspaceselection}, we can find a question if
  $\max\{u_*\} \leq 1- \delta(\pi_k)$, where the inequality need not be
  strict. 
\end{proof}

Putting Theorem~\ref{thm:continuumlower} together with
Corollary~\ref{cor:kstepentropy} shows that if the alternative set
$\X$ has non-empty interior, the expected differential entropy of
linear classifier $\bm\theta$ can be reduced at a linear rate, and
this is optimal up to a constant factor.

Finally, recall in Section~\ref{subsubsec:symmetricnoise} we defined
the case of a symmetric noise channel. There, $u_*^{(z)} = 1/m$ for
all $z\in\Z$. In the pairwise comparison case, $\max u_* = 1/2 \leq
1-\delta(\mu)$ for all measures $\mu$. In the multiple comparison
case, $\max u_* = 1/m < 1/2 \leq 1-\delta(\mu)$ for all measures
$\mu$.  Thus, regardless of $m$, in the continuum setting, a set of
alternatives can always be constructed to achieve a uniform predictive
distribution, and therefore optimally reduce posterior entropy.


\section{Misclassification Error}
\label{sec:misclass}

The entropy pursuit policy itself is intuitive, especially when the
noise channel is symmetric. However, differential entropy as a metric
for measuring knowledge of the user's preferences is not
intuitive. One way to measure the extent of our knowledge about a
user's preferences is testing ourselves using a randomly chosen
question and estimating the answer after observing a response from the
user. This probability we get the answer wrong called
misclassification error.

Specifically, we sequentially ask questions $X_k$ and observe signals
$\bm Y_k(X_k)$ at time epochs $k=1,\dots,K$. After the last question,
we are then posed with an evaluation question. The evaluation question
will be an $n$-way comparison between randomly chosen alternatives,
where $n$ can differ from $m$. Denote the evaluation question as $\bm
S_K \in \X^n$, where a particular evaluation question $S_K =
\left(s^{(1)},\dots,s^{(n)}\right)$. The evaluation question is chosen
at random according to some unknown distribution. Denote the
model-consistent answer as $\bm W_K(S_K) = \min\left\{\argmax_{w\in\W}
\bm\theta^T s_w\right\}$, where the minimum serves as a tie-breaking
rule. The goal is to use history $\bmYhist{K}$ and the question $\bm
S_K$ to predict $\bm W_K(S_K)$. Let $\hat W_K$ denote the candidate
answer that depends on the chosen evaluation question response
history. Then our goal for the adaptive problem is to find a policy
that minimizes
\begin{equation}
  \label{eq:misclassify}
  \mathcal{E}_K^\pi = \E^\pi \left[ \inf_{\hat W_K\in\W}
    \mathbb{P}\left(\bm W_K(\bm S_K) \neq
    \hat W_K \,\middle|\, \bmYhist{K}\,,\bm S_K\right)\right],
\end{equation}
and one can do this by adaptively selecting the best question $X_k$
that will allow us to learn enough about the user's preferences to
correctly answer evaluation question $\bm S_k$ with high
certainty. Let $\mathcal{E}_K^* = \inf_\pi \mathcal{E}_K^\pi$ be the
misclassification error under the optimal policy, assuming it is
attained.

We make several reasonable assumptions on the dependence of $\bm S_K$
and $\bm W_K(\bm S_K)$ with the model-consistent response $\bm
Z_k(X_k)$ and signal $\bm Y_k(X_k)$ from the learning question $X_k$.
\begin{assume}{Evaluation Question Assumptions}
  \label{assm:misclassifynoise}
  For evaluation question $\bm S_K = S_K$ and corresponding
  model-consistent answer $\bm W_K(S_K)$, we assume
  \begin{itemize}
    \item Evaluation question $\bm S_K = S_K$ is selected randomly
      from $\X^n$, independent from all else, and
    \item For all such questions $S_K$, signal $\bm Y_k(X_k)$ and
      model-consistent answer $\bm W_K(S_K)$ are conditionally independent
      given $\bm Z_k(X_k)$ for all $k=1,\dots,K$.
  \end{itemize}
\end{assume}

In practice, solving the fully adaptive problem is intractable, and
instead, one can use a knowledge gradient policy to approach this
problem. This is equivalent to solving a greedy version of the problem
where we are evaluated at every step. In other words, after observing
signal $\bm Y_k(X_k)$, we are posed with answering a randomly selected
evaluation question $\bm S_k$, with no concern about any future
evaluation. Every question in the sequence $\left(\bm
S_k:\,k=1,\dots,K\right)$ is selected i.i.d. and follows the
\ref{assm:misclassifynoise}. The knowledge gradient policy chooses
$X_k$ such that at every time epoch $k$ it solves
\begin{equation*}
  \mathcal{E}_k^{KG} = \inf_{X_k\in\X^m} \mathbb{E}^{KG}\left[ \inf_{\hat W_k\in\W}
    \mathbb{P}\left( \bm W_k(S_k) \neq \hat W_k \,\middle|\,
    \bmYhist{k},\,\bm S_k\right) \right].
\end{equation*}
Obviously, $\mathcal{E}_k^{KG} \geq \mathcal{E}_k^*$ for all $k$,
since knowledge gradient cannot perform strictly better than the fully
adaptive optimal policy. It would be beneficial to know how wide the
gap is, and this can be done by finding a lower bound on the optimal
misclassification error. Information theory provides a way to do this,
and in the next sections, we will show a lower bound in terms of the
entropy reduction of the underlying linear classifier, and that
posterior entropy reduction is necessary to achieve misclassification
error reduction.


\subsection{An Interactive Approach}

It would be helpful to have an analogue to
Theorem~\ref{thm:entropyidentities} so we can relate the posterior
Shannon entropy of the answer $\bm W(S)$ of evaluation question
$S$ to the answer $\bm Z_k(X_k)$ of initial question $X_k$. It turns
out that information content in observing signal $\bm Y_k(X_k)$ to
infer answer $\bm W_k$ is related to a concept in information theory
called \textit{interaction information}. In the context of this paper,
for a model consistent answer $\bm Z_k$, observed response $\bm Y_k$,
evaluation question $S$ and true answer $\bm W(S)$, Interaction
Information denotes the difference
\begin{align*}
    I_k(\bm W(S); \bm Y_k; \bm Z_k)
    &= I_k(\bm W(S);\bm Y_k\,|\,\bm Z_k) - I_k(\bm W(S);\bm Y_k) \\
    &= I_k(\bm Y_k;\bm Z_k\,|\,\bm W(S)) - I_k(\bm Y_k;\bm Z_k) \\
    &= I_k(\bm Z_k;\bm W(S)\,|\,\bm Y_k) - I_k(\bm Z_k;\bm W_k).
\end{align*}
Similarly, we define Conditional Interaction Information as
\begin{equation*}
    I_k(\bm W(\bm S);\bm Y_k;\bm Z_k\,|\,\bm S)
    = \E\left[ I_k(\bm W(\bm S);\bm Y_k;\bm Z_k\,|\,\bm S = S)\right].
\end{equation*}
Interaction information tells us the relationship between three random
variables in terms of the redundancy in information content. In
general, this quantity can be positive or negative. If the interaction
information between three random variables is negative, then one does
not learn as much from an observation when already knowing the outcome
of another. This is the more natural and relevant case in the context
of misclassification error.

In particular, the goal is to ask questions so that the observations
can provide the maximum amount of information on the answer to an
unknown evaluation question. Theorem~\ref{thm:interactionentropy}
decomposes this problem into an equivalent formulation using
interaction information, for which we seek to maximize the amount of
redundancy between the chosen questions $X_k$ and the unknown
evaluation question $\bm S$.

\begin{theorem}
  \label{thm:interactionentropy}
  Under the \ref{assm:noise} and \ref{assm:misclassifynoise}, we have
  \begin{equation}
    \label{eq:interactionentropy}
    I_k(\bm W(S);\bm Y_k(X_k)\,|\,\bm S) = I_k(\bm W(S);\bm Y_k(X_k);\bm Z_k(X_k)\,|\,\bm
    S) \leq I_k(\bm Y_k(X_k);\bm Z_k(X_k)).
  \end{equation}
\end{theorem}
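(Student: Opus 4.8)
The plan is to extract both the equality and the inequality from a single structural fact: under Assumptions~\ref{assm:noise} and~\ref{assm:misclassifynoise}, conditionally on $\bm S = S$ the triple $\bm W(S),\,\bm Z_k(X_k),\,\bm Y_k(X_k)$ forms a Markov chain $\bm W(S)\,\text{--}\,\bm Z_k(X_k)\,\text{--}\,\bm Y_k(X_k)$, i.e. $I_k(\bm W(S);\bm Y_k(X_k)\mid \bm Z_k(X_k),\bm S)=0$. Indeed $\bm W_K(\bm S)$ is a deterministic function of $\bm\theta$ and $\bm S$; by the second bullet of Assumption~\ref{assm:noise}, $\bm Y_k(X_k)$ is conditionally independent of $\bm\theta$ (and of $\Yhist{k}$) given $\bm Z_k(X_k)$; and by the first bullet of Assumption~\ref{assm:misclassifynoise}, $\bm S$ is independent of everything else. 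Composing these gives the asserted conditional independence, and it persists under the conditioned operator $\mathbb{P}_k$. Given this fact, I would obtain the equality from one symmetric expansion of the interaction information and the inequality from another.

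For the \emph{equality}, expand the conditional interaction information in the form that isolates the vanishing term: for each fixed $S$,
\[
  I_k(\bm W(S);\bm Y_k(X_k);\bm Z_k(X_k)\mid \bm S=S)
  = I_k(\bm W(S);\bm Y_k(X_k)\mid \bm S=S) - I_k(\bm W(S);\bm Y_k(X_k)\mid \bm Z_k(X_k),\bm S=S).
\]
The last term is zero by the Markov property above, so the two remaining quantities agree for every $S$; taking expectation over $\bm S$ and using the definition of conditional interaction information gives $I_k(\bm W(S);\bm Y_k(X_k)\mid\bm S) = I_k(\bm W(S);\bm Y_k(X_k);\bm Z_k(X_k)\mid\bm S)$.

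For the \emph{inequality}, expand the same conditional interaction information in the symmetric form that isolates $I_k(\bm Y_k(X_k);\bm Z_k(X_k)\mid\bm W(S),\bm S)$:
\[
  I_k(\bm W(S);\bm Y_k(X_k);\bm Z_k(X_k)\mid \bm S=S)
  = I_k(\bm Y_k(X_k);\bm Z_k(X_k)\mid \bm S=S) - I_k(\bm Y_k(X_k);\bm Z_k(X_k)\mid \bm W(S),\bm S=S).
\]
The subtracted term is a conditional mutual information, hence nonnegative, and since $\bm S$ is independent of $(\bm Y_k(X_k),\bm Z_k(X_k))$ the first term equals $I_k(\bm Y_k(X_k);\bm Z_k(X_k))$ for every $S$ (conditioning on an independent variable leaves mutual information unchanged). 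Thus the expression is at most $I_k(\bm Y_k(X_k);\bm Z_k(X_k))$ for each $S$, and averaging over $\bm S$ preserves the bound. Equivalently, one may argue directly from the Markov chain via the data-processing inequality: $I_k(\bm W(S);\bm Y_k(X_k)\mid\bm S=S) \le I_k(\bm Z_k(X_k);\bm Y_k(X_k)\mid\bm S=S) = I_k(\bm Z_k(X_k);\bm Y_k(X_k))$.

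The main obstacle, and the only place that needs genuine care, is the first step: checking that the conditional independence encoded in Assumption~\ref{assm:misclassifynoise} survives conditioning on the whole response history $\Yhist{k}$ (which is built into $\mathbb{P}_k$, hence into $I_k$), and that $\bm S$ is truly independent of the pair $(\bm Y_k(X_k),\bm Z_k(X_k))$ even though $X_k$ is itself history-dependent. Both follow from Assumptions~\ref{assm:noise} and~\ref{assm:misclassifynoise}, but the composition should be written out. A secondary, purely technical point is to note that since $\Z$, $\Y$, and $\W$ are finite, all the mutual informations involved are finite, so the additive identities among conditional mutual informations that underlie the two expansions of interaction information are legitimate.
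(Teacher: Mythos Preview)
Your argument is correct and is essentially the paper's own proof, reorganized around the Markov chain $\bm W(S)\text{--}\bm Z_k(X_k)\text{--}\bm Y_k(X_k)$: the paper reaches the same identity $I_k(\bm W;\bm Y_k\mid\bm S)=I_k(\bm Y_k;\bm Z_k\mid\bm S)-I_k(\bm Y_k;\bm Z_k\mid\bm W,\bm S)$ via entropy manipulations (using $H_k(\bm Y_k\mid\bm Z_k,\bm W,\bm S)=H_k(\bm Y_k\mid\bm Z_k)$, which is exactly your Markov condition), then invokes independence of $\bm S$ and nonnegativity of the conditional mutual information just as you do. One caveat: the sign convention you use for interaction information, $I(A;B;C)=I(A;B)-I(A;B\mid C)$, is the one that makes the theorem's equality true and is the one the paper's proof implicitly uses, but it is the \emph{opposite} of the convention the paper writes down in the display immediately preceding the theorem; you should flag that inconsistency rather than silently adopt the other sign.
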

\begin{proof}
  First, we use the fact that conditional mutual information is
  symmetric \citep[][p.~22]{cover1991} to get
  \begin{multline*}
    H_k\left(\bm Y_k(X_k)\,|\,\bm Z_k(X_k),\,\bm W(\bm S),\,\bm S\right) +
    H_k\left(\bm Z_k(X_k)\,|\,\bm W(\bm S),\,\bm S\right) \\
    = H_k\left(\bm Z_k(X_k)\,|\,\bm Y_k(X_k),\,\bm W(\bm S)\,,\bm S\right) +
    H_k\left(\bm Y_k(X_k)\,|\,\bm W(\bm S),\,\bm S\right),
  \end{multline*}
  and using the \ref{assm:misclassifynoise}, we see that the first
  term is equal to $H_k(\bm Y_k(X_k)\,|\,\bm Z_k(X_k))$, giving us
  \begin{multline*}
    H_k(\bm Y_k(X_k)\,|\,\bm W(\bm S),\,\bm S)
    = H_k(\bm Y_k(X_k)\,|\,\bm Z_k(X_k)) 
    + H_k(\bm Z_k(X_k)\,|\,\bm W(\bm S),\,\bm S) \\
    - H_k(\bm Z_k(X_k)\,|\,\bm Y_k(X_k),\,\bm W(\bm S),\,\bm S).
  \end{multline*}
  Subtracting both sides of the above equation from $H(\bm Y_k(X_k))$
  gives us
  \begin{equation}
    \label{eq:computeinteraction}
    I_k(\bm W(\bm S);\bm Y_k(X_k)\,|\,\bm S)
    = I_k(\bm Y_k(X_k);\bm Z_k(X_k)) - I_k(\bm Y_k(X_k);\bm
    Z_k(X_k)\,|\,\bm W(\bm S),\,\bm S)
  \end{equation}
  Now since $\bm S$ is independent of $\bm Y_k(X_k)$ and $\bm
  Z_k(X_k)$, we have $I(\bm Y_k(X_k);\bm Z_k(X_k)) = I(\bm
  Y_k(X_k);\bm Z_k(X_k)\,|\,\bm S)$, and the equality in
  \eqref{eq:interactionentropy} directly follows. The inequality is
  because the last term in \eqref{eq:computeinteraction} is
  non-negative, due to the properties of mutual information.
\end{proof}
As previously mentioned, interaction information does not have to be
non-negative. Here, the equality in \eqref{eq:interactionentropy}
implies that the interaction information is non-negative since
$I_k(\bm W(\bm S);\bm Y_k(X_k)\,|\,\bm S)$ is always
non-negative. This means that when we ask question $X_k$, observing
signal $\bm Y_k(X_k)$ yields less information when we also know the
true answer $\bm W(S)$ to another question $S$, an intuitive
result. We use Theorem~\ref{thm:interactionentropy} to relate $I_k(\bm
W_k(\bm S_k);\bm Y_k(X_k)\,|\,\bm S_k)$ to $I_k(\bm \theta;\bm
Y_k(X_k))$.


\subsection{Lower Bound on Misclassification Error}
\label{subsec:lower bnd}

We now would like to relate misclassification error to the entropy of
the posterior distribution of the linear classifier
$\bm\theta$. Theorem~\ref{thm:fanolowerbound} shows that regardless of
the estimator $\hat W_k$, one cannot reduce misclassification error
without bound unless the posterior entropy of $\bm\theta$ is reduced
as well. This is due to an important tool in information theory called
Fano's Inequality.
\begin{theorem}
  \label{thm:fanolowerbound}
  For any policy $\pi$, a lower bound for the misclassification error
  under that policy is given by
  \begin{equation*}
    \mathcal{E}_k^\pi
    \geq \frac{H(\bm W_k(\bm S_k) \,|\,\bm S_k) - I^\pi(\bm\theta;\bmYhist{k})  -  1}{\log_2 n}.
  \end{equation*}
\end{theorem}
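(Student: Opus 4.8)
The plan is to chain Fano's inequality with the information identities already in hand. \textbf{Step 1 (Fano).} For any realization of the history $\Yhist{k}$ and evaluation question $S_k$, the infimum over $\hat W_k$ in \eqref{eq:misclassify} is attained by the conditional MAP rule, and that infimum \emph{is} the MAP rule's conditional error probability. Hence Fano's inequality \citep{cover1991}, together with the facts that the binary entropy function is bounded by $1$ and $\log_2(n-1)\le\log_2 n$, gives
\[
  H\!\left(\bm W_k(\bm S_k)\,\middle|\,\bmYhist{k}=\Yhist{k},\,\bm S_k=S_k\right)
  \;\le\; 1 + \log_2(n)\cdot\inf_{\hat W_k\in\W}\mathbb{P}\!\left(\bm W_k(S_k)\neq\hat W_k\,\middle|\,\bmYhist{k}=\Yhist{k},\,\bm S_k=S_k\right).
\]
Taking $\mathbb{E}^\pi$ over $(\bmYhist{k},\bm S_k)$ and invoking the definition \eqref{eq:misclassify} of $\mathcal{E}_k^\pi$ yields $H^\pi(\bm W_k(\bm S_k)\mid\bmYhist{k},\bm S_k)\le 1+\mathcal{E}_k^\pi\log_2 n$, which rearranges to $\mathcal{E}_k^\pi\ge\bigl(H^\pi(\bm W_k(\bm S_k)\mid\bmYhist{k},\bm S_k)-1\bigr)/\log_2 n$.

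\textbf{Step 2 (rewrite the numerator).} By definition of conditional mutual information,
\[
  H^\pi(\bm W_k(\bm S_k)\mid\bmYhist{k},\bm S_k)
  = H(\bm W_k(\bm S_k)\mid\bm S_k) - I^\pi(\bm W_k(\bm S_k);\bmYhist{k}\mid\bm S_k),
\]
where $H(\bm W_k(\bm S_k)\mid\bm S_k)$ carries no dependence on $\pi$ because $\bm W_k(\bm S_k)$ is a deterministic function of $\bm\theta$ (whose prior is fixed) and of $\bm S_k$ (whose selection distribution is fixed). It therefore remains only to prove $I^\pi(\bm W_k(\bm S_k);\bmYhist{k}\mid\bm S_k)\le I^\pi(\bm\theta;\bmYhist{k})$.

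\textbf{Step 3 (data processing over the history).} Applying the information chain rule exactly as in Corollary~\ref{cor:kstepentropy},
\[
  I^\pi(\bm W_k(\bm S_k);\bmYhist{k}\mid\bm S_k)=\sum_\ell\mathbb{E}^\pi\!\left[I_\ell(\bm W_k(\bm S_k);\bm Y_\ell(X_\ell)\mid\bm S_k)\right],\qquad
  I^\pi(\bm\theta;\bmYhist{k})=\sum_\ell\mathbb{E}^\pi\!\left[I_\ell(\bm\theta;\bm Y_\ell(X_\ell))\right].
\]
For each $\ell$, Theorem~\ref{thm:interactionentropy} (with $\bm W(S)$, $\bm Y_k(X_k)$, $\bm Z_k(X_k)$ taken to be $\bm W_k(\bm S_k)$, $\bm Y_\ell(X_\ell)$, $\bm Z_\ell(X_\ell)$; its hypotheses are the \ref{assm:noise} and \ref{assm:misclassifynoise}) gives $I_\ell(\bm W_k(\bm S_k);\bm Y_\ell(X_\ell)\mid\bm S_k)\le I_\ell(\bm Y_\ell(X_\ell);\bm Z_\ell(X_\ell))=I_\ell(\bm\theta;\bm Y_\ell(X_\ell))$, the last equality being \eqref{eq:entropyidentity2}. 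Summing these per-step inequalities over $\ell$ and substituting back through Steps 2 and 1 produces the stated bound. Equivalently, the inequality $I^\pi(\bm W_k(\bm S_k);\bmYhist{k}\mid\bm S_k)\le I^\pi(\bm\theta;\bmYhist{k})$ follows in one line from the data-processing inequality for the Markov chain $\bm W_k(\bm S_k)-\bm\theta-\bmYhist{k}$ conditional on $\bm S_k$, using that $\bm W_k(\bm S_k)$ is a function of $(\bm\theta,\bm S_k)$ and that $\bm S_k$ is independent of $(\bm\theta,\bmYhist{k})$.

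The step I expect to be the main obstacle is Step 1: verifying that the quantity inside $\mathcal{E}_k^\pi$ is literally the minimal conditional error probability Fano controls, so that the only slack introduced is $h(\cdot)\le 1$ for the binary entropy term and $\log_2(n-1)\le\log_2 n$; everything after that is an assembly of Theorem~\ref{thm:entropyidentities}, Theorem~\ref{thm:interactionentropy}, and the chain rule.
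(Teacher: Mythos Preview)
Your proof is correct and follows essentially the same route as the paper: apply Fano's inequality conditional on $(\Yhist{k},S_k)$, average, then use the chain rule together with Theorem~\ref{thm:interactionentropy} and \eqref{eq:entropyidentity2} to bound $I^\pi(\bm W_k(\bm S_k);\bmYhist{k}\mid\bm S_k)$ by $I^\pi(\bm\theta;\bmYhist{k})$. Your closing remark that this last inequality is simply data processing for the Markov chain $\bm W_k(\bm S_k)\text{--}\bm\theta\text{--}\bmYhist{k}$ conditional on $\bm S_k$ is a cleaner one-line alternative that the paper does not mention.
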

\begin{proof}
  Suppose we have a fixed question $\bm S_k = S_k$, and let $\hat W_k$
  be any estimator of $\bm W_k(S_k)$ that is a function of history
  $\Yhist{k}$ and known assessment question $S_k$. By Fano's
  inequality, \citep[][p.~39]{cover1991}, we have
  \begin{align}
    \label{eq:fanos}
    \mathbb{P}_k(\bm W_k(S_k) \neq \hat W_k\,|\,\bm S_k = S_k) &\geq
    \frac{H_k(\bm W_k(S_k)
      \,|\,\bm S_k = S_k)-1}{\log_2 n}.\\
    \intertext{Taking an expectation over possible assessment
      questions and past history yields}
    \label{eq:fanosexp}
    \mathbb{E}^\pi\left[\mathbb{P}_k(\bm W_k(\bm S_k) \neq \hat W_k \,|\,\bm S_k)\right]
    &\geq \frac{H^\pi(\bm W_k(\bm S_k) \,|\,\bmYhist{k},\,\bm S_k)-1}{\log_2 n},
  \end{align}
  where the right side holds because of the definition of conditional
  entropy. Now we use the upper bound on $I_k(\bm W_k(\bm S_k);\bm
  Y_k(X_k) \,|\, \bm S_k)$ from Theorem~\ref{thm:interactionentropy}
  to show
  \begin{align*}
    H^\pi(\bm W_k(\bm S_k)\,|\,\bmYhist{k},\,\bm S_k)
    &= H( \bm W_k(\bm S_k)\,|\,\bm S_k ) - I^\pi(\bm W_k(\bm S_k);
    \bmYhist{k} \,|\, \bm S_k) \\
    &=  H( \bm W_k(\bm S_k)\,|\,\bm S_k ) -
    \E^\pi\left[\sum_{\ell=1}^k I_\ell(\bm W_\ell(\bm S_\ell);
      \bm Y_\ell(X_\ell))\right] \\
    &\leq H( \bm W_k(\bm S_k)\,|\,\bm S_k ) -
    \E^\pi\left[\sum_{\ell=1}^k I_\ell(\bm Z_\ell (X_\ell);
      \bm Y_\ell(X_\ell))\right] \\
    &=    H( \bm W_k(\bm S_k)\,|\,\bm S_k ) -
    \E^\pi\left[\sum_{\ell=1}^k I_\ell(\bm \theta ;
      \bm Y_\ell(X_\ell))\right] \\
    &= H( \bm W_k(\bm S_k)\,|\,\bm S_k ) - I^\pi(\bm \theta;\bmYhist{k}),
  \end{align*}
  where the penultimate equality is from
  Theorem~\ref{thm:entropyidentities}. Thus, we get
  \begin{equation}
    \mathbb{E}^\pi \left[\mathbb{P}_k(\bm W_k(\bm S_k) \neq \hat W\,|\, \bm Y_k(X_k),
      \bm S_k) \right]
    \geq \frac{H(\bm W_k(\bm S_k) \,|\,\bm S_k) - I^\pi(\bm\theta;\bmYhist{k})  -  1}{\log_2 n},
  \end{equation}
  and the result follows.
\end{proof}
The bound does not provide any insight if $H(\bm
W_k(\bm S_k)\,|\,\bmYhist{k}),\bm S_k) < 1$ since the lower bound
would be negative. This is most problematic when $n=2$, in which case,
the Shannon entropy of $\bm W_k$ is bounded above by one bit. However,
if the conditional entropy of $\bm W_k(S_k)$ after observing signal
$\bm Y_k(X_k)$ is still significantly large, the misclassification
error will not be reduced past a certain threshold.

There are some interesting conclusions that can be drawn from the
lower bound. First, $H(\bm W_k(\bm S_k) \,|\,\bm S_k)$ can be viewed
as a constant that describes the problem complexity, representing the
expected entropy of evaluation question $\bm S_k$. The lower bound is
a linear function with respect to the mutual information of linear
classifier $\bm \theta$ and the observation history $\bmYhist{k}$.

We can use this result to bound both the knowledge gradient policy and
the fully adaptive optimal policy from
below. Corollary~\ref{cor:policybounds} below leverages
Theorem~\ref{thm:fanolowerbound} to estimate the optimality gap of
knowledge gradient from the optimal policy.
\begin{corollary}
  \label{cor:policybounds}
  Under noise channel $f$ with channel capacity $C(f)$, the optimal
  misclassification error under the optimal policy after asking $k$
  comparative questions is bounded by
  \begin{equation*}
    \frac{H(\bm W_k(\bm S_k) \,|\,\bm S_k) - C(f)\cdot k
      -  1}{\log_2 n} \leq \mathcal{E}_k^* \leq \mathcal{E}_k^{KG}.
  \end{equation*}
\end{corollary}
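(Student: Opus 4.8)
The plan is to combine the Fano-type lower bound of Theorem~\ref{thm:fanolowerbound} with the linear entropy-reduction bound of Corollary~\ref{cor:kstepentropy}. The right-hand inequality $\mathcal{E}_k^* \le \mathcal{E}_k^{KG}$ is immediate: $\mathcal{E}_k^{KG}$ is the misclassification error of one particular (greedy) policy, while $\mathcal{E}_k^* = \inf_\pi \mathcal{E}_k^\pi$ is the infimum of $\mathcal{E}_k^\pi$ over all $\pi\in\Pi$, so the fully adaptive optimal policy can do no worse than knowledge gradient.

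For the left-hand inequality, fix an arbitrary policy $\pi\in\Pi$. Theorem~\ref{thm:fanolowerbound} gives
\begin{equation*}
  \mathcal{E}_k^\pi \ge \frac{H(\bm W_k(\bm S_k)\,|\,\bm S_k) - I^\pi(\bm\theta;\bmYhist{k}) - 1}{\log_2 n}.
\end{equation*}
By the information chain rule together with Theorem~\ref{thm:entropyPursuit}, exactly as in the proof of Corollary~\ref{cor:kstepentropy},
\begin{equation*}
  I^\pi(\bm\theta;\bmYhist{k}) = \sum_{\ell=1}^k \E^\pi\!\left[ I_\ell(\bm\theta;\bm Y_\ell(X_\ell)) \right] = \sum_{\ell=1}^k \E^\pi\!\left[ \varphi(u_\ell(X_\ell)\,;\,f) \right] \le k\cdot C(f),
\end{equation*}
since $\varphi(u\,;\,f) \le C(f)$ for every $u\in\Delta^m$ by definition of the channel capacity. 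Because the map $x \mapsto (H(\bm W_k(\bm S_k)\,|\,\bm S_k) - x - 1)/\log_2 n$ is decreasing, replacing $I^\pi(\bm\theta;\bmYhist{k})$ by the larger quantity $k\cdot C(f)$ only decreases the right side, so
\begin{equation*}
  \mathcal{E}_k^\pi \ge \frac{H(\bm W_k(\bm S_k)\,|\,\bm S_k) - C(f)\cdot k - 1}{\log_2 n}.
\end{equation*}
The right side does not depend on $\pi$, so taking the infimum over $\pi\in\Pi$ on the left yields $\mathcal{E}_k^* \ge (H(\bm W_k(\bm S_k)\,|\,\bm S_k) - C(f)\cdot k - 1)/\log_2 n$, which together with the trivial upper bound completes the chain.

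There is no genuine obstacle here; the statement is a direct corollary. The only points that deserve a moment's care are: (i) the Fano bound of Theorem~\ref{thm:fanolowerbound} holds for \emph{every} policy with the same constant $H(\bm W_k(\bm S_k)\,|\,\bm S_k)$ — this quantity depends only on the (policy-independent) distribution of the evaluation question, by the first clause of the \ref{assm:misclassifynoise} — so the infimum over $\pi$ may legitimately be passed through the bound; and (ii) the identity relating $I^\pi(\bm\theta;\bmYhist{k})$ to $\sum_\ell \E^\pi[\varphi(u_\ell(X_\ell);f)]$ and its upper bound $k\cdot C(f)$ is exactly what was established inside the proof of Corollary~\ref{cor:kstepentropy}, so it need not be re-derived.
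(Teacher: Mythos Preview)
Your proof is correct and follows exactly the route the paper intends: the corollary is stated in the paper without proof precisely because it is an immediate consequence of Theorem~\ref{thm:fanolowerbound} combined with the bound $I^\pi(\bm\theta;\bmYhist{k}) \le k\cdot C(f)$ from Corollary~\ref{cor:kstepentropy}, together with the trivial observation that $\mathcal{E}_k^* \le \mathcal{E}_k^{KG}$. Your two parenthetical remarks at the end are well taken and worth recording.
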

Of course, there is a fairly significant gap in the lower bound, since
the misclassification errors are non-negative, and yet the lower bound
is linear. The gap comes from the second inequality in
\eqref{eq:interactionentropy}, and this upper bound essentially throws
out the redundant information about possible evaluation questions
learned by previous user responses. Nonetheless, it tells us that
posterior entropy reduction is necessary for misclassification error
reduction.

\section{Computational Results}
\label{sec:computation}

In the following subsections, we present computational results from
simulated responses using vectorizations of real
alternatives. Section~\ref{subsec:sim method} discusses our approach and
methodology for the numerical experiments, and Section~\ref{subsec:sim
  results} gives the results of the computational studies and provides
insights regarding the performance of the entropy pursuit and
knowledge gradient policies.

\subsection{Methodology}
\label{subsec:sim method}  

As an alternative space, we use the 13,108 academic papers on
\href{www.arxiv.org}{arXiv.org} from the condensed matter archive
written in 2014. The information retrieval literature is rife with
different methods on how to represent a document as a vector,
including bag of words, term frequency inverse document frequency
\citep{salton1986introduction}, and word2vec
\citep{goldberg2014word2vec}, along with many others \citep[for an
  overview of such methods, see][]{raghavan2008introduction}. In
practice, the method for vectorizing the alternatives is critical; if
the vectors do not sufficiently represent the alternatives, any
recommendation system or preference elicitation algorithm will have
trouble. For the numerical experiments, we elected to use a vector
representation derived from Latent Dirichlet Allocation (LDA) as
described by \citet*{blei2003latent}. The resulting feature vectors
are low-dimensional and dense. Since we cannot compute the posterior
distribution analytically, we resort to sampling instead, and the
low-dimensional LDA vectors allow for more efficient sampling.

With any method that utilizes Bayesian inference, it is important to
have enough structure that allows for an efficient sampling scheme
from the resulting posterior distributions. The benefit of having the
simple update of up-weighting and down-weighting polytopes is that the
sampling scheme becomes quite easy. We use a hit-and-run sampler as
described and analyzed by \citet*{lovasz2003hit} that chooses a
direction uniformly from the unit sphere, then samples from the
one-dimensional conditional distribution of the next point lying on
that line. Now, re-weighting polytopes turns into re-weighting line
segments. If it is easy sample points from the conditional
distribution of lying on a given line, hit-and-run is an efficient way
of sampling. We use a multivariate normally distributed prior because
it allows for both computational tractability for sampling from this
conditional distribution as well as a natural representation of prior
information.

To select the hyperparameters for the prior, we sample academic papers
and fit a multivariate normal distribution to this sample. Assuming
users' linear classifiers have the same form and interpretation as an
vector representation is not reasonable in general. However, in the
case of academic papers, authors are also readers, and so the content
in which the users are interested is closely related to the content
they produce. Therefore, in this situation, it is reasonable to assume
that a user's parameterization of preferences lives in the same space
as the parameterization of the feature set. This is not necessarily
the case for other types of alternatives, and even if it were, using
feature vectors to model preference vectors may not be the best
choice. That being said, there are many ways to initialize the
prior. If one has a history of past user interaction with
alternatives, one could estimate the linear preference vector for each
user using an expectation maximization scheme, and fit a mixed normal
prior to the empirical distribution of estimated linear classifiers,
as done by \citet*{chen2016bayesian}. Since the focus here is to
compare the performance of the two algorithms of interest, our choice
for initializing the prior is sufficient.

\subsection{Cross-Metric Policy Comparison}
\label{subsec:sim results}

We first compare the entropy pursuit and knowledge gradient policies
while varying the number of presented alternatives. Due to the large
set of alternatives, it is computationally intractable to choose
questions that optimally follow either policy, so alternatives are
subsampled from $\X$ and we approximate both policies using the
alternatives from the subsample. If $N$ alternatives are subsampled,
then the approximate entropy pursuit policy requires exhaustively
optimizing over combinations of alternatives (permutations if the noise
channel not symmetric), and hence will require maximizing over
$\binom{N}{m}$ subsets. On the other hand, the knowledge gradient
policy requires comparing $\binom{N}{m}$ informative questions $X$
with $\binom{N}{n}$ assessment questions $S$, and thus requires
estimating $\binom{N}{m} \binom{N}{n}$ quantities. Already, this
implies that if the computational budget per question is fixed for
both algorithms, one can afford a polynomially larger subsample for
entropy pursuit than for knowledge gradient. For example, in the case
where $m=n=2$, a computational budget that allows a subsample of
$N=15$ alternatives for the knowledge gradient policy would allow the
entropy pursuit policy a subsample size of $N' = 149$. However, rather
than fixing a computational budget for both policies at each step, we
allow both policies the same number of subsamples, setting $N=15$ for
both policies and all sets of parameters. We do this to allow for a
more straightforward comparison of the two policies, although further
computational studies should study their performance under a fixed
computational budget. Lastly, the numerical study in this paper fixes
$n=2$. We make this decision because any larger values of $n$ will
make the computations prohibitively expensive, and it is not clear
that larger values of $n$ will provide any additional benefit.

\begin{figure}[t]
  \centering
  \includegraphics[width=\textwidth]{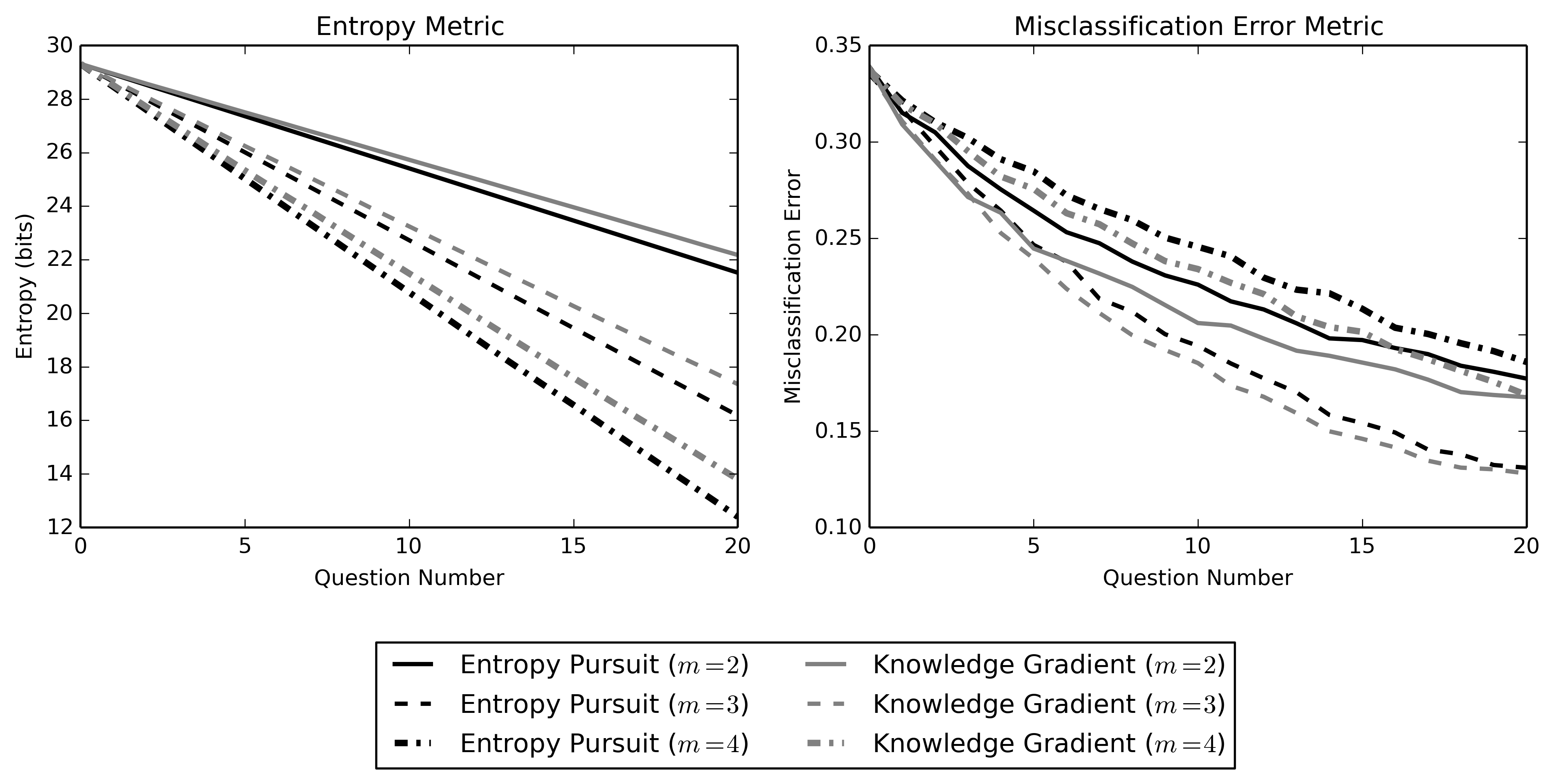}
  \caption{Comparison of average performance of the entropy pursuit
    and knowledge gradient policies under a symmetric noise channel
    ($\alpha=0.7$), simulated and averaged with 100 sample
    paths. Estimates are accurate to $\pm 0.007$ for
    misclassification error and $\pm 0.06$ bits for entropy.}
  \label{fig:envskg}
\end{figure}

Figure~\ref{fig:envskg} compares the entropy pursuit and knowledge
gradient policies by varying $m$ and fixing other parameters to
reflect a low-noise, low prior information scenario. As expected, each
algorithm performs better on their respective metrics for a fixed
number of provided alternatives $m$. However, a more surprising
conclusion is the similarity in performance of the two algorithms for
any fixed $m$ for \textit{both metrics}. This suggests that the price
to pay for switching from the knowledge gradient policy to the entropy
pursuit policy is small compared to the gain in computational
efficiency. In fact, if the computational budget for each question
were fixed, one would be able to subsample many more alternatives to
compute the entropy pursuit policy compared to the knowledge gradient
policy, and it is very likely the former would out-perform the latter
in this setting. To see if this occurrence takes place in other
scenarios, such as those with higher noise and a more informative
prior, one can consult Figure~\ref{fig:situational}. Again, for all
the different parameter settings, both policies perform similarly.

\begin{figure}[t]
  \centering
  \includegraphics[width=\textwidth]{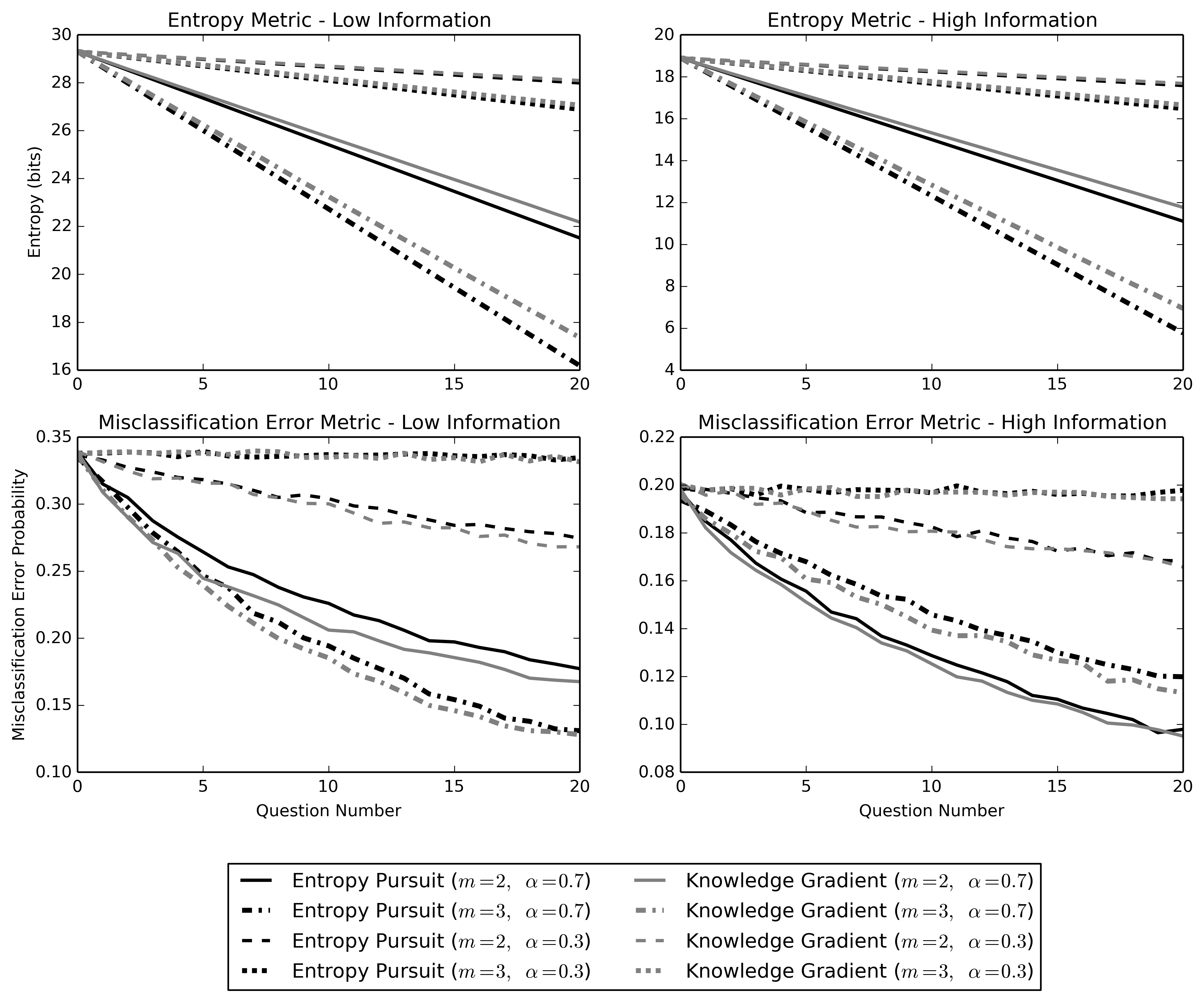}
  \caption{Comparison of the entropy pursuit and knowledge gradient
    policies under a symmetric noise channel for various levels of
    noise and prior information, simulated and averaged with 100
    sample paths. Estimates are accurate to $\pm 0.007$ for
    misclassification error and $\pm 0.06$ bits for entropy.}
  \label{fig:situational}
\end{figure}

Another interesting aspect of the computational results are the
effects of the parameters on the performance of the two
policies. Differential entropy predictably decreases faster when more
alternatives are presented to the user. In the case of a symmetric
noise channel, increasing $m$ only increases the channel capacity for
a fixed noise level $\alpha$. From the perspective of minimizing
posterior entropy, this makes sense because offering more alternatives
at each time epoch should theoretically allow one to refine the
posterior distribution faster. However, in reality, the noise channel
most likely varies with the number of offered alternatives $m$, where
the quality of the noise channel degrades as $m$ grows. In the most
extreme example, offering too many alternatives to the user will
result in a phenomenon called ``decision paralysis,'' where the user's
responses will not contain useful information about her
preferences. In this case, the model is not capturing the added
uncertainty, and focusing on posterior entropy as a performance metric
may be misleading.

In contrast, the knowledge gradient policy captures this intuition,
since pairwise comparisons decrease misclassification error faster
than three-way comparisons in the cases of high noise or a highly
informative prior. In fact, three-way comparisons only prevail in a
low-noise, low prior information scenario, which is fairly
optimistic. Both policies under three-way comparisons were aggressive,
and in the high-noise case, they fail to learn anything at all about
the user's preferences. In practice, it will be necessary to estimate
parameters for the noise channel in order to choose the correct value
of $m$. For now, it suffices to say that pairwise comparisons are
robust and reliable.

\section{Conclusion}
\label{sec:conclusion}

In this paper, we analyze the problem of eliciting a given user's
preferences by adaptively querying the user with choice-based
questions. We formulate this problem in a sequential active learning
setting, where a user's preferences are governed by an unknown linear
classifier, and the observed responses are perturbed by noise.  We
assume the underlying observation model where noise does not depend on
the underlying preferences. Under this regime, we show that the
differential entropy of the posterior distribution of this linear
classifier can be reduced linearly with respect to the number of
questions posed. Further, there exists an optimal predictive
distribution that allows this optimal linear rate to be attained.  We
provide sensitivity results that show the entropy reduction is close
to maximal when the actual predictive distribution of a given question
is close to optimal in $L_2$ distance.

On the problem of appropriately choosing the alternatives: when the
set of alternatives has non-empty interior, we provide a construction
to find a question that achieves the linear lower bound to a constant
multiplicative factor, and exactly for predictive distributions when
$\max\{ u_* \} = 1/2$ for pairwise comparisons or $\max\{u_*\} < 1/2$
for multi-way comparisons. When the set of alternatives is large but
finite, we have demonstrated through simulation experiments that one
can find questions that consistently yield a linear decrease in
differential entropy, and this rate is reasonably close to optimal.

In addition to focusing on differential entropy, we consider
misclassification error as an alternative metric that more intuitively
captures the knowledge one has for a user's preferences. Using Fano's
inequality, a classic result in the field of information theory, we
show the performance of the optimal policy with respect to this metric
is bounded below by a linear function in posterior entropy, suggesting
a relationship between entropy-based and misclassification error-based
policies. Our computational results largely confirm this, as the
entropy pursuit policy and the knowledge gradient policy perform
similarly in a variety of scenarios. For this reason, and the fact
that the knowledge gradient requires a significantly larger
computational budget, entropy pursuit is preferred for adaptive
choice-based active preference learning.

Although the paper assumes that the number of alternatives $m$ is
constant with respect to time, this can be relaxed with a word of
caution. From the perspective of entropy, it is always beneficial to
increase $m$, which can be misleading. Thus, if $m$ is allowed to vary
with time, one should not use entropy pursuit to choose $m$, and
should use another method to select the number of alternatives to
present to the user. This may be done by fixing a static sequence
$m_k$ in advance, or the parameter could be adjusted adaptively by
another policy in tandem with entropy pursuit. Both approaches would
most likely require extensive precomputation, since the geometry of
the space of alternatives would heavily affect any policy governing
$m$. Similar is the case of when a suitable prior for the user is not
known. In practice, this would also dictate the need for a
preprocessing step, perhaps fitting a Gaussian mixture to a population
of estimated linear classifiers
\citep*[see]{chen2016bayesian}. Regardless, this paper motivates the
use of entropy pursuit in adaptive choice-based preference
elicitation, as well as the study of its effectiveness using
historical user responses and experimentation.

\acks{Peter Frazier was partially supported by NSF CAREER
  CMMI-1254298, NSF CMMI-1536895, NSF IIS-1247696, AFOSR
  FA9550-12-1-0200, AFOSR FA9550-15-1-0038, AFOSR FA9550-16-1-0046,
  and DMR-1120296. \\

  Shane Henderson was supported in part by NSF CMMI-1537394.}

\newpage
\bibliography{citations}

\end{document}